%% file: neurips_2026.tex
\documentclass{article}

\usepackage[preprint]{neurips_2026}

\usepackage[utf8]{inputenc} 
\usepackage[T1]{fontenc}    
\usepackage{hyperref}       
\usepackage{url}            
\usepackage{booktabs}       
\usepackage{amsfonts}       
\usepackage{nicefrac}       
\usepackage{microtype}      
\usepackage[table]{xcolor}  
\usepackage{amsmath}
\usepackage{amsthm}
\usepackage{wrapfig,lipsum,booktabs}
\usepackage{multirow}
\usepackage{subcaption}
\usepackage{algorithm}
\usepackage{algpseudocode}

\usepackage{graphicx}
\usepackage{booktabs} 

\usepackage{star}
\input{math_commands.tex}

\newcommand{\m}[1]{\mathbf{#1}}

\newtheorem{proper}{Criterion}
\usepackage{hyperref}
\usepackage{amsmath}
\usepackage{amssymb}
\usepackage{mathtools}
\usepackage{amsthm}
\usepackage[normalem]{ulem}

\usepackage{url}
\usepackage{wrapfig,lipsum,booktabs}
\usepackage{multirow}
\usepackage{subcaption}
\usepackage{fontawesome5}

\newcommand{\unif}{{\textnormal{\small Unif}}}

\renewcommand{\hat}{\widehat}

\newcommand{\tr}{\mathop{\mathrm{tr}}}

\renewcommand{\triangleq}{:=}

\theoremstyle{plain}
\theoremstyle{definition}

\usepackage{tcolorbox}
\definecolor{linen}{RGB}{250,240,230}

\definecolor{mydarkblue}{rgb}{0,0.08,0.45}

\hypersetup{ %
    pdftitle={},
    pdfauthor={},
    pdfsubject={},
    pdfkeywords={},
    pdfborder=0 0 0,
    pdfpagemode=UseNone,
    colorlinks=true,
    linkcolor=mydarkblue,
    citecolor=mydarkblue,
    filecolor=mydarkblue,
    urlcolor=mydarkblue,
    pdfview=FitH
}

\usepackage{tcolorbox}
\definecolor{linen}{RGB}{250,240,230}
\usepackage{enumitem}

\title{Distributional Matching for Vector Quantization: A Unified Theoretical and Empirical Framework}

\vspace{-5pt}
\author{
  \vspace{-30pt}\\
  \textbf{Xianghong Fang$^{1}$$^{*}$  \quad Litao Guo$^{2}$ \quad Hengchao Chen$^{1}$ \quad Yuxuan Zhang$^1$ \quad XiaofanXia$^{1}$} \\[2pt]
  \textbf{\quad Dingjie Song$^4$  \quad Yexin Liu$^{2}$  \quad Hao Wang$^5$  \quad Harry Yang$^{2}$ \quad Qiang Sun$^{1}$$^{*}$ \quad Yuan Yuan$^{3}$$^{*}$} \vspace{5pt} \\
  $^1$University of Toronto ~\quad $^2$The Hong Kong University of Science and Technology
  ~\quad $^3$Boston College \\[2pt]
  $^4$Lehigh University 
  $^5$Southern University of Science and Technology \\[4pt]
  {\upshape
  \vspace*{15pt}
    \href{https://vq-research.github.io/Wasserstein-VQ/}{\faGlobe\enspace Website}
    \quad
    \href{https://github.com/VQ-Research/Wasserstein-VQ}{\faGithub\enspace Code \& Models}
  }}
\vspace{-4pt}

\usepackage{array}

\begin{document}

\newcolumntype{x}[1]{>{\centering\arraybackslash}p{#1pt}}
\newcolumntype{y}[1]{>{\raggedright\arraybackslash}p{#1pt}}
\newcolumntype{z}[1]{>{\raggedleft\arraybackslash}p{#1pt}}

\maketitle

\begingroup
\renewcommand\thefootnote{}
\footnotetext{
$^{*}$ Corresponding authors. Email: xianghong.fang@mail.utoronto.ca, yuanyua@bc.edu.
}
\endgroup

\begin{abstract}
The effectiveness of modern visual representation learning and autoregressive models critically depends on vector quantization (VQ), which discretizes continuous feature representations using a learnable codebook. Despite its widespread use, existing VQ methods often suffer from training instability and codebook collapse, arising from gradient mismatch induced by the straight-through estimator and the under-utilization of code vectors. In this work, we show that both issues can be traced to a fundamental mismatch between the distributions of feature vectors and code vectors, leading to inefficient representation and information loss. Building on this observation, we propose a distributional matching framework for vector quantization. We introduce principled criteria for desirable VQ behavior and demonstrate through theoretical analysis and empirical evaluation that aligning feature and code vector distributions provides a unifying mechanism for mitigating training instability and codebook collapse. We instantiate this framework using a Wasserstein-based objective with an efficient closed-form under a mild Gaussian approximation, and further show that a nonparametric alternative based on maximum mean discrepancy yields comparable performance. Extensive experiments on visual tokenization benchmarks support the effectiveness and robustness of the proposed approach. 
\end{abstract}

\input{sections/introduction.tex}

\input{sections/vector_quantization.tex}

\input{sections/method.tex}
\input{sections/atomic_setting.tex}
\input{sections/experiment.tex}
\input{sections/conclusion.tex}

\newpage
\bibliographystyle{plain}
\bibliography{neurips_2026}

\newpage
\appendix
\input{sections/appendix}

\end{document}

%% file: math_commands.tex
\usepackage{amsmath,amsfonts,bm}

\def\eqref#1{equation~\ref{#1}}

\def\1{\bm{1}}

\DeclareMathAlphabet{\mathsfit}{\encodingdefault}{\sfdefault}{m}{sl}
\SetMathAlphabet{\mathsfit}{bold}{\encodingdefault}{\sfdefault}{bx}{n}

\newcommand{\KL}{D_{\mathrm{KL}}}

\DeclareMathOperator*{\argmin}{arg\,min}

%% file: sections/introduction.tex
\section{Introduction}\label{sec:introduction}

Vector quantization (VQ)~\cite{Oord2017NeuralDR} is a fundamental building block in a wide range of modern representation learning and visual tokenization frameworks, including autoregressive and hybrid generative models~\cite{Razavi2019GeneratingDH,Esser2020TamingTF,Chang2022MaskGITMG,Lee2022AutoregressiveIG,Gu2021VectorQD,Tian2024VisualAM,Li2024XQGANAO}. By discretizing continuous feature representations using a learnable codebook, VQ enables compact and structured latent representations that are well suited for downstream modeling. Despite its broad adoption, VQ remains notoriously difficult to optimize in practice, often exhibiting unstable training dynamics and severe codebook collapse.

The first challenge stems from the non-differentiability of the quantization operation, which prevents gradients from being directly propagated from quantized features to their continuous counterparts. To address this issue, prior work introduces the straight-through estimator (STE)~\cite{bengio2013estimating,Oord2017NeuralDR}, which approximates gradients by copying them from the quantized features to encoder outputs. However, the effectiveness of this approximation critically depends on the magnitude of the quantization error. When the discrepancy between continuous features and their assigned code vectors becomes large, the resulting gradient mismatch can lead to unstable optimization and degraded training behavior~\cite{Lee2022AutoregressiveIG}.

A second, closely related challenge is codebook collapse, where only a small subset of code vectors receives assignments while the majority remain unused. From a geometric perspective, this corresponds to a degenerate Voronoi partition\footnote{Appendix.~\ref{appendix:understanding codebook collapse by Voronoi Partition} discusses codebook collapse via Voronoi partitioning.} in which most cells are never activated~\cite{Zheng2023OnlineCC}. Although extensive research has sought to alleviate this problem, low code vector utilization often remains in practice, particularly with large codebooks~\cite{Dhariwal2020JukeboxAG,Takida2022SQVAEVB,Yu2021VectorquantizedIM,Lee2022AutoregressiveIG,Zheng2023OnlineCC}. This limitation is further exacerbated as increasing the codebook size expands the number of Voronoi cells, making it substantially harder to ensure that all cells are sufficiently populated.

\begin{wrapfigure}[12]{r}{0.54\textwidth}
\small
\vspace{-6mm}
\begin{center}
\includegraphics[width=0.52\textwidth]{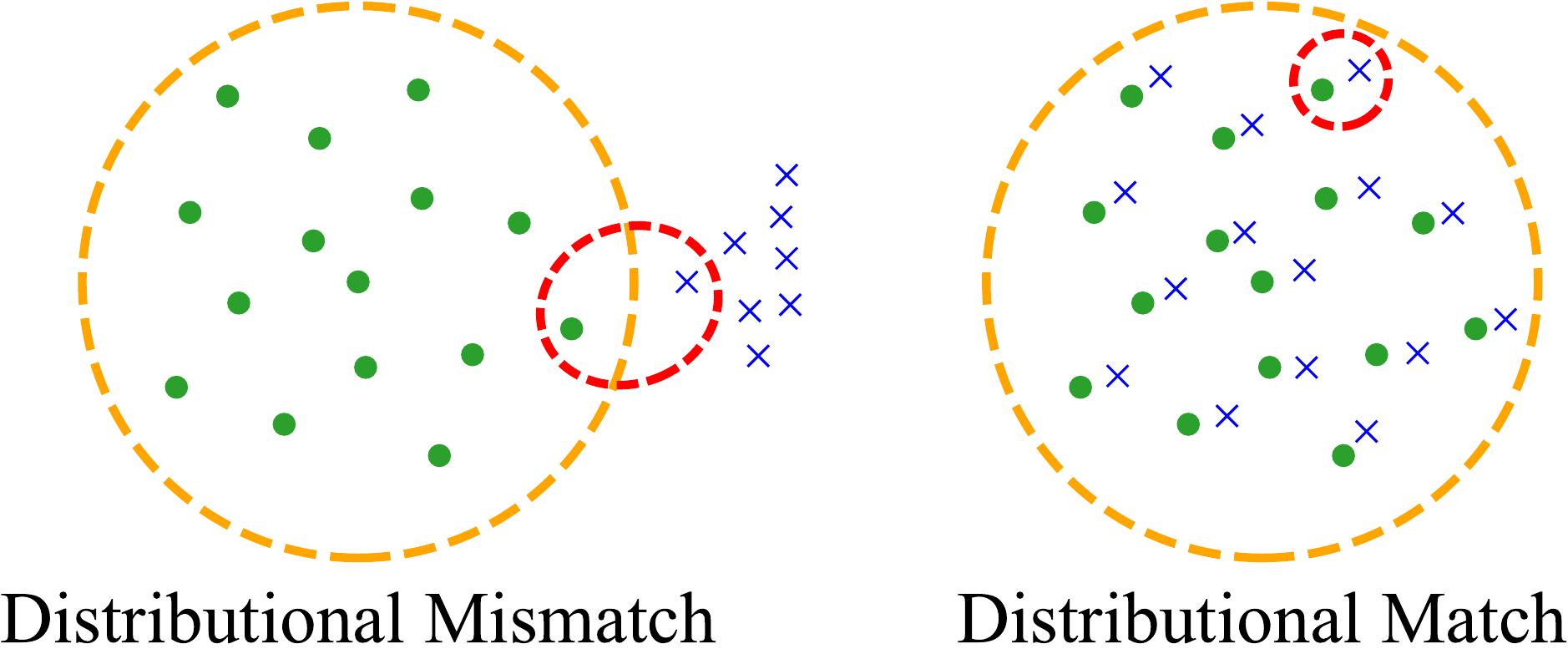}
\vspace{-2mm}
\caption{\small{The symbols $\cdot$ and $\times$ represent the feature and code vectors, respectively. The left figure illustrates the distributional mismatch between the feature and code vectors, while the right figure visualizes their distributional match.}}
\label{fig:distributional match}
\end{center}
\end{wrapfigure} 
In this work, we examine these challenges from a distributional perspective. Rather than treating training instability and codebook collapse as separate phenomena, we observe that both issues are closely tied to a fundamental mismatch between the distributions of feature vectors produced by the encoder and the distributions of the learnable code vectors. Figure~\ref{fig:distributional match} illustrates this intuition using two representative scenarios. When the two distributions are poorly aligned, feature vectors concentrate around a small subset of code vectors, resulting in large quantization errors and low codebook utilization. In contrast, when the distributions are well matched, quantization errors are reduced and codebook utilization approaches its maximum.

Building on this observation, we introduce three principled criteria that characterize the desirable behavior in vector quantization. Guided by this criterion triple, we show through both theoretical analysis and empirical evaluation that distributional alignment between feature vectors and code vectors provides a unifying mechanism for mitigating training instability and codebook collapse. To operationalize this idea, we adopt a distribution matching objective based on the quadratic Wasserstein distance. Under a mild Gaussian approximation, this objective admits a closed-form expression that can be efficiently optimized during training. Importantly, we show that even when this approximation is relaxed, a nonparametric alternative based on maximum mean discrepancy (MMD)~\cite{Gretton2012AKT, Sriperumbudur2009HilbertSE,Fang2025VQTransplant} yields comparable performance, suggesting that distributional matching effectively captures the essential structure required for effective vector quantization. These insights translate into consistent improvements in reconstruction fidelity across visual tokenization benchmarks.

%% file: sections/vector_quantization.tex
\section{A Distribution Matching Perspective on Vector Quantization}\label{sec:understanding}

This section introduces a novel distributional perspective for understanding vector quantization. By defining three principled criteria for VQ evaluation, we provide empirical and theoretical evidence that aligning feature and code vector distributions  yields a near-optimal VQ solution.

\subsection{An Overview of Vector Quantization}\label{sec:overview vq}
As a core component of visual tokenizers~\cite{Oord2017NeuralDR,Lee2022AutoregressiveIG,Tian2024VisualAM}, VQ acts as a compressor that discretizes continuous latent features into discrete visual tokens by mapping them to the nearest code vectors within a learnable codebook. 

\begin{wrapfigure}[9]{r}{0.50\textwidth}
\small
\vspace{-8mm}
\begin{center}
\includegraphics[width=0.48\textwidth]{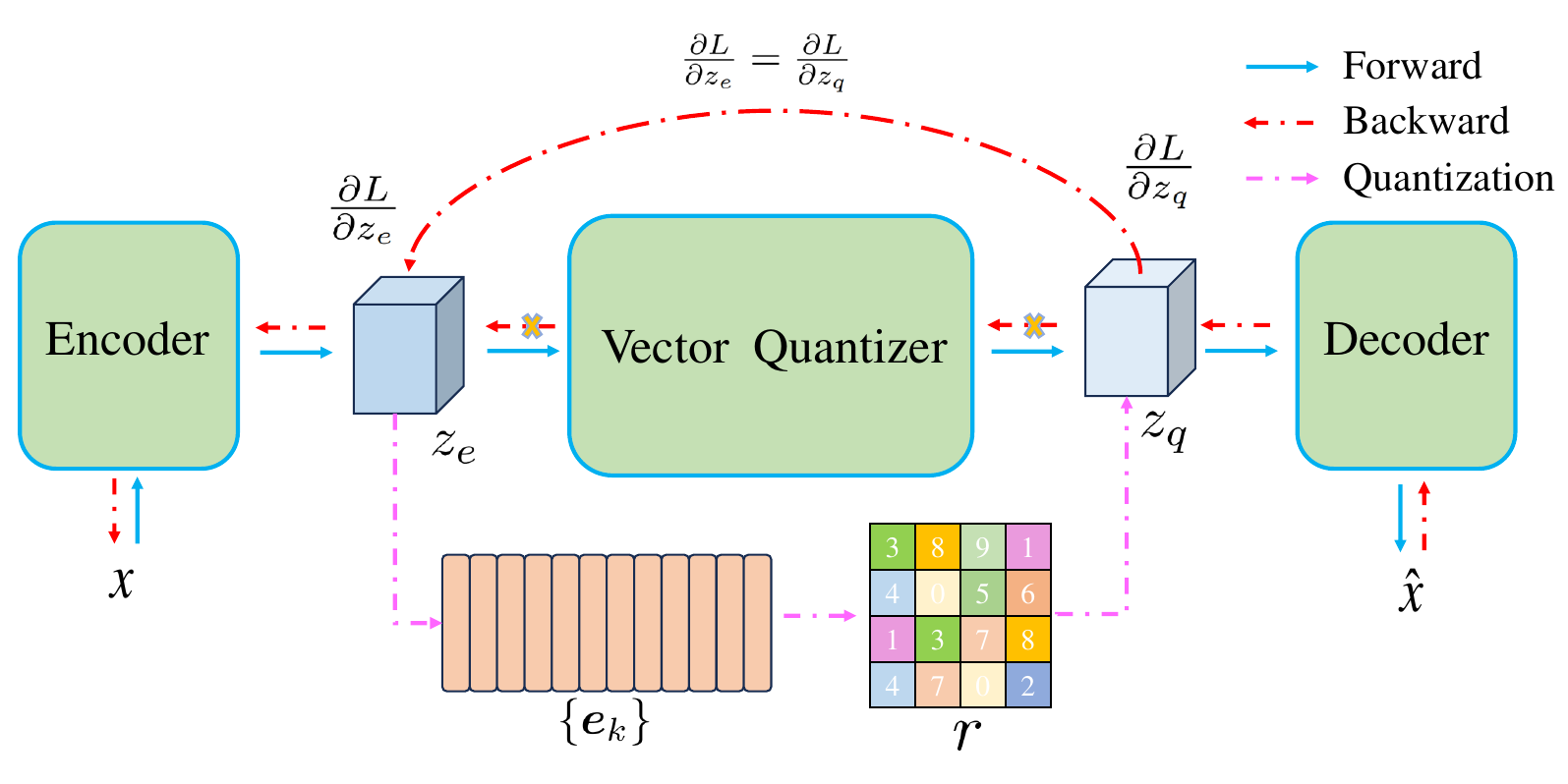}
\vspace{-4mm}
\caption{\small{The illustration of VQ.}}
\label{fig:vq model}
\end{center}
\end{wrapfigure} 
Figure~\ref{fig:vq model} illustrates the classic VQ process~\cite{Oord2017NeuralDR}, which consists of an encoder $E(\cdot)$, a decoder $D(\cdot)$, and an updatable codebook  $\{\m e_k\}_{k=1}^{K} \in \mathbb{R}^d$ containing a finite set of
code vectors. Here, $K$ denotes the codebook size and $d$ the code vector dimension. Given an image $\bm{x}\in \mathbb{R}^{H\times W \times 3}$, the goal is to derive a spatial collection of codeword IDs $r \in \mathbb{N}^{h\times w}$ as image tokens. This is achieved by encoding the image to obtain $\bm{z}_e = E(\bm{x}) \in \mathbb{R}^{h\times w \times d}$, followed by a spatial quantizer $\mathcal{Q}(\cdot)$ that maps each spatial feature $\bm{z}_{e}^{ij}$ to its nearest code vector $\bm{e}_k$:
\begin{equation}
r^{ij} = \argmin_{k} \|\bm{z}_{e}^{ij} - \bm{e}_k\|_2^2.
\end{equation}
The resulting tokens retrieve the corresponding codebook entries $ \bm{z}_q^{ij} = \mathcal{Q}(\bm{z}_{e}^{ij}) = \bm{e}_{r^{ij}}$, which are then passed through the decoder to reconstruct the image as $\bm{\hat{x}} = D(\bm{z}_q)$. Despite its widespread adoption in visual tokenization, representation learning, and high-fidelity image synthesis~\cite{Esser2020TamingTF}, VQ still faces two key challenges: training instability and codebook collapse.

\paragraph{Training Instability}
This issue arises because during backpropagation, the gradient of $\bm{z}_q$ cannot flow directly to $\bm{z}_e$ due to the non-differentiable function $\mathcal{Q}$. To optimize the encoder's parameters, VQ-VAE~\cite{Oord2017NeuralDR} employs the straight-through estimator~(STE)~\cite{Bengio2013EstimatingOP}, which copies gradients from $\bm{z}_q$ to $\bm{z}_e$. However, this approach carries significant risks, especially when $\bm{z}_q$ and $\bm{z}_e$ are far apart. In such cases, the gradient gap can grow substantially, destabilizing training\footnote{Appendix~\ref{appendix:gradient gap quantization error} details the gradient-quantization relationship.}. In this work, we tackle the training instability challenge from a distributional viewpoint. This perspective highlights that training instability is not merely an implementation artifact, but a consequence of systematic mismatch between feature and code distributions.

\paragraph{Codebook Collapse} 
Codebook collapse occurs when only a small subset of code vectors receives gradients, while most remain unrepresentative and unupdated~\cite{Dhariwal2020JukeboxAG,Yu2021VectorquantizedIM,Lee2022AutoregressiveIG,Zheng2023OnlineCC}. Researchers have proposed solutions such as improved codebook initialization~\cite{Zhu2024ScalingTC}, reinitialization strategies~\cite{Dhariwal2020JukeboxAG,Williams2020HierarchicalQA}, and classical clustering algorithms like $k$-means~\cite{Bradley1998RefiningIP} and $k$-means++\cite{Arthur2007kmeansTA} for codebook optimization~\cite{Razavi2019GeneratingDH,Zheng2023OnlineCC}. Beyond deterministic approaches that select the best-matching token, stochastic quantization strategies have also been explored~\cite{Zhang2023RegularizedVQ,Ramesh2021ZeroShotTG,Takida2022SQVAEVB}. However, these methods still exhibit low code vector utilization, particularly with large codebook sizes $K$~\cite{Zheng2023OnlineCC,Mentzer2023FiniteSQ}. In this paper, we address this issue via distributional matching between feature and code vectors.

\subsection{Evaluation Criteria}\label{sec:distributional formulation}
While these quantities have appeared individually in prior work, we emphasize that considering them jointly provides a unified lens for analyzing both optimization stability and codebook collapse.

Given feature vectors $\{\bz_i\}_{i=1}^N$ from feature distribution $\mathcal{P}_A$ and code vectors $\{\be_k\}_{k=1}^K$ sampled from codebook distribution $\mathcal{P}_B$, vector quantization involves finding the nearest, and thus most representative, code vector for each feature: 
\begin{align}
\bz_i'=\argmin_{\be\in\{\be_k\}}\norm{\bz_i-\be}.
\end{align}
Each original feature vector $\bz_i$ is then quantized to $\bz_i'$. We denote the index of the selected code vector for the $i$-th feature as $r_i$, such that $\bm{z}'_i = \bm{e}_{r_i}$.
We next introduce three key criteria to evaluate this process.

\begin{proper}[Quantization Error]\label{criteria:qe} 
The quantization error measures the average distortion introduced and is defined as:
\vspace{-1.0ex}
\begin{align}
\cE(\{\be_k\};\{\bz_i\})=\frac{1}{N}\sum_{i}\norm{\bz_i-\bz_i'}^2.
\end{align}
\end{proper}
A smaller $\cE$ signifies a more accurate quantization of the original feature vectors. 
Beyond this geometric interpretation, $\cE$ is also directly linked to training stability. 
As formally derived in Appendix~\ref{appendix:gradient gap quantization error}, the gradient discrepancy $\mathcal{G}_i$ for a feature $\bz_i$ introduced by the straight-through estimator (STE) is theoretically bounded by its distance to the code vector:
\begin{equation}
    \mathcal{G}_i \le \| \mathbf{H} \|_2 \cdot \| \bz_i-\bz_i' \|,
\end{equation}
where $\mathbf{H} = \left.\frac{\partial^2 \mathcal{L}}{\partial x^2}\right|_{x=\bz_i'}$ 
denotes the Hessian of the overall training loss $\mathcal{L}$ with respect to the latent input,
evaluated at the quantized code $\bz_i'$, characterizing the local curvature of the loss landscape.
Since $\cE$ is the mean squared magnitude of the term $\| \bz_i-\bz_i' \|$, minimizing $\cE$ explicitly tightens this bound. This ensures that the approximated gradients remain faithful to the true gradients, effectively mitigating the gradient mismatch that causes training instability.

\begin{proper}[Codebook Utilization Rate]\label{criteria:cu} The codebook utilization rate is the fraction of code vectors used in VQ
\begin{align}
\mathcal{U}(\{\bm{e}_k\};\{\bm{z}_i\}) 
= \frac{1}{K} \sum_{k=1}^K \mathbf{1}\left(k \in \{ r_i \}_{i=1}^N \right),
\end{align}
\end{proper}
where the indicator evaluates to 1 if code vector $\be_k$ is selected by at least one feature. 

A higher $\cU$ reduces the risk of codebook collapse. Ideally, $\cU$ should reach 100\%, meaning all code vectors are utilized. As discussed in Appendix~\ref{appendix:explanation on criterions}, $\cU$ measures only the \emph{completeness} of codebook utilization and cannot evaluate the degree of collapse. This motivates introducing the codebook perplexity criterion.

\begin{proper}[Codebook Perplexity]\label{criteria:cp} The codebook perplexity measures the uniformity of codebook utilization in VQ
\begin{align}
\mathcal{C}(\{\be_k\};\{\bz_i\}) = \exp(-\sum_{k=1}^{K} p_k \log p_k),
\end{align} 
\end{proper}
where $p_k =\frac{1}{N}\sum_{i=1}^N\one(\bz_i'=\be_k)$. A higher $\mathcal{C}$ indicates more uniform selection of code vectors in VQ. Ideally, $\mathcal{C}$ reaches its maximum at $\mathcal{C}_{0} = \exp(-\sum_{k=1}^{K} \frac{1}{K} \log \frac{1}{K})=K$ when code vectors are completely uniformly utilized. Thus, as a complement to Criterion~\ref{criteria:cu}, $\cU$ combined with $\mathcal{C}$ effectively evaluates codebook collapse.

We refer to $(\cE, \cU, \mathcal{C})$ as the criterion triple. When comparing extreme cases of distributional match and mismatch shown in Figure~\ref{fig:distributional match}, we find that distributional matching significantly outperforms mismatching across all three criteria. Using this triple, we can analyze the benefits of distribution matching more systematically.

\paragraph{Quantization Error vs. Codebook Utilization Rate}
Quantization error ($\cE$) is a more fundamental objective than codebook utilization ($\cU$) in VQ. We theoretically show in \cite{Fang2026UnifyingView} that minimizing $\cE$ necessarily induces full codebook utilization, whereas the converse does not hold. This hierarchical relationship suggests treating $\cE$ as the primary optimization target, with $\cU$ serving as a complementary metric to monitor coverage.

\paragraph{Remark} Notably, $\cE$ is sensitive to the variance of the latent feature distribution (Appendix~\ref{appendix:Impact of Distribution Variance}). Consequently, direct comparisons of raw $\cE$ values across latent spaces with different variances can be misleading. To fairly evaluate quantization methods via criterion triple, all comparisons should be made under identical latent distributions.



\subsection{The Effects of Distribution Matching} \label{sec:effects of distribution matching}
We conduct a deliberately simple synthetic experiment to isolate the geometric effects of distribution mismatch (see details in Appendix~\ref{appendix:details part1}).
Specifically, we assume that $\mathcal{P}_A$ and $\mathcal{P}_B$ are uniform distributions within two distinct disks, as shown in Figure~\ref{fig:criterion triple analysis}. We sample feature vectors $\{\bz_i\}_{i=1}^N$ from the red disk and code vectors $\{\be_k\}_{k=1}^K$ from the green circle. The criterion triple $(\cE, \cU, \mathcal{C})$ is calculated by Criteria~\ref{criteria:qe} to~\ref{criteria:cp}.

We examine two cases. The \underline{first} involves disks with identical radii but different centers. As shown in Figures~\ref{fig:diagram 1 group 1} to~\ref{fig:diagram 4 group 1}, when the centers move closer, the criterion triple improves toward optimal values. Specifically, $\cE$ decreases from 1.19 to 0.05, $\cU$ rises from 2\% to 100\%, and $\mathcal{C}$ increases from 3.8 to 344.9. The \underline{second} case shows distributions with identical centers but different radii. When the codebook support lies within the feature support (Figures~\ref{fig:diagram 1 group 2} and~\ref{fig:diagram 2 group 2}), $\cE$ is larger, $\cU$ slightly lower, and $\mathcal{C}$ smaller compared to the aligned distributions in Figure~\ref{fig:diagram 4 group 1}. When the codebook support extends beyond the feature support, $\cE$ increases modestly while $\cU$ and $\mathcal{C}$ drop significantly (Figures~\ref{fig:diagram 3 group 2} and~\ref{fig:diagram 4 group 2}). Detailed explanations are provided in Appendix~\ref{appendix:prototypical study}.

Overall, VQ achieves the optimal criterion triple when feature and codebook distributions are identical. This is further supported by quantitative analyses in Appendix~\ref{appendix:supplementary comprehensive quantitative analyses}.

\begin{figure*}[!t]
    \vspace{-2mm}
	\centering
	\subfloat[$(1.19, 2\%, 3.8)$]{ 
		\label{fig:diagram 1 group 1}  
		\includegraphics[width=0.22\textwidth]{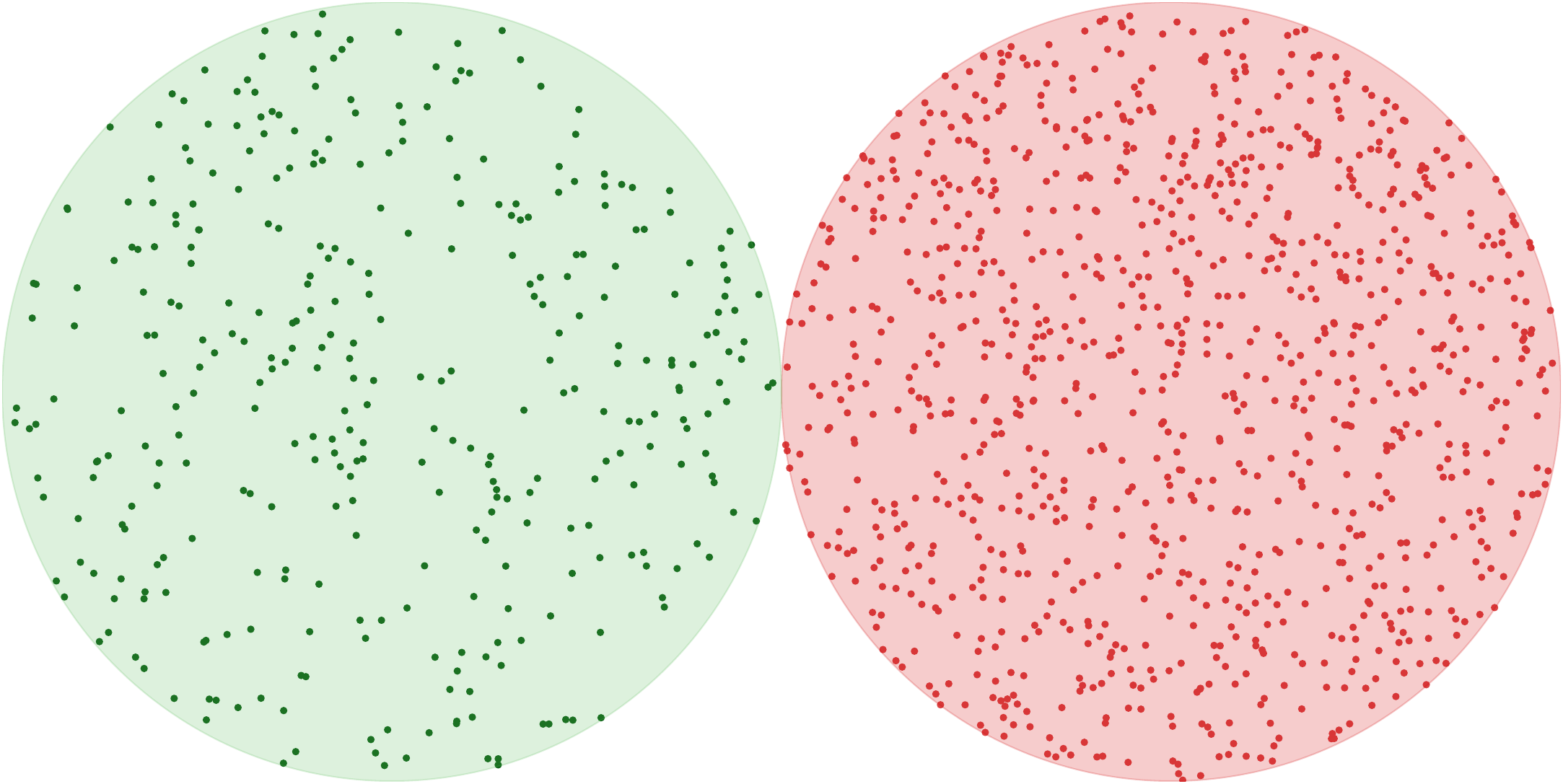}}
	\subfloat[$(0.70, 20.8\%, 16.5)$]{
		\label{fig:diagram 2 group 1}
		\includegraphics[width=0.22\textwidth]{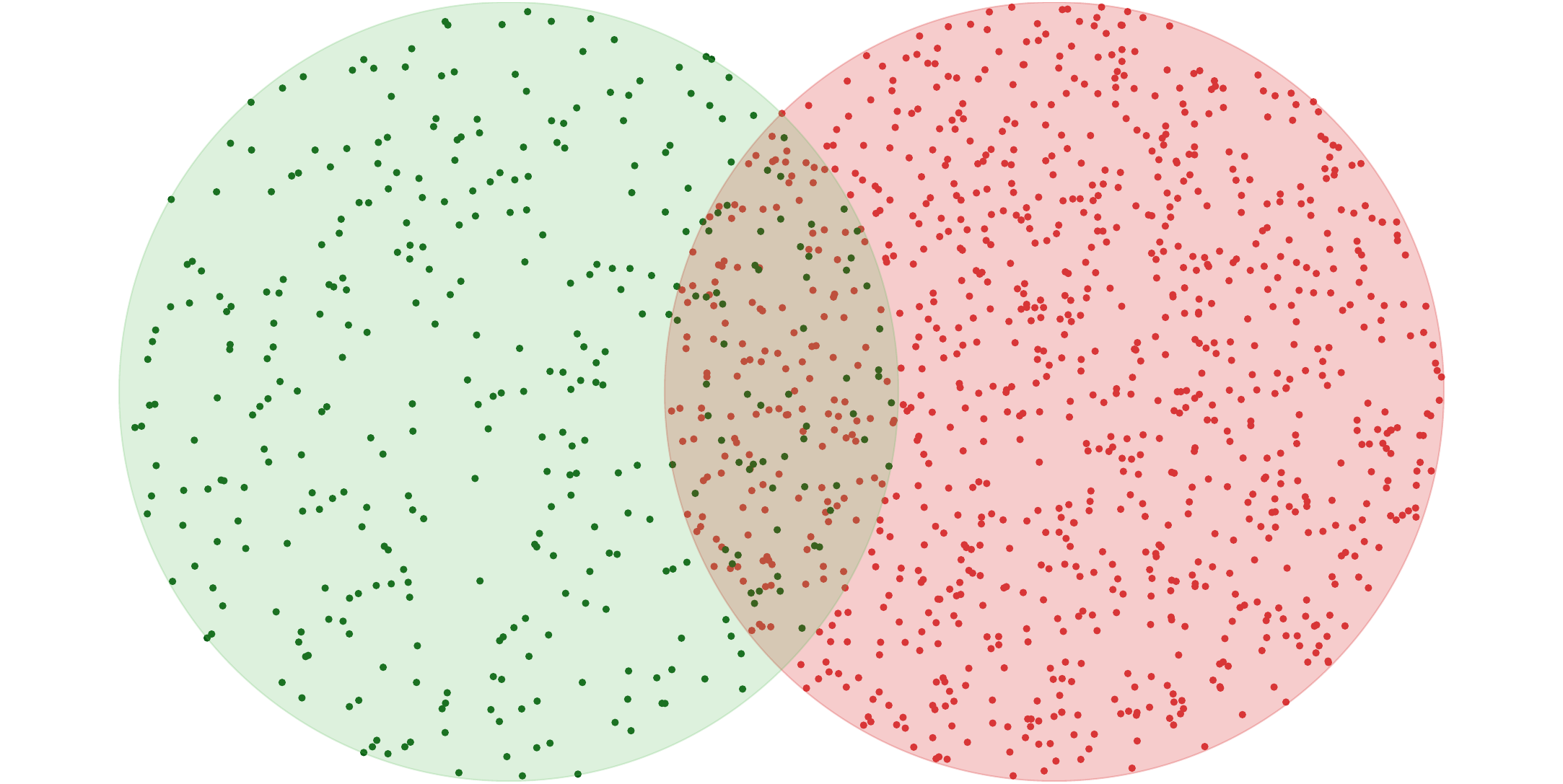}
	}
	\subfloat[$(0.26, 57.8\%, 96.9)$]{ 
		\label{fig:diagram 3 group 1}  
		\includegraphics[width=0.22\textwidth]{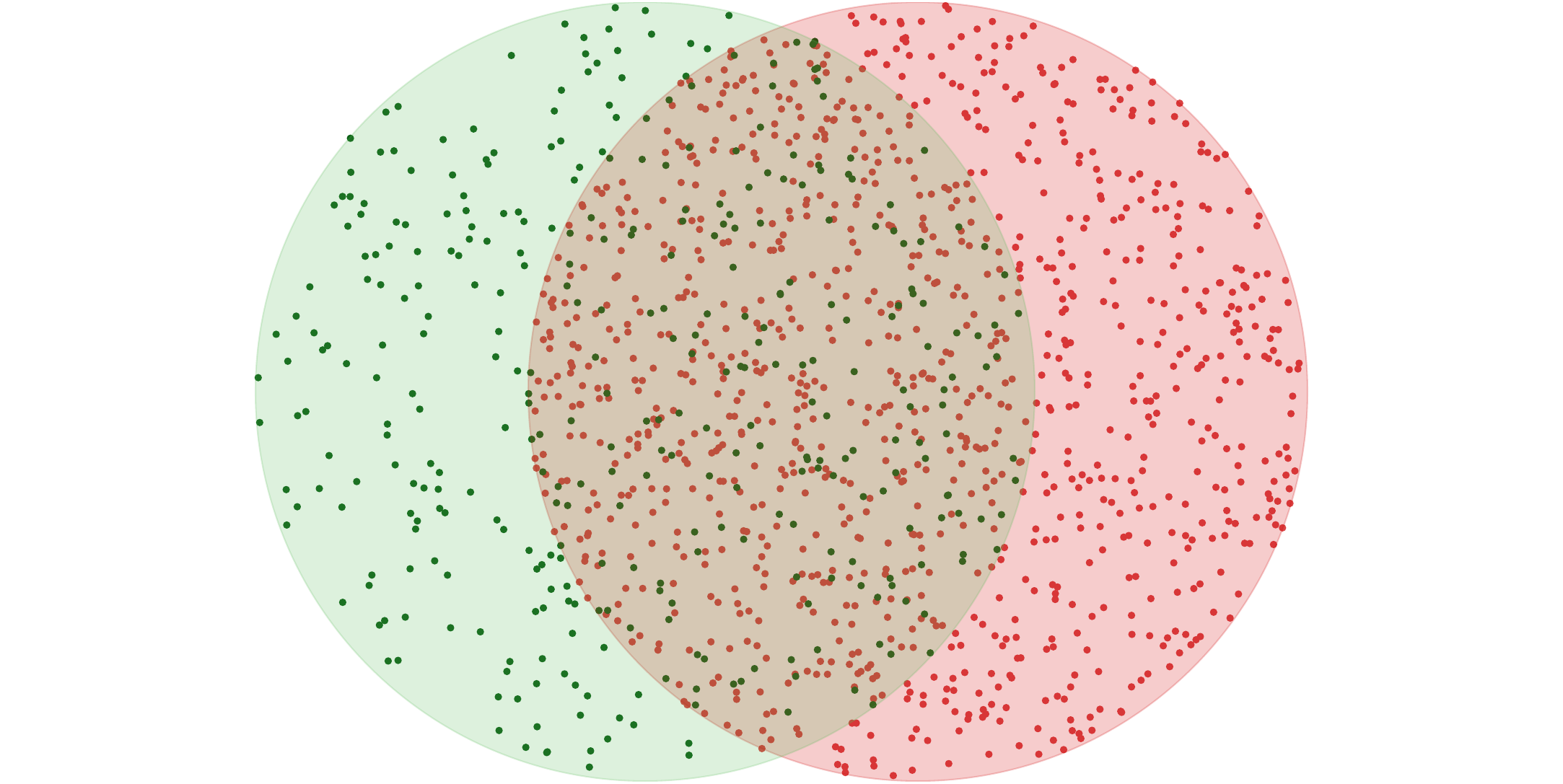}
	}
	\subfloat[$(0.05, 100\%, 344.9)$]{
		\label{fig:diagram 4 group 1}
		\includegraphics[width=0.22\textwidth]{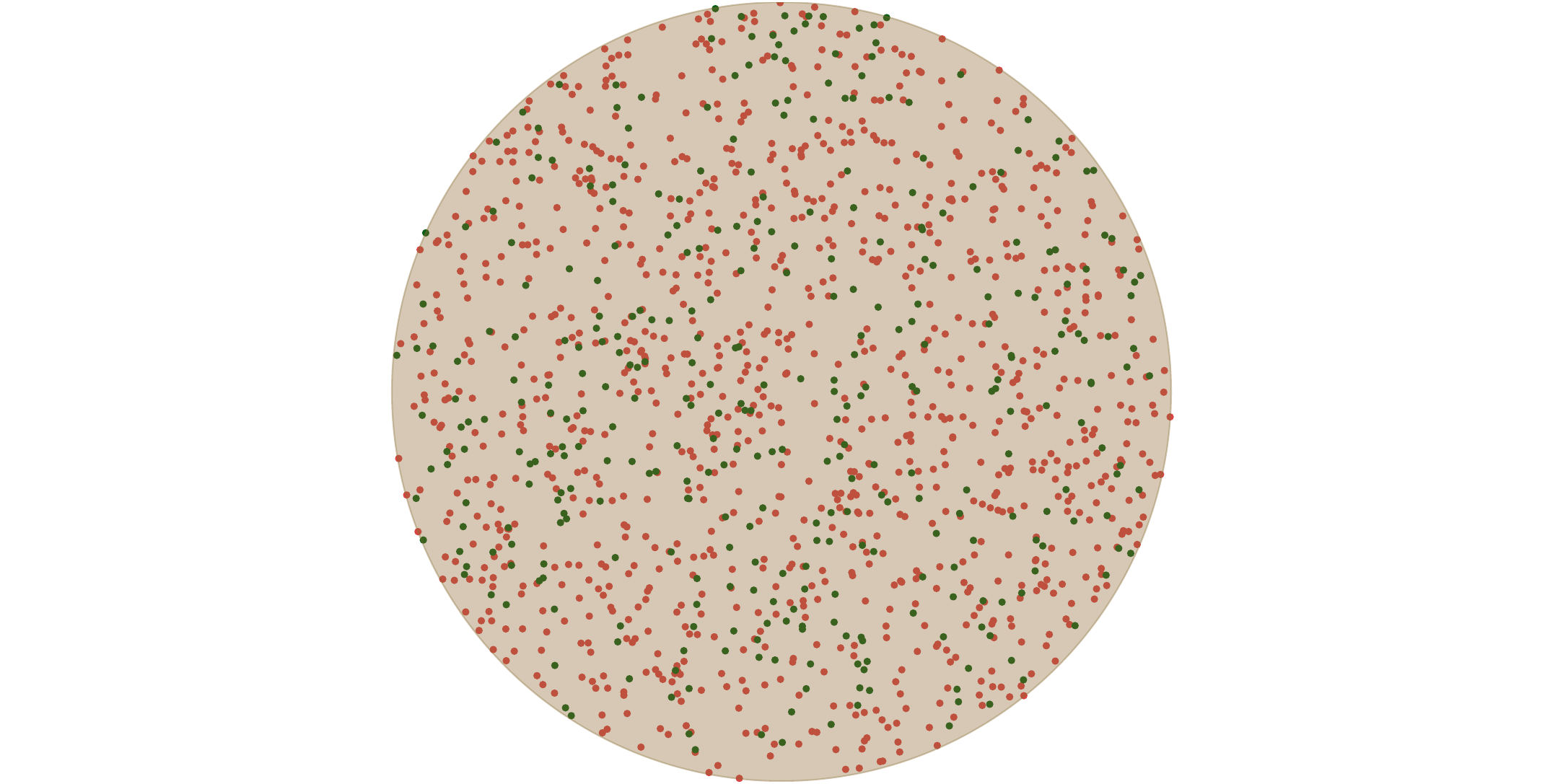}
	}
    \vspace{-0.1mm}
	\subfloat[$(0.36, 93.3\%, 63.2)$]{
		\label{fig:diagram 1 group 2}
		\includegraphics[width=0.22\textwidth]{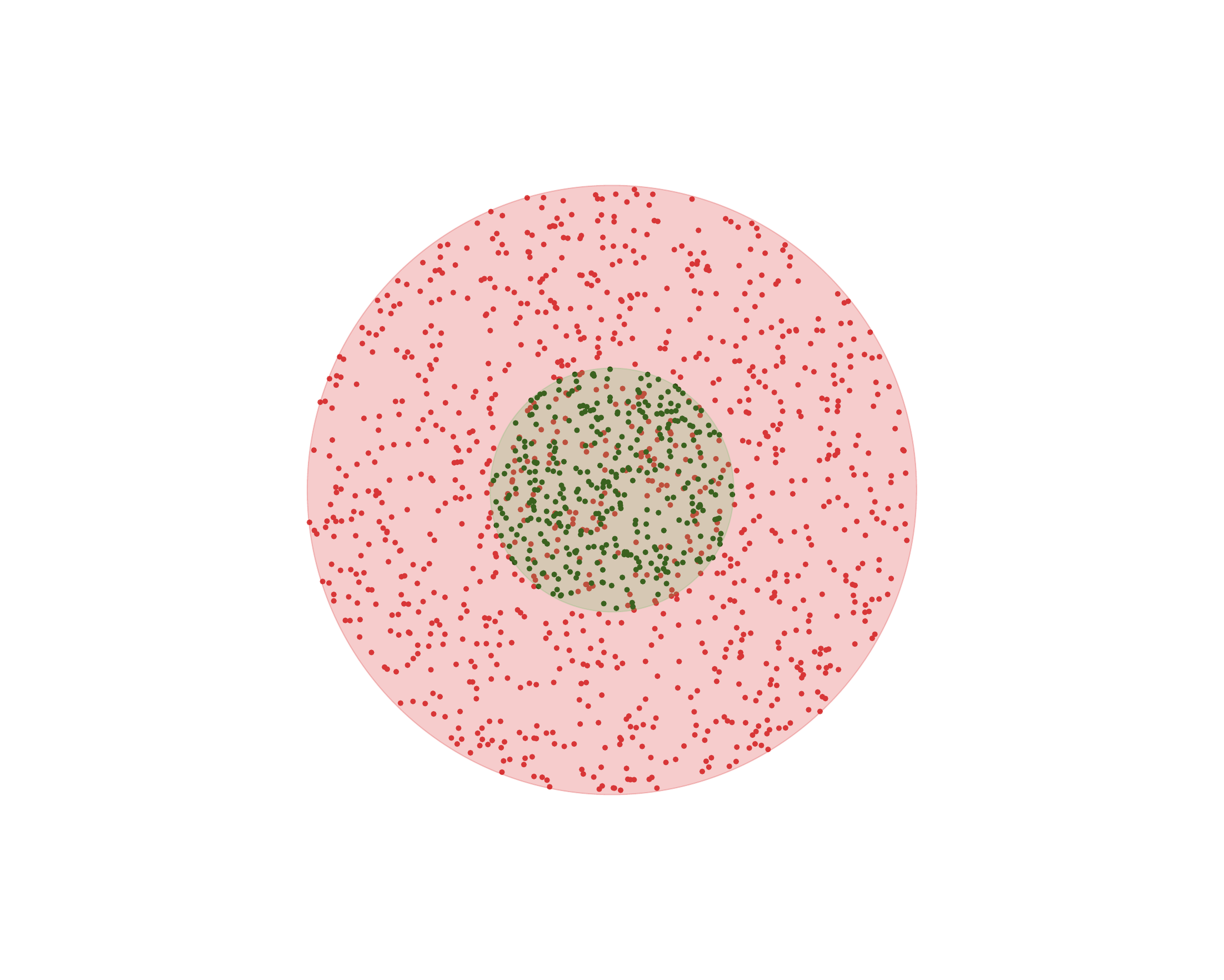}
	}
	\subfloat[$(0.10, 99.8\%, 250.5)$]{
		\label{fig:diagram 2 group 2}
		\includegraphics[width=0.22\textwidth]{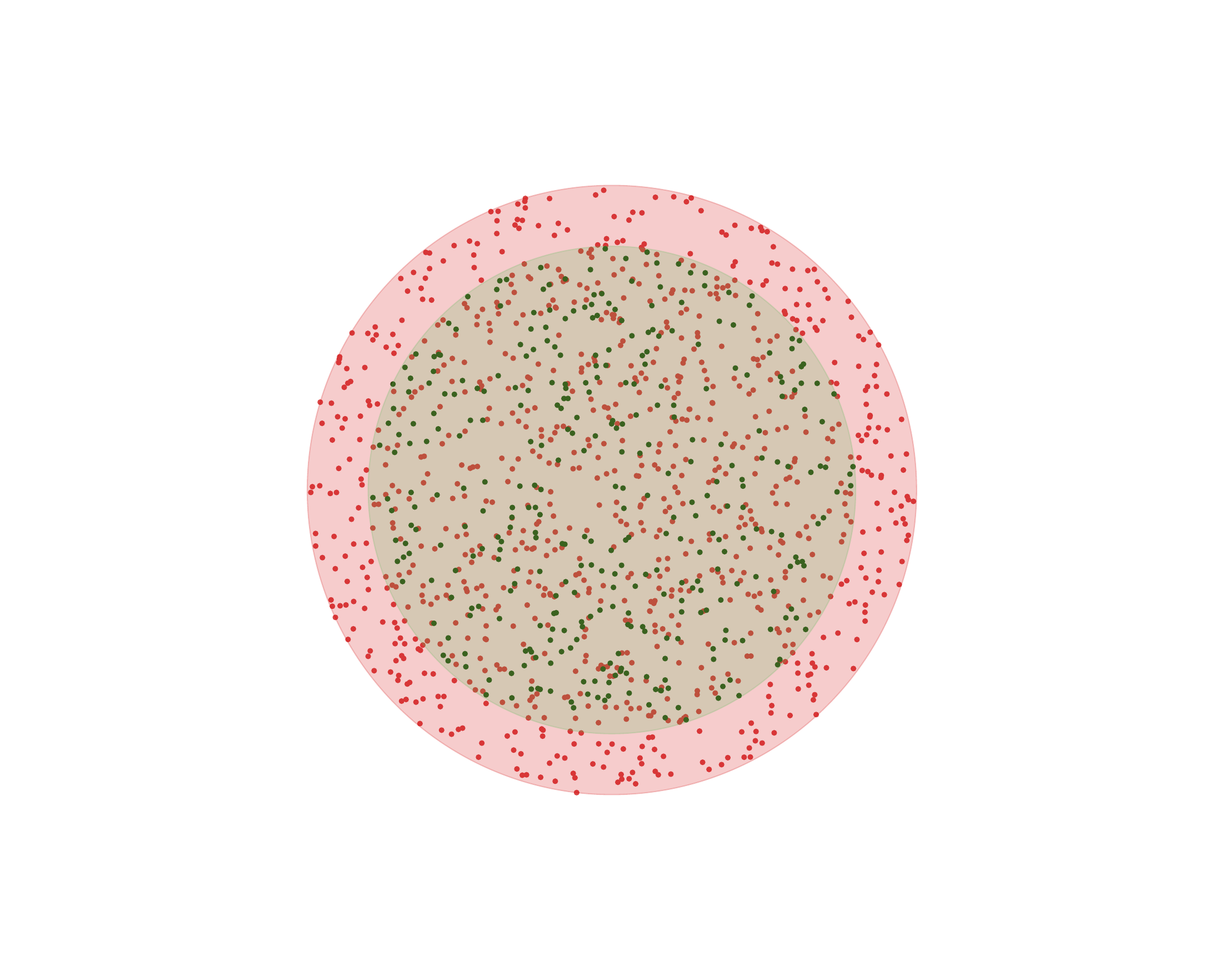}
	}
	\subfloat[$(0.07, 61.3\%, 199.7)$]{
		\label{fig:diagram 3 group 2}
		\includegraphics[width=0.22\textwidth]{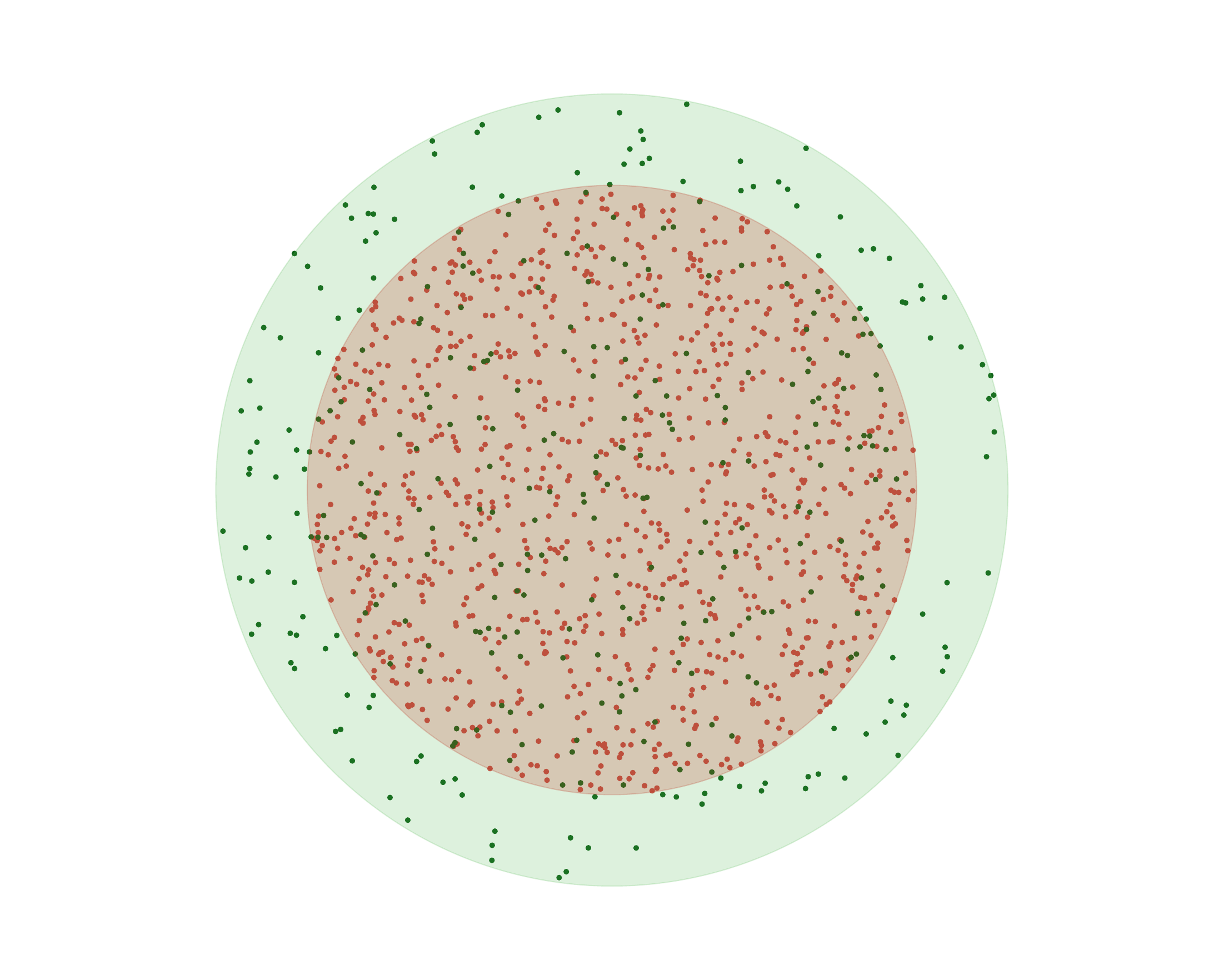}
	}
	\subfloat[$(0.08, 45.3\%, 151.5)$]{
		\label{fig:diagram 4 group 2}
		\includegraphics[width=0.22\textwidth]{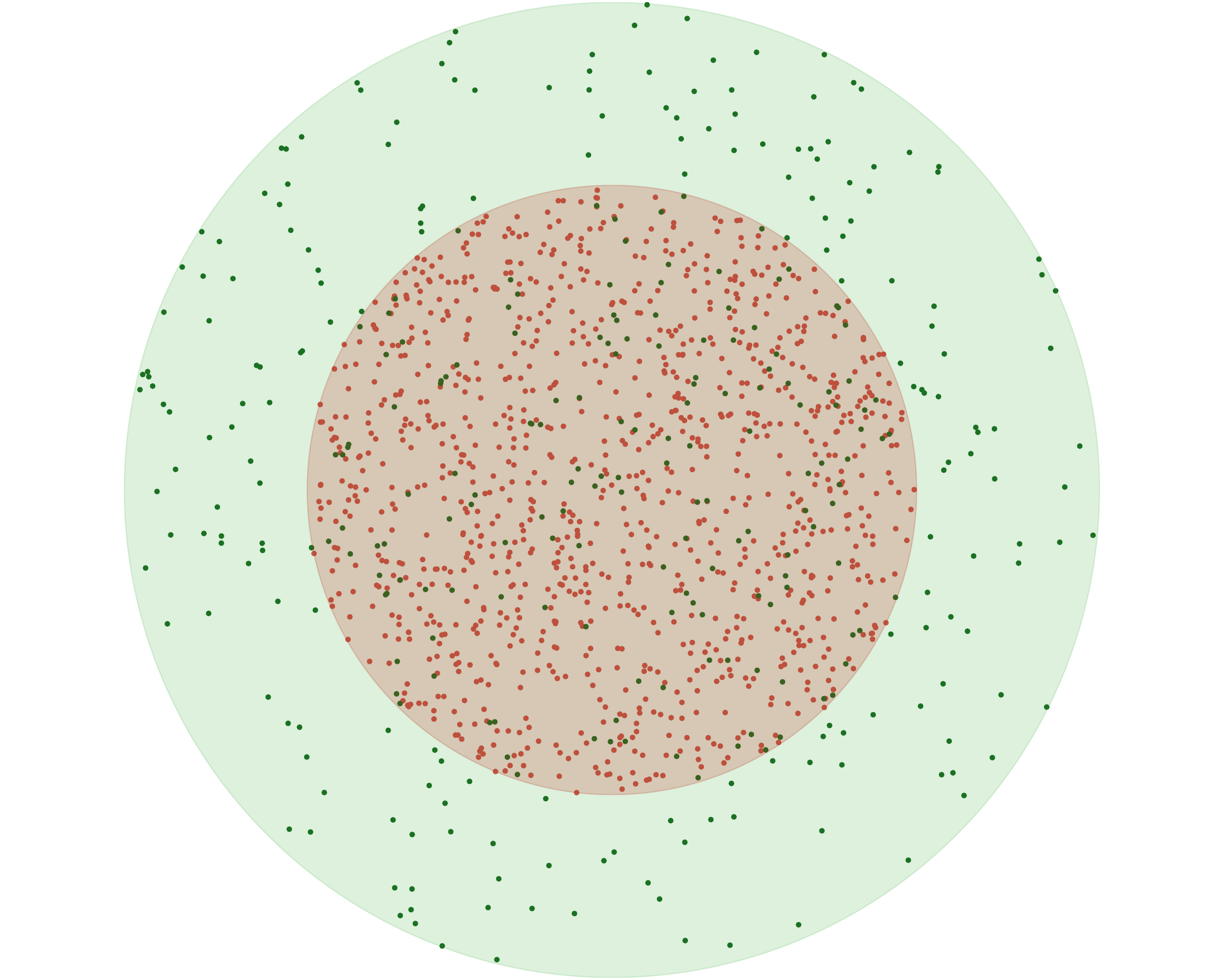}
	}
    \vspace{-1mm}
	\caption{Qualitative analyses of the criterion triple $(\cE, \cU, \cC)$: The red and green disks represent the uniform distributions of feature and code vectors, respectively.}
	\label{fig:criterion triple analysis}
    \vspace{-5mm}
\end{figure*}

\subsection{Theoretical Analyses} \label{sec:theoretical analysis}

In this section, we provide theoretical evidence to support our empirical observations. 
Let the code vectors $\{\be_k\}_{k=1}^K$ and  feature vectors $\{\bz_i\}_{i=1}^N$ be independently and identically drawn from $\cP_B$ and $\cP_A$, respectively.
We say a codebook $\{\be_k\}_{k=1}^K$ attains full utilization asymptotically with respect to  $\{\bz_i\}_{i=1}^N$ if the codebook utilization rate $\cU(\{\be_k\}_{k=1}^K;\{\bz_i\}_{i=1}^N)$ tends to 1 in probability as $N$ approaches infinity:
\begin{align}
\cU(\{\be_k\}_{k=1}^K;\{\bz_k\}_{i=1}^N)\overset{p}{\to}1,\quad \textnormal{as }N\to\infty.
\end{align}
For the codebook distribution $\cP_B$, we say it attains full utilization asymptotically with respect to $\cP_A$ if, with probability 1, the randomly generated codebook $\{\be_k\}_{k=1}^K$ achieves full utilization asymptotically.

Additionally, a codebook distribution $\cP_B$ is said to have vanishing quantization error asymptotically with respect to a domain $\Omega\subseteq\RR^d$ if the quantization error over all data of size $N$  tends to zero in probability as $K$ approaches infinity:
\begin{align}\label{equ:W25}
\sup_{\{\bz_i\}\subseteq\Omega}\cE(\{\be_k\}_{k=1}^K;\{\bz_i\}_{i=1}^N)\overset{p}{\to}0,\quad \textnormal{as }K\to\infty.
\end{align}
Our first theorem shows that $\overline{\supp(\cP_A)}=\overline{\supp(\cP_B)}$ is sufficient and necessary  for the codebook distribution $\cP_B$ to attain both full utilization and vanishing quantization error asymptotically. For simplicity, $\cP_A$ is assumed to have a density function $f_A$ with bounded support $\Omega\subseteq\RR^d$. 

\begin{theorem}\label{thm:1}
    Assume $\Omega=\supp(\cP_A)$ is a bounded open area. The codebook distribution $\cP_B$ attains full utilization and vanishing quantization error asymptotically if and only if $\overline{\supp(\cP_B)}=\overline{\supp(\cP_A)}$, where $\overline{\cS}$ denotes the closure of the set $\cS$.
\vspace{-1ex}
\end{theorem}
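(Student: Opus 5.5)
We prove the two directions separately and track the two properties (full utilization and vanishing quantization error) one at a time. Throughout, $\Omega=\supp(\cP_A)$ is bounded and open, $f_A$ is the density of $\cP_A$, and we write $S_B=\overline{\supp(\cP_B)}$. The one measure-theoretic fact we use repeatedly is this: if $U$ is open and $U\cap \supp(\cP_B)\neq\emptyset$, then $\cP_B(U)>0$. Likewise, any nonempty open subset of $\Omega$ has positive $\cP_A$-mass, because $\Omega$ is the support of $\cP_A$.

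\emph{Sufficiency.} Assume $S_B=\overline{\Omega}$.

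For vanishing quantization error, note that $\overline{\Omega}$ is compact, since $\Omega$ is bounded. Fix $\epsilon>0$ and cover $\overline\Omega$ by finitely many open balls $B_1,\dots,B_M$ of radius $\epsilon/2$, each centred at a point of $\overline\Omega$. Each such centre lies in $\supp(\cP_B)$, so $p_j:=\cP_B(B_j)>0$. The probability that some $B_j$ contains no code vector is at most $\sum_j(1-p_j)^K$, which tends to $0$ as $K\to\infty$. On the complementary event, every $\bz\in\Omega$ lies within $\epsilon$ of some code vector. Hence $\sup_{\{\bz_i\}\subseteq\Omega}\cE\le\epsilon^2$ simultaneously for all $N$, which gives \eqref{equ:W25}.

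For full utilization, fix a realized codebook $\{\be_k\}$. Almost surely the $\be_k$ are distinct and lie in $\supp(\cP_B)=\overline\Omega$ (the support is closed). Each Voronoi cell $V_k=\{\bz:\|\bz-\be_k\|<\|\bz-\be_j\|\ \forall j\neq k\}$ is open and contains $\be_k$. Since $\be_k\in\overline\Omega$, the intersection $V_k\cap\Omega$ is a nonempty open set, so $q_k:=\cP_A(V_k)>0$. Therefore $\Pr(k\notin\{r_i\})=(1-q_k)^N\to0$. Summing over the finitely many indices $k$ shows $\cU\to1$ in probability as $N\to\infty$, and this holds for almost every codebook.

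\emph{Necessity.} We argue by contraposition and split into two cases.

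First, suppose there is a point $x\in\overline\Omega\setminus S_B$. Then some ball $B(x,2\delta)$ misses $S_B$. Because $x\in\overline\Omega$ and $\Omega$ is open, we can pick $\bz\in\Omega\cap B(x,\delta)$. Every code vector lies in $S_B$ almost surely, so $\|\bz-\be_k\|\ge\delta$ for all $k$. Taking all $\bz_i=\bz$ gives $\sup\cE\ge\delta^2$ for every $K$, so the quantization error does not vanish.

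Second, suppose there is a point $y\in S_B\setminus\overline\Omega$. Choose $\eta>0$ with $U:=B(y,\eta)$ satisfying $\mathrm{dist}(U,\overline\Omega)>0$. Then $\cP_B(U)>0$, and we also need some mass of $\cP_B$ near $\overline\Omega$, which the first case lets us assume.

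This second case is the main obstacle. A code vector in $U$ can still be the nearest code vector for some features if nothing else is close to them, so we must show that with positive probability some code vector is never selected. The plan is as follows. With positive probability (in fact with probability tending to $1$ as $K$ grows, though one fixed $K$ suffices for the failure to occur with positive probability), the codebook contains a point $\be_1\in U$ together with code vectors forming an $\epsilon$-net of $\overline\Omega$, where $\epsilon<\mathrm{dist}(U,\overline\Omega)$. The net exists by the sufficiency argument applied on $\overline\Omega\subseteq S_B$. On this event every feature in $\Omega$ is strictly closer to a net point than to $\be_1$, so the Voronoi cell of $\be_1$ misses $\Omega$. Consequently $\cU\le1-1/K$ for every $N$, and the codebook distribution fails full utilization on an event of positive probability rather than probability zero.

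The delicate point is the interplay between the quantifiers "with probability 1" and "for the given $K$". If the definition is read for one fixed $K$, I would choose $K$ large enough that the event above has positive probability. I would state this dependence explicitly in the proof.
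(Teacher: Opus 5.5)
Your proposal is correct and follows the paper's proof essentially step for step: a covering argument and positive Voronoi-cell masses give sufficiency, and necessity comes from the inclusion $\overline{\Omega}\subseteq\overline{\supp(\cP_B)}$ forced by vanishing error together with the paper's $K_0+1$ construction (an $\epsilon$-net of $\overline{\Omega}$ plus one code vector in a distant region of positive $\cP_B$-mass), where your finite $\epsilon/2$-cover makes rigorous what the paper only sketches through convergence of the empirical distribution. The one tacit hypothesis, which the paper's proof also relies on, is that $\cP_B$ is atomless: your claim that the $\be_k$ are almost surely distinct, and hence that each open Voronoi cell contains its $\be_k$, needs it.
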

Theorem \ref{thm:1} establishes the optimal support of the codebook distribution. The boundedness of $\Omega$ is required as we consider the worst case quantization error in \eqref{equ:W25}. In real applications, when $\cP_A$ follows an absolutely continuous distribution over an unbounded domain, then $\{z_i\}_{i=1}^N$ generated from $\cP_A$ will be bounded with high probability. Thus, Theorem \ref{thm:1} also provides theoretical insights for a target distribution $\cP_A$ with an unbounded domain.

Besides the optimal support, we also determine the optimal density of the codebook distribution by invoking existing results characterizing asymptotic optimal quantizers \cite{graf2000foundations}. Specifically, we consider the case where $N$ approaches to infinity and define the expected quantization error of a codebook $\{\be_k\}$ with respect to $\cP_\cA$ as 
\begin{align}
\cE(\{\be_k\}_{k=1}^K;\cP_A)=\EE_{\bz\sim\cP_A}\min_{\be\in\{\be_k\}}\norm{\bz-\be}^2.
\end{align}
A codebook $\{\be^*_k\}_{k=1}^K$ is called the set of optimal centers for $\cP_A$ if it achieves the minimal quantization error:
\begin{align}
\cE(\{\be^*_k\}_{k=1}^K;\cP_A)=\min_{\{\be_k\}_{k=1}^K}\cE(\{\be_k\}_{k=1}^K;\cP_A).
\end{align}
Intuitively, Theorem~\ref{thm:1} shows that mismatched supports inevitably lead to either unused code vectors or large quantization error, whereas matching supports is both necessary and sufficient to avoid these failure modes.

Theorem \ref{thm:2} demonstrates that,  under weak regularity conditions,  the empirical measure of the optimal centers for $\cP_A$ converges in distribution to a fixed distribution determined by $\cP_A$. Notably, we do not assume a bounded domain in the following theorem.

\begin{theorem}[Theorem 7.5, \cite{graf2000foundations}]\label{thm:2} 
Suppose $Z\sim\cP_A$ is absolutely continuous w.r.t. the Lesbegue measure in $\RR^d$ and $\EE\norm{Z}^{2+\delta}<\infty$ for some $\delta>0$.
Then the empirical measure of the optimal centers for $\cP_A$, 
\begin{align}
\frac{1}{K}\sum_{k=1}^K\delta_{\be^*_k},
\end{align}
converges weakly to a fixed distribution $\cP_A^*$, whose density function $f_A^*$ is proportional to $f_A^{(d+2)/d}$. 
\end{theorem}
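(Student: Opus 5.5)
Since the statement is attributed to Theorem~7.5 of \cite{graf2000foundations}, the plan is to rebuild the argument of that reference in our notation, then read off the limiting density and compare its exponent with the one stated. Two preliminary facts from the hypotheses are needed. First, absolute continuity of $\cP_A$ means optimal centers $\{\be^*_k\}_{k=1}^K$ exist for every $K$ and are pairwise distinct. Second, the moment condition $\EE\norm{Z}^{2+\delta}<\infty$ makes the quantization error finite and gives Zador's asymptotics
\begin{align*}
K^{2/d}\,\cE(\{\be^*_k\}_{k=1}^K;\cP_A)\to Q_d\,\Big(\int f_A^{d/(d+2)}\Big)^{(d+2)/d},
\end{align*}
where $Q_d$ is the Zador constant of the unit cube. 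I would take this as the input from \cite{graf2000foundations}.

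The core step is a local version of Zador's theorem. Fix a finite partition of a large cube into small cubes $A_j$, and let $\lambda_K(A_j)=\frac1K\#\{k:\be^*_k\in A_j\}$. Tightness of $\{\lambda_K\}$ follows from the $2+\delta$ moment, because too few centers far out would cost more error than Zador's rate allows. So it is enough to identify every weak subsequential limit $\lambda$.
\begin{enumerate}[label=(\roman*)]
\item \emph{Lower bound.} Restrict to cube $A_j$. Using the local constancy of $f_A$ and the scaling $Q_d\,\mathrm{vol}(A_j)^{2/d}(\text{number of centers})^{-2/d}$, the error on $A_j$ is at least about $Q_d\,(\int_{A_j}f_A)\,\mathrm{vol}(A_j)^{2/d}\,(K\lambda(A_j))^{-2/d}$. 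This requires a boundary-effect (firewall) argument to control Voronoi cells that straddle cube boundaries.
\item \emph{Upper bound.} Distributing centers according to any density $\mu$ yields the matching construction, giving the functional $Q_d\int f_A\,\mu^{-2/d}$.
\end{enumerate}
Because the centers are optimal, the limit $\lambda$ must minimize
\begin{align*}
\Phi(\mu)=\int f_A\,\mu^{-2/d}\quad\text{subject to}\quad\int\mu=1 .
\end{align*}
Strict convexity of $t\mapsto t^{-2/d}$ makes the minimizer unique. Uniqueness forces the whole sequence to converge.

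I expect the main obstacle to be the final identification of the minimizer and its exponent. Stationarity gives $f_A\,\mu^{-(d+2)/d}=\text{const}$, i.e. $f_A\propto\mu^{(d+2)/d}$. This is an exponent of $(d+2)/d$ linking the two densities, but in the direction where $f_A$ is expressed through $\mu$. Solving for the center density instead gives $\mu\propto f_A^{d/(d+2)}$, which is the form in which Theorem~7.5 of \cite{graf2000foundations} and the Zador normalization $\int f_A^{d/(d+2)}$ above are stated. So the stated claim $f_A^*\propto f_A^{(d+2)/d}$ does not follow from this argument as written; the argument yields the inverse exponent.

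To settle this I would work out the one-dimensional case by hand as a consistency check. For $d=1$, the optimal cells have width proportional to $f_A^{-1/3}$, so the center density is proportional to $f_A^{1/3}=f_A^{d/(d+2)}$. If this check holds, the statement should be corrected to $f_A^*\propto f_A^{d/(d+2)}$ before it is cited, rather than accepted with the exponent $(d+2)/d$. The rest of the argument does not depend on which way the exponent is written.
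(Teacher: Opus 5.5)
Your diagnosis is right, and it is the substantive point here. The paper offers no argument for this statement beyond the citation. The cited result, the point-density theorem (Theorem~7.5 of \cite{graf2000foundations}) at order $r=2$, asserts weak convergence to the law with density $f_A^{d/(d+2)}/\int f_A^{d/(d+2)}$.

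That is exactly what your stationarity condition $f_A\,\mu^{-(d+2)/d}=\mathrm{const}$ and your $d=1$ cube-root check give. The equality case of H\"older's inequality, $\int f_A^{d/(d+2)}\le\bigl(\int f_A\mu^{-2/d}\bigr)^{d/(d+2)}\bigl(\int\mu\bigr)^{2/(d+2)}$, also yields the minimizer and its uniqueness in one step.

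So the statement as printed misquotes its source, and it is false unless $f_A$ is constant on its support. For $\cP_A=\mathcal{N}(\bm\mu,\Sigma)$ the true limit is $\mathcal{N}(\bm\mu,\tfrac{d+2}{d}\Sigma)$, whereas the printed exponent would give $\mathcal{N}(\bm\mu,\tfrac{d}{d+2}\Sigma)$. The paper's high-dimensional remark survives the correction, since $d/(d+2)\to1$ as well. At finite $d$, however, the deviation runs the other way: asymptotically optimal codebooks are more spread out than the feature distribution, not more concentrated.

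If you write the reconstruction out, two steps need repair; neither affects the exponent.
\begin{enumerate}
\item Your tightness heuristic points the wrong way. What must be excluded is a non-vanishing fraction of centers drifting to infinity, where they serve negligible mass and are essentially wasted.
\item A subsequential limit $\lambda$ need not be absolutely continuous, so you cannot simply posit a density $\mu$ in the variational step.
\end{enumerate}
Both are handled at once by working with vague (sub-probability) limits. Choose the cubes to have $\lambda$-null boundaries. Then Lebesgue differentiation and Fatou turn the summed local bounds into $\liminf_K K^{2/d}\cE(\{\be^*_k\}_{k=1}^K;\cP_A)\ge Q_d\int f_A\mu^{-2/d}$, where $\mu$ is the density of the absolutely continuous part of $\lambda$ and $\int\mu\le1$. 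Comparing with Zador's limit, the H\"older inequality above forces $\int\mu=1$ and $\mu\propto f_A^{d/(d+2)}$. Hence neither a singular part nor escaped mass can survive.
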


\paragraph{High-dimensional Implication.}
Theorem~\ref{thm:2} characterizes the asymptotically optimal codebook distribution $\mathcal{P}^*_A$, whose density satisfies
$f^*_A(z) \propto f_A(z)^{(d+2)/d}$.
A key implication arises in high-dimensional settings typical of visual tokenization: as the feature dimension $d$ increases, the exponent converges to unity,
\begin{align}
\lim_{d\to\infty} \frac{d+2}{d} = \lim_{d\to\infty} \left(1+\frac{2}{d}\right) = 1.
\end{align}
As a result, the optimal codebook density $f^*_A$ becomes increasingly close to the feature density $f_A$ in the high-dimensional regime.
This observation provides a rigorous theoretical justification for our approach: explicitly aligning the codebook distribution with the feature distribution (i.e., $\mathcal{P}_B \approx \mathcal{P}_A$) serves as a principled and effective surrogate for the asymptotically optimal design in high-dimensional latent spaces.



%% file: sections/method.tex
\section{Methodology}
\label{sec:method}



Based on the theoretical insights in Section~\ref{sec:understanding}, which show that matching the feature and codebook distributions is both necessary and sufficient for asymptotically optimal quantization, we propose a general framework for enhancing vector quantization via distributional matching. 
This framework can be instantiated using either parametric objectives (e.g., the Wasserstein distance for efficiency) or non-parametric alternatives (e.g., Maximum Mean Discrepancy, which offers robustness and adaptability to general distributions). In the main text, we focus primarily on the Wasserstein instantiation due to its computational efficiency arising from a closed-form solution. Finally, we integrate this objective into two representative VQ frameworks, namely VQ-VAE~\cite{Oord2017NeuralDR} and VQGAN~\cite{Esser2020TamingTF}.

\subsection{A General Distributional Matching Framework}
\label{sec:distributional matching framework}
To resolve the fundamental mismatch between the feature distribution $\mathcal{P}_A$ and the codebook distribution $\mathcal{P}_B$, we introduce an auxiliary alignment loss $\mathcal{L}_{match}$ to the standard VQ objective. The goal is to minimize the divergence $\mathcal{D}$ between the two distributions:
\begin{equation}
    \mathcal{L}_{match} = \mathcal{D}(\mathcal{P}_A, \mathcal{P}_B),
\end{equation}
where $\mathcal{P}_A$ is empirically defined by the encoder outputs $\{\bm{z}_e\}$ and $\mathcal{P}_B$ by code vectors $\{\bm{e}_k\}$. Crucially, gradients from $\mathcal{L}_{match}$ are only back-propagated to the codebook, preserving encoder feature expressiveness. 
\vspace{-2ex}

\begin{figure*}[t]
    \centering
    \includegraphics[width=\linewidth]{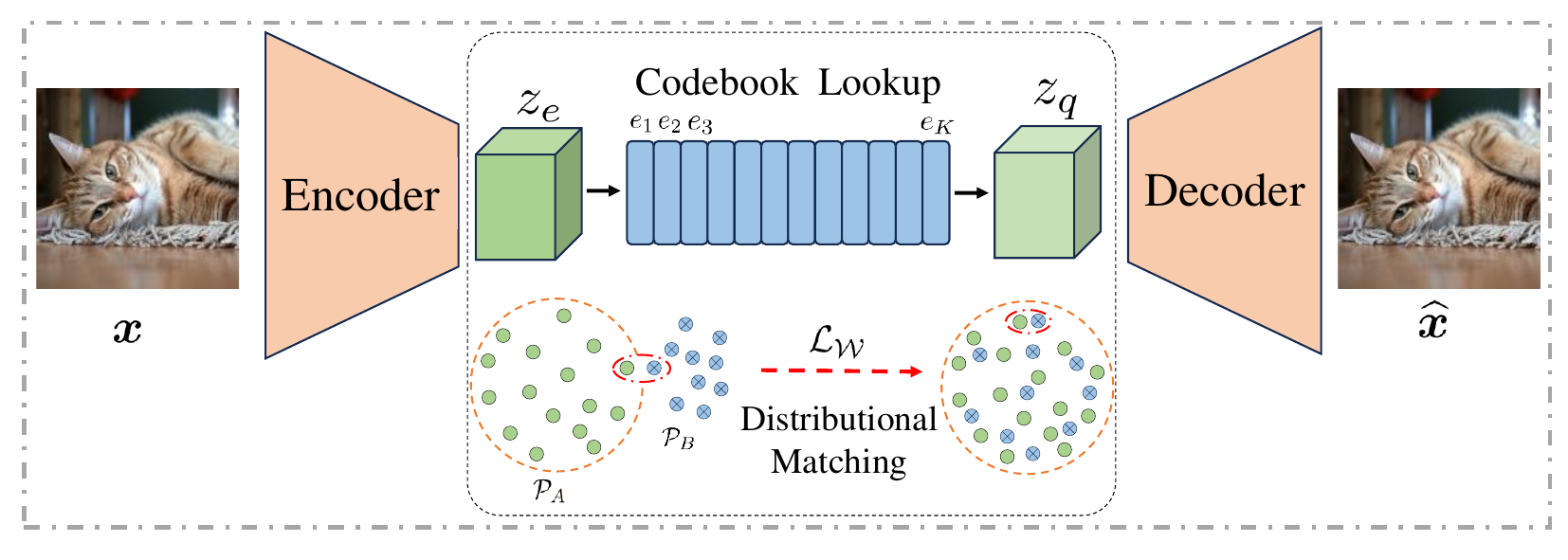}
    \vspace{-4ex}
    \caption{Illustration of the \emph{Wasserstein VQ}. The architecture integrates an encoder-decoder network with a  VQ module. In the VQ module, we augment the vanilla VQ framework~\cite{Oord2017NeuralDR} by incorporating our proposed Wasserstein loss $\mathcal{L}_{\mathcal{W}}$ to achieve distributional matching between features $\bz_e$ ($\bz_e^{ij}\sim \mathcal{P}_{A}$) and the codebook $\be_k$ ($\be_k\sim \mathcal{P}_{B}$).}
    \label{fig:wasserstein vq}
    \vspace{-3ex}
\end{figure*}
\subsection{Wasserstein-Based Distribution Matching}
\label{sec:wasserstein distance}
We consider a parametric instantiation of the proposed framework using a mild Gaussian approximation to enable computationally efficient distributional matching.
Specifically, we approximate the feature and codebook distributions as $\mathcal{P}_A \approx \mathcal{N}(\bm{\mu}_1, \bm{\Sigma}_1)$ and $\mathcal{P}_B \approx \mathcal{N}(\bm{\mu}_2, \bm{\Sigma}_2)$.
Under this approximation, we employ the quadratic Wasserstein distance, as defined in Appendix~\ref{appendix: distribution distance}, which admits a computationally efficient closed-form solution.
Although other statistical distances, such as the Kullback-Leibler divergence \cite{Kingma2013AutoEncodingVB, Ho2020DenoisingDP}, are viable alternatives, they typically lack simple closed-form representations, rendering them computationally prohibitive in high-dimensional settings. 
Consequently, we adopt the quadratic Wasserstein distance, whose closed-form representation for Gaussian distributions is provided by the following lemma:

\begin{lemma}[\cite{Olkin1982TheDB}]\label{theorem:Wasserstein Distance}
The quadratic Wasserstein distance between $\mathcal{N}(\bm \mu_1, \m \Sigma_1)$ and $\mathcal{N}(\bm \mu_2, \m \Sigma_2)$ 
\begin{equation}\label{eq:wasserstein distance definition}
\sqrt{\Vert \bm \mu_1 - \bm \mu_2 \Vert^2_2 + \tr( {\m \Sigma_1}+ {\m \Sigma_2} - 2 (\Sigma_1^{\frac{1}{2}} {\m \Sigma_2} \m \Sigma_1^{\frac{1}{2}})^{\frac{1}{2}})}.
\end{equation}
\end{lemma}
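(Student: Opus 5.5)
The plan is to prove the formula for the squared distance,
\[
W_2^2=\min_{\pi\in\Pi(\mathcal{P}_1,\mathcal{P}_2)}\mathbb{E}_{(X,Y)\sim\pi}\|X-Y\|^2 ,
\]
where $\mathcal{P}_1=\mathcal{N}(\bm\mu_1,\m\Sigma_1)$ and $\mathcal{P}_2=\mathcal{N}(\bm\mu_2,\m\Sigma_2)$. The square root in \eqref{eq:wasserstein distance definition} then follows immediately.

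\textbf{Step 1: reduce to centered variables.} For any coupling, write $X=\bm\mu_1+X_0$ and $Y=\bm\mu_2+Y_0$. Then
\[
\mathbb{E}\|X-Y\|^2=\|\bm\mu_1-\bm\mu_2\|^2+\mathbb{E}\|X_0-Y_0\|^2 ,
\]
because the cross term vanishes by centering. Expanding further,
\[
\mathbb{E}\|X_0-Y_0\|^2=\tr\m\Sigma_1+\tr\m\Sigma_2-2\tr\m C ,
\]
where $\m C=\mathbb{E}[X_0Y_0^\top]$ is the cross-covariance. The problem is therefore to maximize $\tr\m C$ over all joint covariances
\[
\begin{pmatrix}\m\Sigma_1&\m C\\ \m C^\top&\m\Sigma_2\end{pmatrix}\succeq 0 .
\]
Every such matrix is realized by a jointly Gaussian coupling, and only second moments enter the cost, so restricting to these matrices loses nothing.

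\textbf{Step 2: upper bound.} I will first assume $\m\Sigma_1\succ0$. Positive semidefiniteness gives $\m C=\m\Sigma_1^{1/2}\m K\m\Sigma_2^{1/2}$ with $\|\m K\|_2\le1$. Set $\m M=\m\Sigma_1^{1/2}\m\Sigma_2\m\Sigma_1^{1/2}$. The key claim is
\[
\tr\m C\le\tr(\m M^{1/2}).
\]
To prove it, note that the singular values of $\m\Sigma_2^{1/2}\m\Sigma_1^{1/2}$ are the square roots of the eigenvalues of $\m M$. Cyclicity gives $\tr\m C=\tr(\m K\,\m\Sigma_2^{1/2}\m\Sigma_1^{1/2})$. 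By von Neumann's trace inequality, this is at most $\|\m K\|_2\,\|\m\Sigma_2^{1/2}\m\Sigma_1^{1/2}\|_{*}\le\tr\m M^{1/2}$. I expect this to be the main obstacle: correctly identifying the nuclear norm with $\tr(\m\Sigma_1^{1/2}\m\Sigma_2\m\Sigma_1^{1/2})^{1/2}$, and handling the contraction parametrization of $\m C$.

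\textbf{Step 3: attainment.} Define the linear map
\[
\m T=\m\Sigma_1^{-1/2}\m M^{1/2}\m\Sigma_1^{-1/2},
\]
which is symmetric positive semidefinite, and take $Y_0=\m T X_0$. Two checks are needed:
\begin{itemize}
\item $\m T\m\Sigma_1\m T=\m\Sigma_1^{-1/2}\m M\m\Sigma_1^{-1/2}=\m\Sigma_2$, so $Y_0$ has the correct law $\mathcal{N}(0,\m\Sigma_2)$.
\item $\tr\m C=\tr(\m\Sigma_1\m T)=\tr(\m\Sigma_1^{1/2}\m M^{1/2}\m\Sigma_1^{-1/2})=\tr\m M^{1/2}$, so the bound of Step 2 is achieved.
\end{itemize}

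\textbf{Step 4: singular $\m\Sigma_1$.} If $\m\Sigma_1$ is singular, I will apply the formula to $\m\Sigma_1+\epsilon\m I$ and let $\epsilon\to0$. The right-hand side is continuous in $\m\Sigma_1$, since the matrix square root is continuous on PSD matrices. The left-hand side is also continuous: $W_2$ is continuous under $L^2$ perturbation, as $W_2(\mathcal{N}(\bm\mu,\m\Sigma),\mathcal{N}(\bm\mu,\m\Sigma+\epsilon\m I))\le\sqrt{\epsilon d}$. Alternatively, the bound of Step 2 can be argued directly with generalized inverses. Combining Steps 1–3 (or Step 4 in the singular case) gives the stated closed form.
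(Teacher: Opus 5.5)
Your proof is correct. The paper gives no argument of its own; it only cites \cite{Olkin1982TheDB}. So the comparison is between your self-contained derivation and an imported result.

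Your route is the classical one. You reduce to maximizing $\tr C$ over positive semidefinite joint covariances. You then bound $\tr C$ by the nuclear norm $\|\Sigma_2^{1/2}\Sigma_1^{1/2}\|_*=\tr(\Sigma_1^{1/2}\Sigma_2\Sigma_1^{1/2})^{1/2}$ via the contraction factorization and von Neumann's inequality, and attain it with the symmetric map $T$. The cited work is phrased for random vectors with prescribed dispersion matrices. Your Step 1 remark, that every admissible joint covariance is realized by a Gaussian coupling, is exactly what converts that moment-constrained statement into the Wasserstein statement as written.

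What your version buys is visibility of where Gaussianity is used. Steps 1--2 never use it, so for arbitrary laws with these means and covariances the closed form is in general only a lower bound on $W_2$ (the Gelbrich bound). Gaussianity enters solely through attainment, namely the pushforward $Y_0=TX_0$ in Step 3. This is a relevant caveat for this paper. $\mathcal{L}_{\mathcal{W}}$, computed from sample moments, is in general a lower bound on the Wasserstein distance between the (non-Gaussian) empirical feature and code distributions, rather than that distance itself.

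Two small simplifications are available:
\begin{itemize}
\item The factorization $C=\Sigma_1^{1/2}K\Sigma_2^{1/2}$ with $\|K\|_2\le 1$ holds for every positive semidefinite block matrix. Your upper bound therefore already covers singular $\Sigma_1$, and only attainment needs Step 4.
\item Since $\tr(\Sigma_1^{1/2}\Sigma_2\Sigma_1^{1/2})^{1/2}=\tr(\Sigma_2^{1/2}\Sigma_1\Sigma_2^{1/2})^{1/2}$, you can swap the two Gaussians and skip the perturbation whenever at least one covariance is nonsingular.
\end{itemize}
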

The lemma shows that the quadratic Wasserstein distance admits a closed-form expression in terms of the means and covariance matrices of the two distributions.
In practice, we estimate these population quantities, ${\bm \mu}_{1}$, ${\bm \mu}_{2}$, ${\m \Sigma}_{1}$, and ${\m \Sigma}_{2}$, with their sample counterparts: $\hat {\bm \mu}_{1}$, $\hat {\bm \mu}_{2}$, $\hat {\m \Sigma}_{1}$, and $\hat {\m \Sigma}_{2}$. 
The empirical quadratic Wasserstein distance is then used as the optimization objective to align  the feature and codebook distributions:
\begin{equation}\label{eq:wasserstein distance}
\mathcal{L}_{\mathcal{W}}\! = \!\sqrt{\!\Vert \hat {\bm \mu}_{1} \!- \!\hat {\bm \mu}_{2} \Vert^2_2 \!+\! \tr( \hat {\m \Sigma}_{1}\!+\!\hat {\m \Sigma}_{2}\! -\!2(\hat {\m \Sigma}_{1}^{\frac{1}{2}} {\hat {\m \Sigma}_{2}} \hat {\m \Sigma}_{1}^{\frac{1}{2}})^{\frac{1}{2}})}. \nonumber
\end{equation}  
A smaller value of $\mathcal{L}_{\mathcal{W}}$ indicates stronger alignment between the feature distribution $\mathcal{P}_A$ and the codebook distribution $\mathcal{P}_B$. We refer to the VQ algorithm that employs $\mathcal{L}_{\mathcal{W}}$ as \emph{Wasserstein VQ}.

\paragraph{Remark on the Gaussian Approximation}
The Gaussian approximation is introduced solely to obtain a closed-form and computationally efficient instantiation of the proposed distributional matching objective, simplifying the complex form in Definition~\ref{def:wasserstien distance} in Appendix~\ref{appendix: distribution distance} into Lemma~\ref{theorem:Wasserstein Distance}.
Importantly, this approximation does not constrain the learned feature representations:
the Wasserstein loss $\mathcal{L}_{\mathcal{W}}$ is applied only to update the codebook parameters, while gradients to the encoder are detached.
As a result, the encoder remains free to learn arbitrarily complex feature distributions.

In practice, latent representations in visual tokenization often exhibit approximately Gaussian statistics, a behavior commonly observed in deep representations (see Section~\ref{sec:experiments} and Appendix~\ref{sec:normality_tests} for empirical validation). One possible explanation is that multivariate normal distributions are information-theoretically advantageous, as they maximize entropy for a given covariance, thereby enabling efficient information transmission and robust signal reconstruction. As a result, multivariate normal distributions naturally emerge as both an ideal and empirically observed form for latent representations in visual tokenization and related signal recovery tasks.

\subsection{MMD-Based Distribution Matching}
\label{sec:mmd}

To address scenarios where the Gaussian assumption may not hold (e.g., highly multi-modal distributions), we provide a non-parametric instantiation using Maximum Mean Discrepancy (MMD)~\cite{Gretton2012AKT, Sriperumbudur2009HilbertSE}. MMD measures the distance between distributions in a Reproducing Kernel Hilbert Space (RKHS) without assuming any specific parametric form:
\begin{align}
\mathcal{L}_{\text{MMD}} = \frac{1}{N^2}\sum_{i,j} k(\bm{z}_i, \bm{z}_j) - \frac{2}{NK}\sum_{i,k} k(\bm{z}_i, \bm{e}_k) + \frac{1}{K^2}\sum_{k,l} k(\bm{e}_k, \bm{e}_l),
\end{align}
where $\bm{e}_k$ and $\bm{z}_i$ denote code vectors and encoder output spatial feature vectors, respectively, and $k(\cdot, \cdot)$ is a kernel function (e.g., a Gaussian RBF kernel). We refer to this approach as MMD VQ~\cite{Fang2025VQTransplant}. Similar to Wasserstein VQ, the MMD loss $\mathcal{L}_{\text{MMD}}$ is applied exclusively to update the codebook parameters, with gradients to the encoder detached.

\paragraph{Remark} While $\mathcal{L}_{\text{MMD}}$ offers greater theoretical robustness by relaxing the Gaussian hypothesis, it typically incurs higher computational cost ($\mathcal{O}((N+K)^2)$) compared to the closed-form Wasserstein distance. In our experiments, we observe that Wasserstein VQ achieves performance comparable to MMD VQ (see Section~\ref{sec:mmd-vs-wasserstein}), indicating that the Gaussian approximation sufficiently captures the essential structure for effective tokenization while remaining computationally efficient. Therefore, In the following experiments, we focus on Wasserstein instantiation for exploring the distributional matching framework.


\subsection{Integration into VQ Architectures}
Our distributional matching framework is agnostic to the underlying VQ architecture. We integrate it into two representative frameworks: VQ-VAE and VQGAN.

\subsubsection{VQ-VAE Integration} 
\label{sec:VQVAE}
We first examine \emph{Wasserstein VQ} within the VQ-VAE framework~\cite{Oord2017NeuralDR}. As shown in Figure~\ref{fig:wasserstein vq}, the VQ-VAE model has three key components: an encoder $E(\cdot)$, a decoder $D(\cdot)$, and a quantizer $\mathcal{Q}(\cdot)$ with a learnable codebook $\{\m e_k\}_{k=1}^{K}$. As described earlier in Section~\ref{sec:overview vq}, for an input image $\bm{x}$, the encoder produces a spatial feature $\bm{z}_e = E(\bm{x}) \in \mathbb{R}^{h\times w \times d}$. The quantizer maps $\bm{z}_e$ to a quantized feature $\bm{z}_q$, which the decoder uses to reconstruct the image as $\bm{\hat{x}} = D(\bm{z}_q)$. Incorporating our Wasserstein loss $\mathcal{L}_{\mathcal{W}}$ into the VQ-VAE framework, the overall loss is formulated as follows:
\begin{align}
\label{eq:vqvae}
\mathcal{L}_{\text{VQ-VAE}} =\| \bm{\hat{x}} - \bm{x} \|^{2}_2 + \beta \| \textrm{sg}(\bm{z}_q)-\bm{z}_e \|^{2}_2 + \| \textrm{sg}(\bm{z}_e) -\bm{z}_q \|^{2}_2 + \gamma \mathcal{L}_{\mathcal{W}}.
\end{align} 
where $\textrm{sg}$ denotes the stop-gradient. $\beta$ and $\gamma$ are hyper-parameters. We set $\gamma=0.5$ for all VQ-VAE experiments.

By incorporating $\mathcal{L}_{\mathcal{W}}$, the codebook is encouraged to globally match the first- and second-order statistics of the feature distribution, complementing the local assignment enforced by the standard VQ loss.

\paragraph{Remark} 
Sections~\ref{sec:effects of distribution matching} and~\ref{sec:theoretical analysis} provide empirical and theoretical evidence that minimizing quantization error is closely linked to aligning the codebook with the feature distribution. By encouraging the codebook to reflect the feature space structure and density, the alignment loss $\mathcal{L}_{\mathcal{W}}$ positions code vectors as cluster centers, minimizing the average squared distance between feature vectors and their assigned codes, $\| \bm{z}_e - \bm{z}_q \|^{2}_2$. In essence, distribution alignment arranges prototypes to cover the feature manifold and fill gaps, reducing overall quantization error.

\vspace{2ex}
\subsubsection{VQGAN Integration}
\label{sec:VQGAN}
To ensure high perceptual quality in the reconstructed images, we further investigate \emph{Wasserstein VQ} within the VQGAN framework~\cite{Esser2020TamingTF}. VQGAN extends the VQ-VAE framework by integrating a VGG network~\cite{Simonyan2014VeryDC}  and a patch-based discriminator \cite{Esser2020TamingTF, Johnson2016PerceptualLF}. The overall training objective of VQGAN can be written as follows:
\begin{align}
\label{eq:vqgan}
\mathcal{L}_{\text{VQGAN}} = \mathcal{L}_{\text{VQ-VAE}} + \mathcal{L}_{\text{Per}} + \lambda \mathcal{L}_{\text{GAN}}.
\end{align}
Where $\mathcal{L}_{\text{Per}}$ and $\mathcal{L}_{\text{GAN}}$ denote the VGG-based perceptual loss~\cite{Zhang2018TheUE} and the GAN loss~\cite{Isola2016ImagetoImageTW, Lim2017GeometricG}, respectively. Achieving state-of-the-art reconstruction fidelity via adversarial training typically incurs substantial computational cost. 
To reduce training overhead, we adopt the VQ-Transplant framework, which attains competitive reconstruction fidelity at a fraction of the cost~\cite{Fang2025VQTransplant}. 
We adopt the VQ-Transplant framework solely as a training-efficient backbone; the proposed Wasserstein VQ is orthogonal to this choice and can be integrated into standard VQGAN training as well.
Specifically, it initializes the encoder and decoder from a state-of-the-art pretrained tokenizer (e.g., VAR~\cite{Tian2024VisualAM}) rather than training from scratch, significantly lowering training cost while preserving reconstruction quality.



%% file: sections/atomic_setting.tex
\section{An Atomic Setting for Evaluating the Criterion Triple}
\label{sec:atomic setting}
\begin{figure*}[!t]
    \vspace{-1ex}
    \centering
    \subfloat[$\cE$ w.r.t. $\mu$]{         
        \label{fig:vq quantization error mean}  \includegraphics[width=0.225\textwidth]{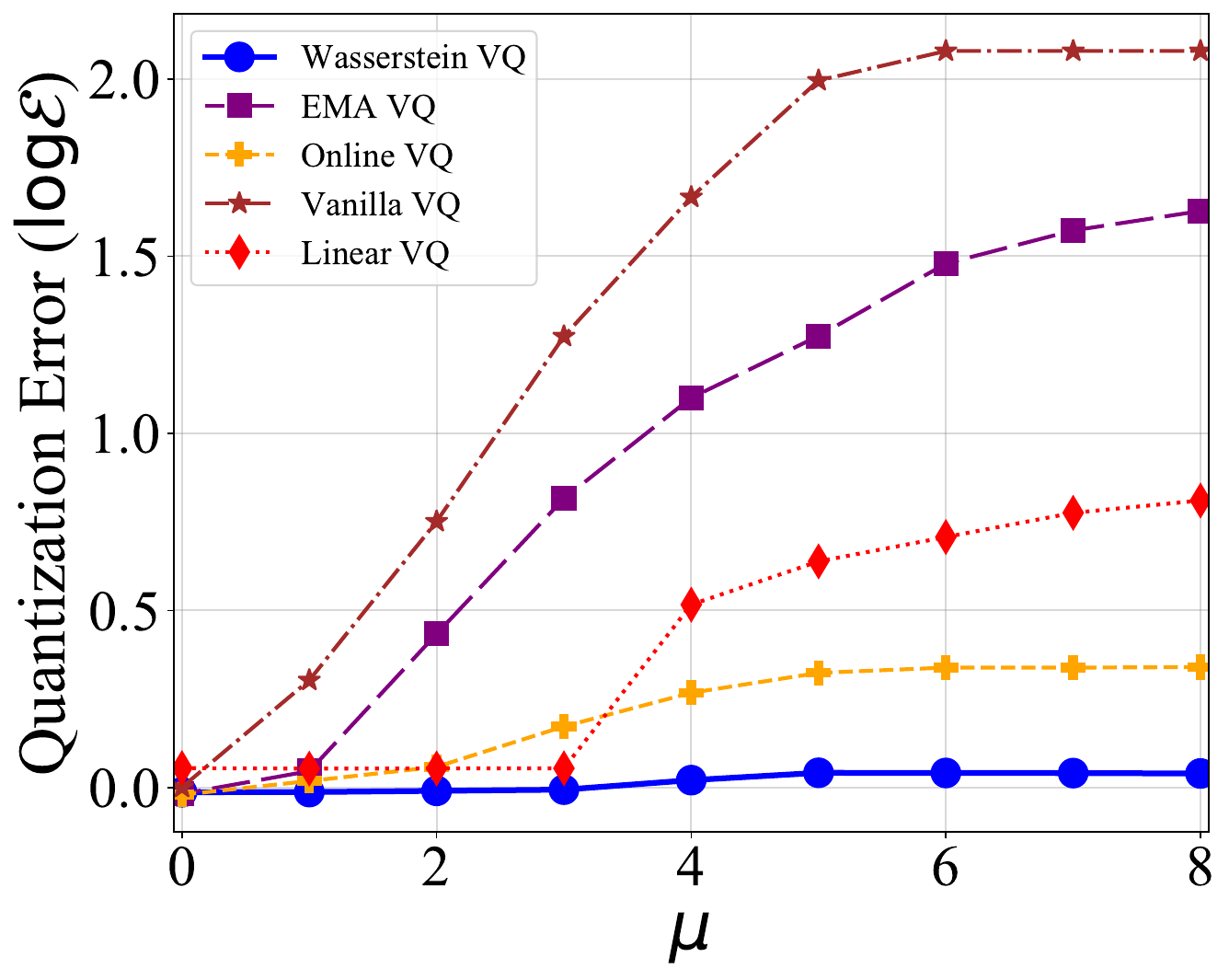}
        }
	\subfloat[$\cU$ w.r.t. $\mu$]{         
        \label{fig:vq codebook utilization mean}  \includegraphics[width=0.225\textwidth]{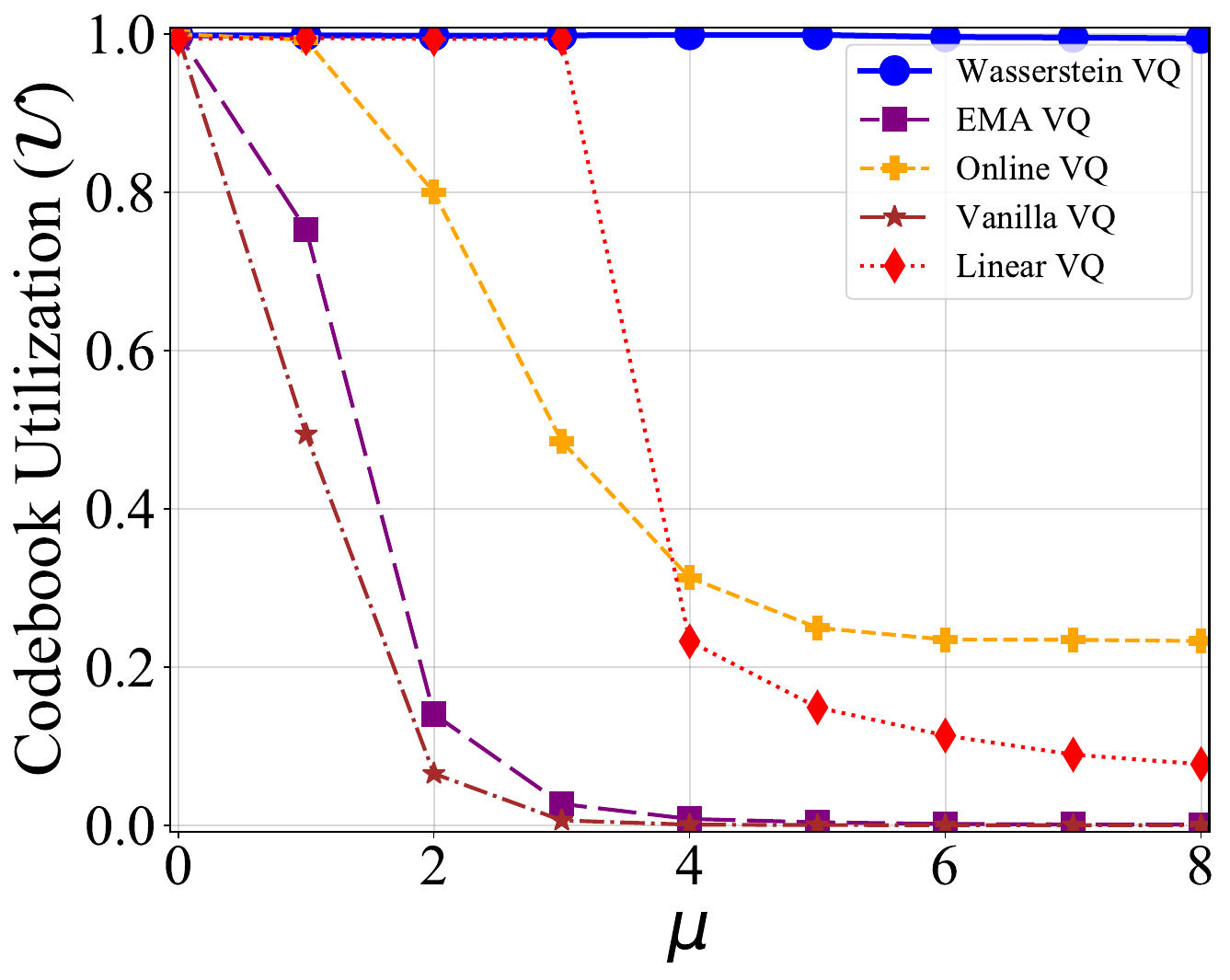}
        }
	\subfloat[$\mathcal{C}$ w.r.t. $\mu$]{         
        \label{fig:vq codebook perplexity mean}  \includegraphics[width=0.225\textwidth]{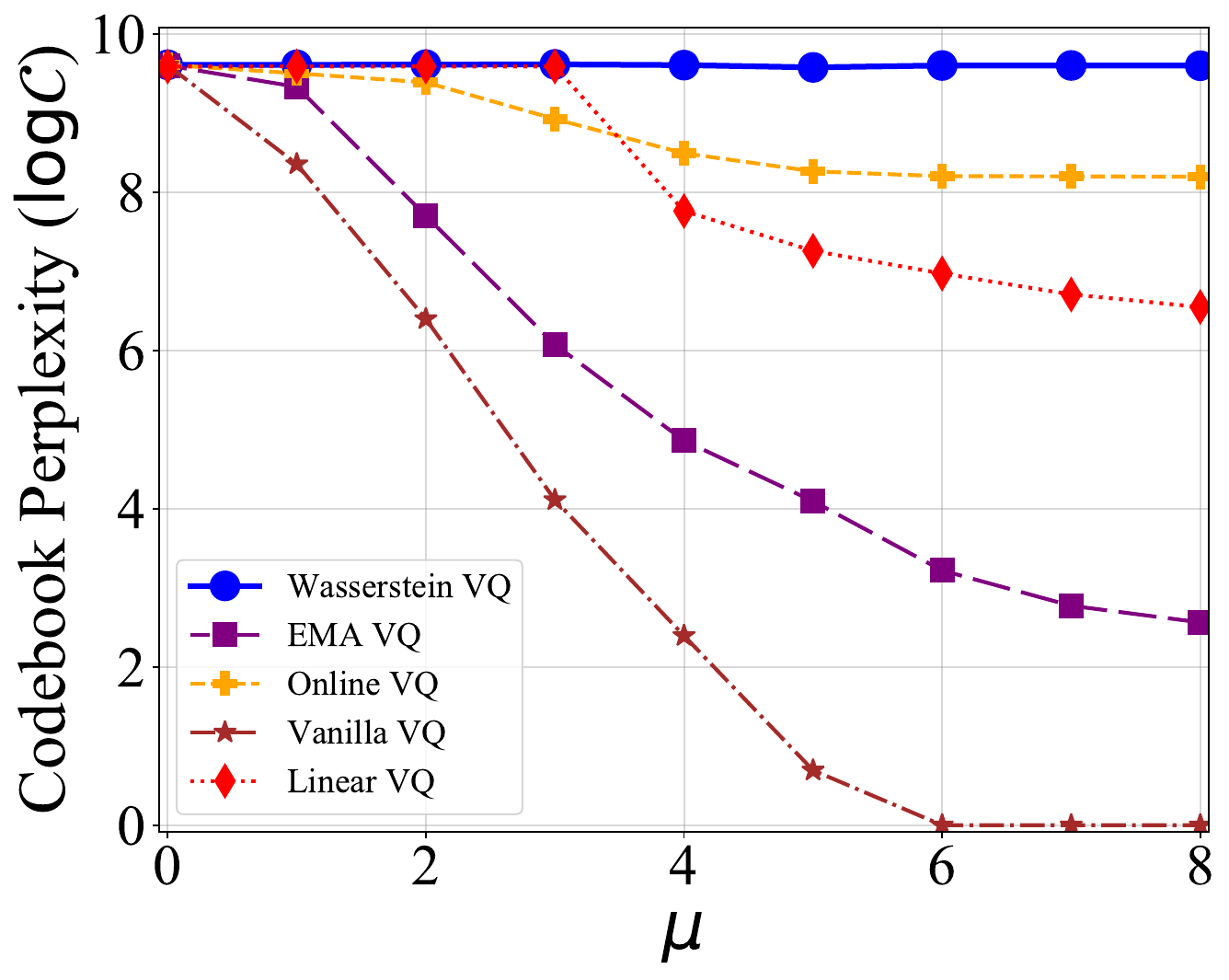}
        }
    \subfloat[$\mathcal{L}_{W}$ w.r.t. $\mu$]{         
        \label{fig:wasserstein distance mean}  \includegraphics[width=0.225\textwidth]{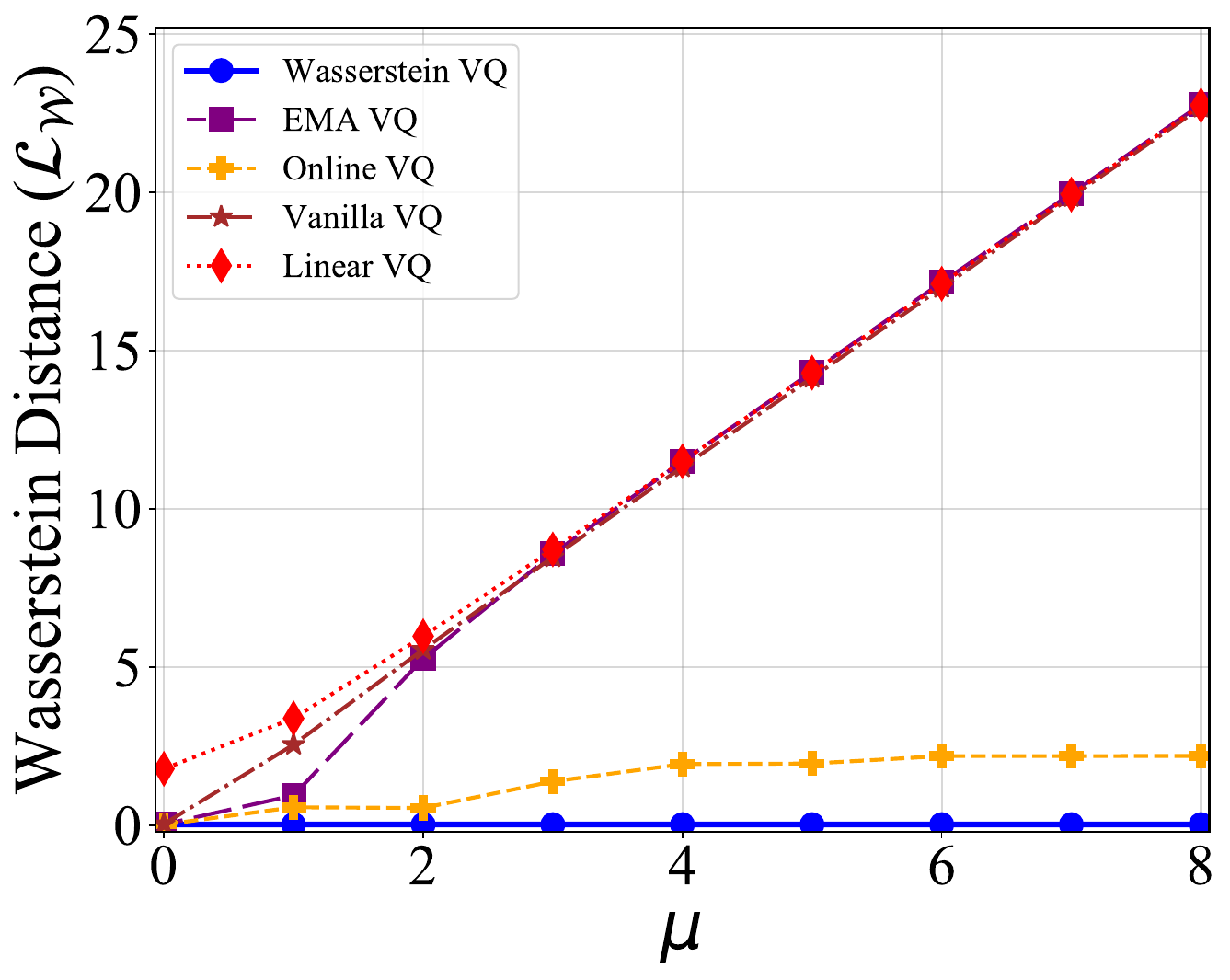}
        }
    \vspace{-0.1mm}
    \subfloat[$\cE$ w.r.t. $\nu$]{         
        \label{fig:vq quantization error mean(uniform)}  \includegraphics[width=0.225\textwidth]{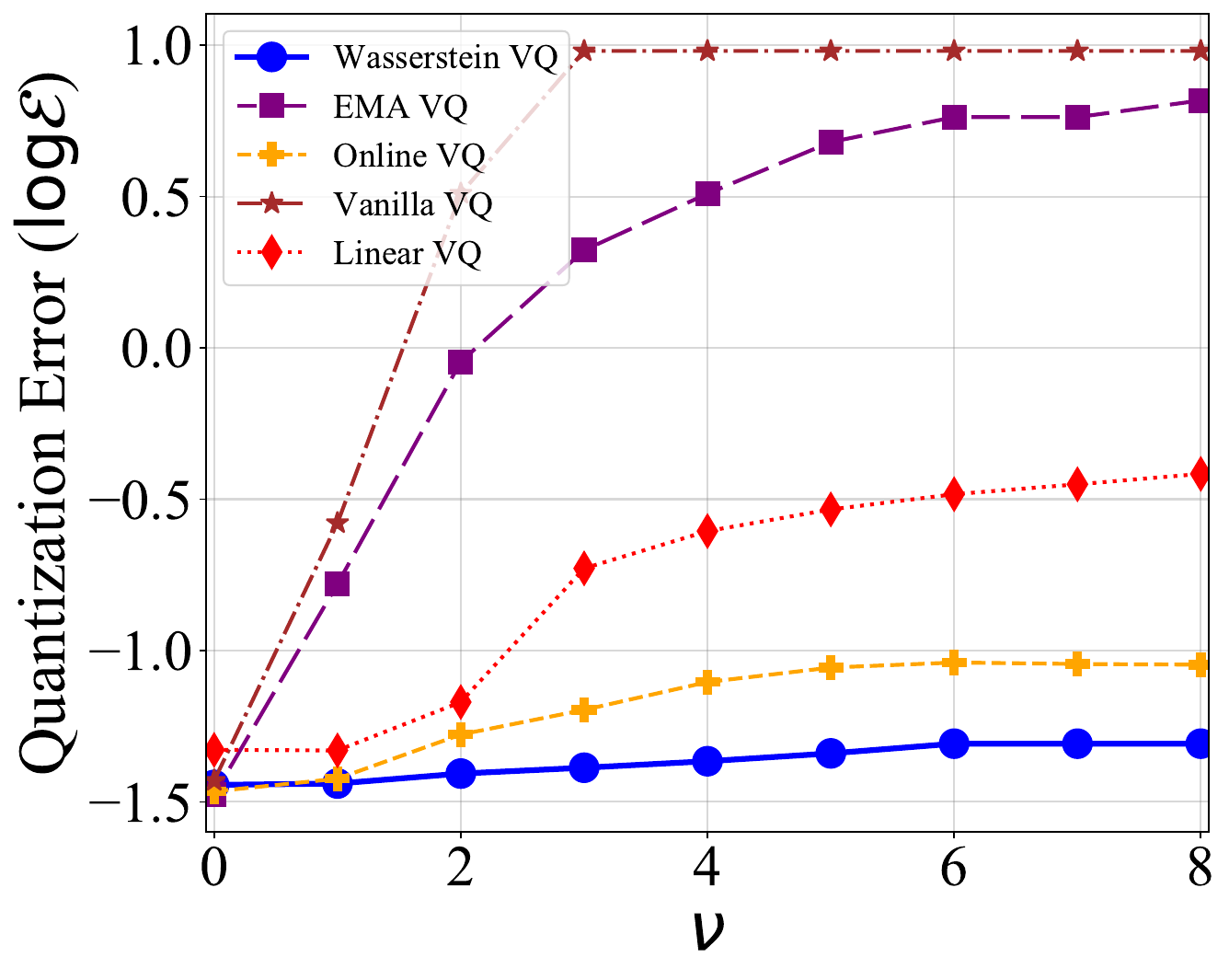}
        }
	\subfloat[$\cU$ w.r.t. $\nu$]{         
        \label{fig:vq codebook utilization mean(uniform)}  \includegraphics[width=0.225\textwidth]{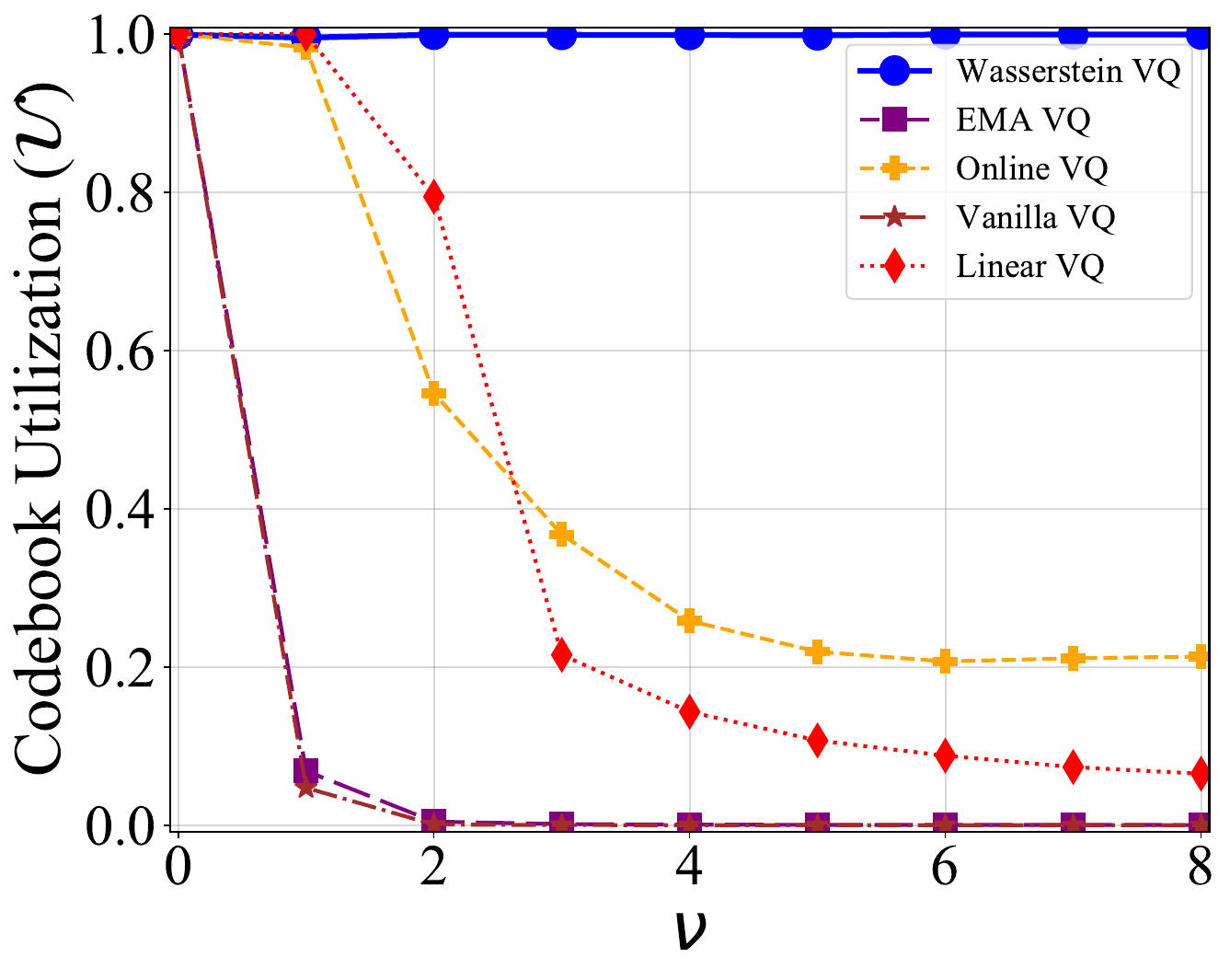}
        }
	\subfloat[$\mathcal{C}$ w.r.t. $\nu$]{         
        \label{fig:vq codebook perplexity mean(uniform)}  \includegraphics[width=0.225\textwidth]{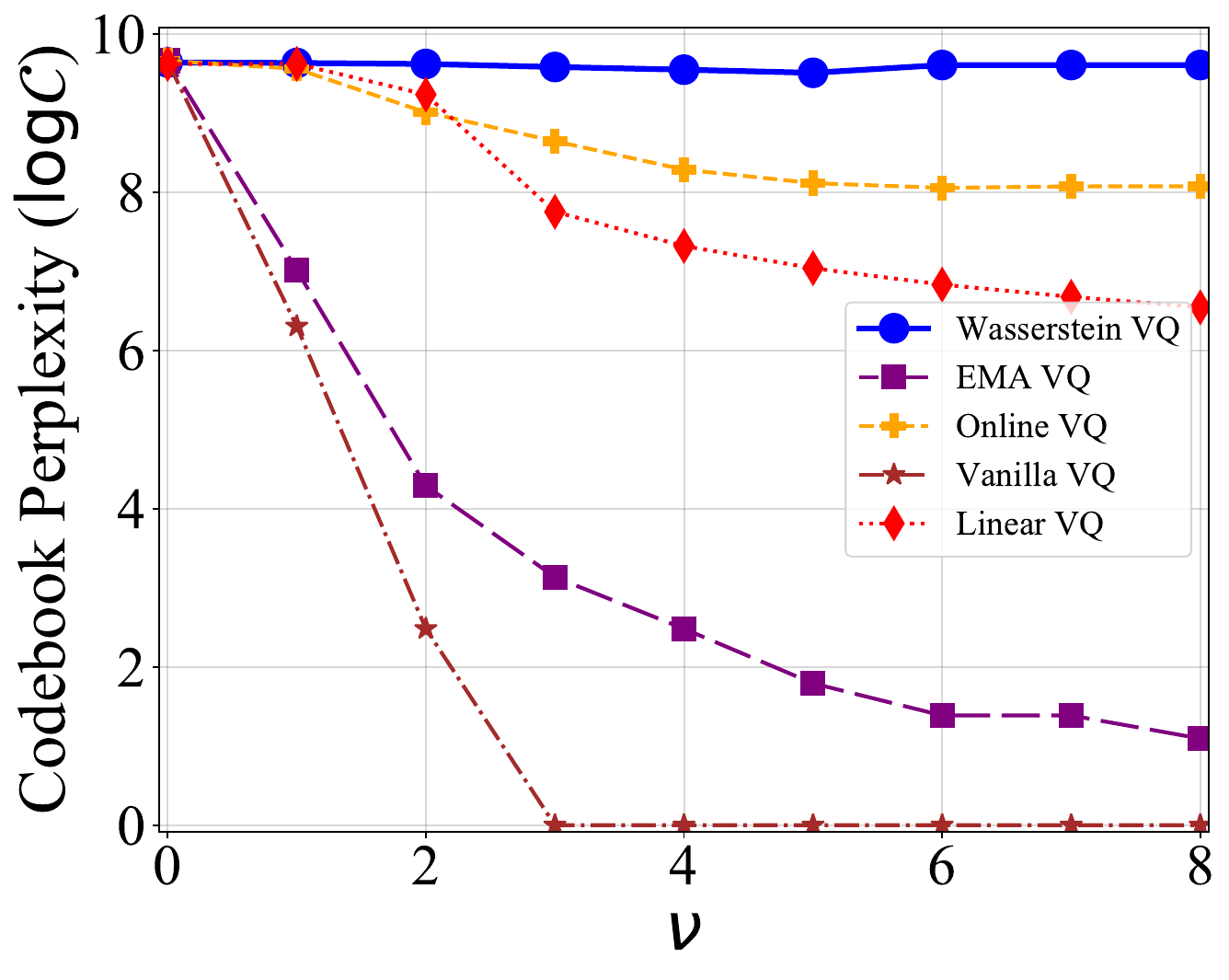}
        }
    \subfloat[$\mathcal{L}_{W}$ w.r.t. $\nu$]{     
        \label{fig:wasserstein distance mean(uniform)}  \includegraphics[width=0.225\textwidth]{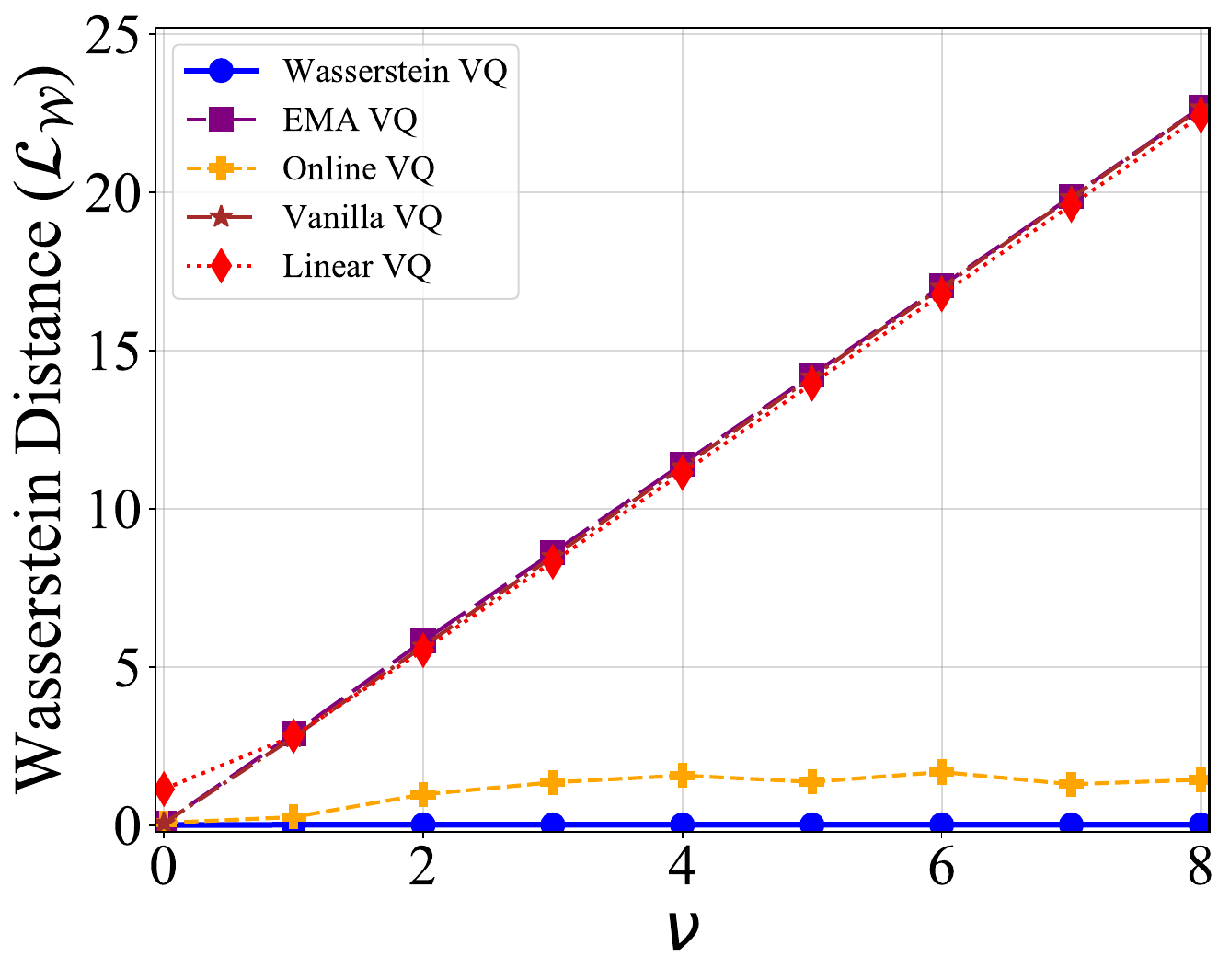}
        }
    \vspace{-2pt}
	\caption{The performance metrics $(\cE, \cU, \cC)$  for various VQ approaches. For panels (a) to (d), the codebook distribution is initialized as a Gaussian distribution, while for panels (e) to (h), the codebook distribution is initialized as a uniform distribution.}
	\label{fig:analysis on criterion triple (VQ)}
    \vspace{-3ex}
\end{figure*}

As discussed in Section~\ref{sec:distributional formulation}, the variance of the latent distribution has a substantial impact on $\mathcal{E}$. Therefore, a fair evaluation of the criterion triple should be conducted under identical latent distributions. However, in practical scenarios, either in VQ-VAE~\cite{Oord2017NeuralDR} or VQGAN~\cite{Esser2020TamingTF}, the encoder outputs exhibit significantly different distributions across quantization methods due to intrinsic differences in their algorithmic mechanisms. Moreover, these distributional discrepancies are not static but evolve dynamically throughout VQ training. As a result, directly comparing quantization errors fails to accurately reflect the intrinsic effectiveness of different VQ algorithms and may even lead to misleading conclusions. To address this, we introduce a controlled simulated experimental setting that enables a principled evaluation of the intrinsic behavior of VQ algorithms.

Specifically, we assume that the encoder output feature distributions of all VQ methods follow the same Gaussian distribution, i.e., $\bz_i \sim \mathcal{N}_d(\mu \cdot \bm 1, \bm I_d)$. Although real-world feature distributions are often more complex, this simplification isolates the effect of the quantization mechanism itself and allows for controlled and comparable comparisons across methods. For the codebook distribution, we initialize all VQ methods with the same standard Gaussian distribution (by sampling $\be_k \sim \mathcal{N}_d(\bm 0, \bm I_d)$), ensuring that the distribution variance is identical across all methods.


Our baselines include Vanilla VQ~\cite{Oord2017NeuralDR}, EMA VQ~\cite{Razavi2019GeneratingDH}, Online VQ~\cite{Zheng2023OnlineCC}, and Linear VQ, which employs a linear projection layer with frozen code vectors~\cite{Zhu2024ScalingTC,Zhu2024AddressingRC}. For all VQ algorithms except Linear VQ, the sampled code vectors are treated as trainable parameters and optimized according to their respective update rules. In the case of Linear VQ, we focus on training only the linear projection layer. Detailed experimental specifications are provided in Appendix~\ref{appendix:details part2}.

As visualized in Figures~\ref{fig:vq quantization error mean} to~\ref{fig:vq codebook perplexity mean}, \emph{Wasserstein VQ} outperforms all baselines in terms of the criterion triplet $(\cE, \cU, \mathcal{C})$, particularly when the feature distribution and the initialized codebook distribution exhibit large deviations. While existing VQ methods perform well under the idealized setting of $\mu=0$, this scenario is unrealistic: in practice, feature distributions are diverse and continuously evolving, making it infeasible to perfectly match the codebook distribution by codebook initialization. When a large initial distribution gap exists (e.g., $\mu=5$), existing methods fail to achieve effective distribution alignment and perform poorly, as depicted in Figure~\ref{fig:wasserstein distance mean}, highlighting their strong dependence on codebook initialization. In contrast, \emph{Wasserstein VQ} overcomes this limitation through explicit distributional matching regularization, thereby achieving superior performance across all criterion metrics.



We observe the same behavior when the feature and codebook distributions are uniform, with feature vectors sampled from $\unif_d(\nu-1, \nu+1)$ and code vectors initialized from $\unif_d(-1,1)$. As shown in Figures~\ref{fig:vq quantization error mean(uniform)} to~\ref{fig:wasserstein distance mean(uniform)}, \emph{Wasserstein VQ} again performs the best, suggesting that its effectiveness extends beyond Gaussian assumptions and exhibits distribution-agnostic behavior.

%% file: sections/experiment.tex
\section{Experiments}\label{sec:experiments}
In this section, we empirically demonstrate the effectiveness of the proposed distributional matching framework in visual tokenization tasks.
Experiments are conducted within the VQ-VAE~\cite{Oord2017NeuralDR} and VQGAN~\cite{Esser2020TamingTF} frameworks. 

\subsection{Evaluation on VQ-VAE Framework}
\label{sec:vqvae experiments}
\paragraph{Datasets and Baselines} Experiments are conducted on four benchmark datasets: two low-resolution datasets, i.e., CIFAR-10~\cite{Krizhevsky2009LearningML} and SVHN~\cite{Netzer2011ReadingDI},  and two high-resolution datasets FFHQ~\cite{Karras2018ASG} and ImageNet~\cite{Deng2009ImageNetAL}. We evaluate our approach against representative VQ methods:  Vanilla VQ~\cite{Oord2017NeuralDR}, EMA VQ~\cite{Razavi2019GeneratingDH}, which uses exponential moving average updates and is also referred to as $k$-means, Online VQ, which employs $k$-means++ in CVQ-VAE~\cite{Zheng2023OnlineCC}, and enhanced  straight-through estimator STE++~\cite{Huh2023StraighteningOT}. For experimental settings, see Appendix~\ref{appendix:experimental details}. 

\paragraph{Metrics}
We employ multiple evaluation metrics, including the codebook utilization rate ($\mathcal{U}$), codebook perplexity ($\mathcal{C}$), peak signal-to-noise ratio (PSNR), patch-level structural similarity index (SSIM), and pixel-level reconstruction loss (Rec. Loss).
We exclude the raw quantization error ($\mathcal{E}$) from the main VQ-VAE experiments, as $\mathcal{E}$ is highly sensitive to feature variance, which is not controlled under different training dynamics. Consequently, a direct comparison of $\mathcal{E}$ does not faithfully reflect the intrinsic effectiveness of VQ algorithms, as discussed in Section~\ref{sec:distributional formulation} and Section~\ref{sec:atomic setting}.


\begin{table*}[!t]
  \centering
  \vspace{-1ex}
  \caption{Comparison of VQ-VAEs trained on FFHQ dataset following~\cite{Oord2017NeuralDR}.}
  \label{tab:vqvae ffhq}
  \vspace{-1ex}
  \resizebox{0.9\textwidth}{!}{%
  \begin{tabular}{@{}lccccccc@{}} \hline
    Approaches & Tokens & Codebook Size & $\mathcal{U}$ ({\color{green!60!black}\bfseries$\pmb{\uparrow}$}) & $\mathcal{C}$ ({\color{green!60!black}\bfseries$\pmb{\uparrow}$}) & PSNR({\color{green!60!black}\bfseries$\pmb{\uparrow}$}) & SSIM({\color{green!60!black}\bfseries$\pmb{\uparrow}$}) & Rec. Loss({\color{green!60!black}\bfseries$\pmb{\downarrow}$}) \\\hline
    Vanilla VQ  & 256 & $16384$ & 3.8\% & 527.2 & 27.83 & 73.8 & 0.0119 \\ 				
    STE++ & 256 & $16384$ & 3.4\% & 476.7 & 27.54 & 72.3 & 0.0129 \\
    EMA VQ & 256 & $16384$ & 14.0\% & 1795.7 & 28.39 & 74.8 & 0.0106 \\
    Online VQ  & 256 & $16384$ & 11.7\% & 1115.3 & 27.68 & 72.6 & 0.0125 \\
    \textbf{Wasserstein VQ} & 256 & $16384$ & \textbf{100\%} &  \textbf{15713.3} & \textbf{29.03} & \textbf{76.6} & \textbf{0.0093} \\\hline
    Vanilla VQ & 256 & $50000$ & 1.2\% & 516.8 & 27.83 & 73.6 & 0.0120  \\ 				
    STE++ & 256 & $50000$ & 1.0\% & 447.2 & 27.49 & 72.4 & 0.0131\\
    EMA VQ & 256 & $50000$ & 10.3\% & 4075.7 & 28.61 & 75.3 & 0.0101 \\
    Online VQ & 256 & $50000$ & 6.0\% & 1642.9 & 28.37 & 74.6 & 0.0107  \\
    \textbf{Wasserstein VQ} & 256 & $50000$ & \textbf{100\%} & \textbf{47496.4} & \textbf{29.24} & \textbf{77.0} & \textbf{0.0089} \\\hline
    Vanilla VQ  & 256 & $100000$ & 0.6\% & 481.0 & 27.86 & 74.2 & 0.0118  \\ 				
    STE++ & 256 & $100000$ & 0.5\% & 450.7 & 27.52 & 72.4 & 0.0130\\
    EMA VQ & 256 & $100000$ & 2.7\% & 2087.5 & 28.43 & 74.8 & 0.0105  \\
    Online VQ & 256 & $100000$ & 3.6\% & 1556.8 & 27.12 & 71.1 & 0.0142 \\
    \textbf{Wasserstein VQ} & 256 & $100000$ & \textbf{100\%} & \textbf{93152.7} & \textbf{29.53} & \textbf{78.0} & \textbf{0.0083} \\\hline
  \end{tabular}
  \vspace{-4ex}
}
\end{table*}

\begin{table*}[!t]
  \centering
  \vspace{-1ex}
  \caption{Comparison of VQ-VAEs trained on ImageNet dataset following~\cite{Oord2017NeuralDR}.}
  \label{tab:vqvae imagenet}
  \vspace{-1ex}
  \small
  \resizebox{0.9\textwidth}{!}{%
  \begin{tabular}{@{}lccccccc@{}}
    \toprule
    \textbf{Methods} & \textbf{Tokens} & \textbf{Codebook Size} & $\mathcal{U}$ ({\color{green!60!black}\bfseries$\pmb{\uparrow}$}) & $\mathcal{C}$ ({\color{green!60!black}\bfseries$\pmb{\uparrow}$}) & \textbf{PSNR} ({\color{green!60!black}\bfseries$\pmb{\uparrow}$}) & \textbf{SSIM} ({\color{green!60!black}\bfseries$\pmb{\uparrow}$}) & \textbf{Rec. Loss} ({\color{green!60!black}\bfseries$\pmb{\downarrow}$}) \\
    \midrule
    Vanilla VQ & 256 & $16{,}384$ & 2.5\% & 360.7 & 24.44 & 57.5 & 0.0294 \\
    STE++ & 256 & $16{,}384$ & 6.5\% & 889.9 & 24.88 & 58.9 & 0.0270 \\
    EMA VQ & 256 & $16{,}384$ & 14.5\% & 1861.5 & 24.98 & 59.2 & 0.0267 \\
    Online VQ & 256 & $16{,}384$ & 22.2\% & 1465.6 & 24.88 & 58.6 & 0.0273 \\
    \textbf{Wasserstein VQ} & 256 & $16{,}384$ & \textbf{100.0\%} & \textbf{15539.1} & \textbf{25.47} & \textbf{61.2} & \textbf{0.0242} \\
    \midrule
    Vanilla VQ  & 256 & $50{,}000$ & 0.9\% & 378.7 & 24.40 & 57.7 & 0.0295 \\
    STE++ & 256 & $50{,}000$ & 2.0\% & 851.7 & 24.89 & 59.0 & 0.0270 \\
    EMA VQ  & 256 & $50{,}000$ & 16.8\% & 6139.3 & 25.37 & 60.9 & 0.0246 \\
    Online VQ & 256 & $50{,}000$ & 9.9\% & 2241.7 & 25.09 & 59.7 & 0.0260 \\
    \textbf{Wasserstein VQ} & 256 & $50{,}000$ & \textbf{100.0\%} & \textbf{46133.2} & \textbf{25.72} & \textbf{62.3} & \textbf{0.0230} \\
    \midrule
    Vanilla VQ & 256 & $100{,}000$ & 0.4\% & 337.0 & 24.43 & 57.4 & 0.0295 \\
    STE++ & 256 & $100{,}000$ & 0.9\% & 730.6 & 24.86 & 59.1 & 0.0269 \\
    EMA VQ  & 256 & $100{,}000$ & 3.0\% & 2170.0 & 25.13 & 60.1 & 0.0257 \\
    Online VQ  & 256 & $100{,}000$ & 4.1\% & 1709.9 & 24.95 & 59.1 & 0.0267 \\
    \textbf{Wasserstein VQ} & 256 & $100{,}000$ & \textbf{100.0\%} & \textbf{93264.7} & \textbf{25.88} & \textbf{63.0} & \textbf{0.0223} \\
    \bottomrule
  \end{tabular}
}
\vspace{-2ex}
\end{table*}

\paragraph{Main Results} 
As shown in Tables~\ref{tab:vqvae ffhq},~\ref{tab:vqvae imagenet} in the main text, and Tables~\ref{tab:vqvae cifar10},~\ref{tab:vqvae svhn} in Appendix~\ref{appendix:vqvae cifar-10 and svhn}, our proposed \emph{Wasserstein VQ} consistently outperforms all baselines across datasets, achieving superior performance on nearly all metrics under a wide range of settings. 
The improvements are particularly pronounced in codebook utilization ($\mathcal{U}$) and codebook perplexity ($\mathcal{C}$), where \emph{Wasserstein VQ} attains near-complete codebook utilization and substantially higher codebook perplexity. This indicates that the learned codebooks more faithfully reflect the structure and density of the feature space.
As vector quantization fundamentally acts as a compression mechanism from continuous latent representations to a discrete codebook, improved alignment between feature and code distributions directly translates into reduced information loss. Consistent with this interpretation, \emph{Wasserstein VQ} achieves the lowest reconstruction loss and improved PSNR/SSIM across datasets.

\begin{wrapfigure}[20]{r}{0.50\textwidth}
    \vspace{-6mm}
    \centering
    \includegraphics[width=0.98\linewidth]{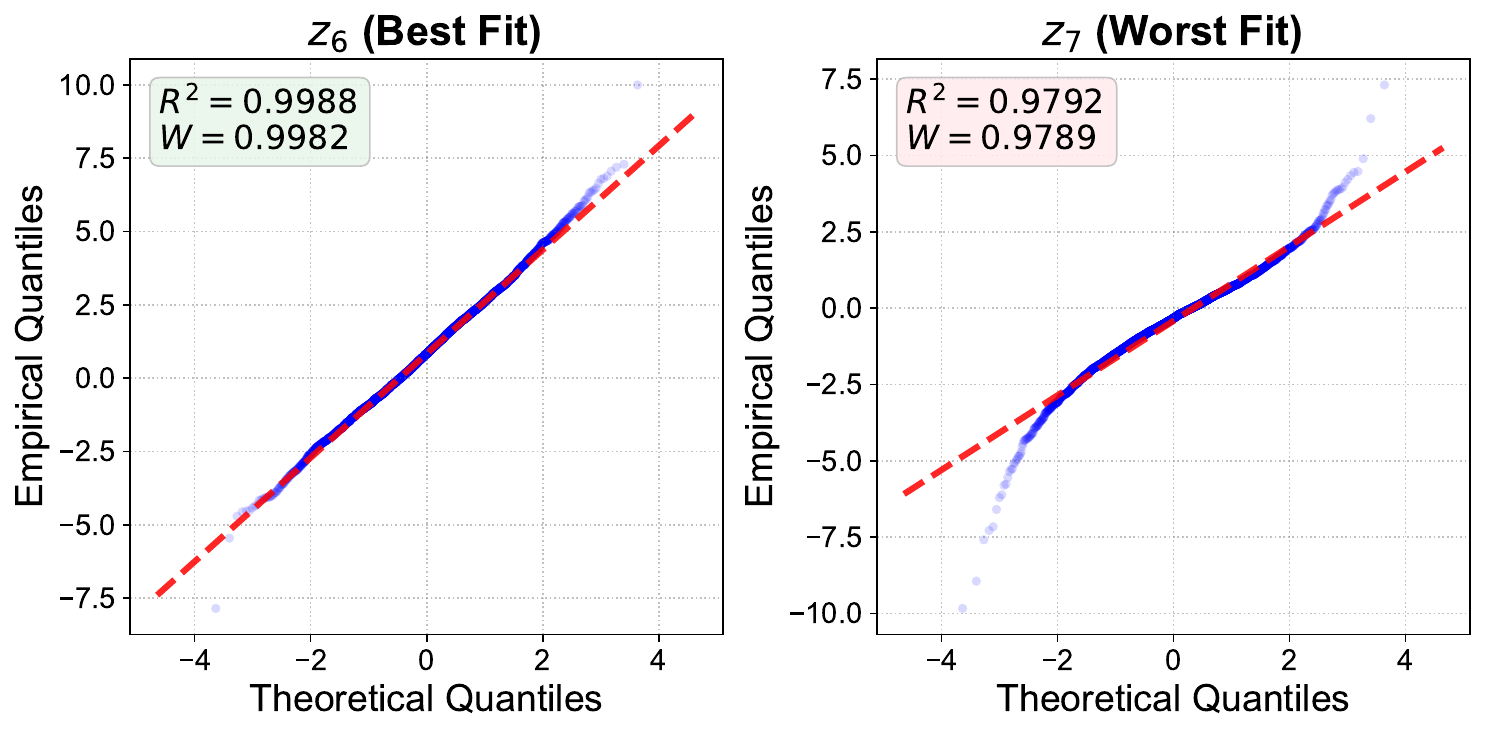} 
    \vspace{-2mm}
    \caption{\textbf{Gaussianity Validation via Q-Q Plots.} We visualize the feature dimensions with the highest $z_6$ (Left) and lowest ($z_7$, Right) conformance to the Gaussian hypothesis. \textbf{Note:} For visual clarity, plots display a random subset of 5,000 points, while the reported statistics ($R^2, W$) are computed on the \textbf{full validation set ($N=409,600$)}. The strong alignment with the red identity line ($y=x$) confirms that the learned features are predominantly Gaussian, with deviations confined to sparse heavy tails.}
    \label{fig:qq_main}
    \vspace{-6mm}
\end{wrapfigure} 

\paragraph{Gaussian Approximation Validation}
To empirically assess the Gaussian approximation underlying our Wasserstein objective, we analyze the distributional properties of the learned latent features.
Using the FFHQ dataset, we extract latent vectors from a subset of 1,600 images ($d=8$), resulting in a total of $N=409{,}600$ samples.
Figure~\ref{fig:qq_main} presents Quantile--Quantile (Q--Q) plots for the feature dimensions exhibiting the highest ($z_6$) and lowest ($z_7$) agreement with a Gaussian reference.
Under an ideal Gaussian model, the samples (blue points) align with the theoretical reference line ($y=x$).
As shown, the best-fitting dimension ($z_6$) demonstrates near-perfect alignment, with a coefficient of determination of $R^2 = 0.9988$.
Importantly, even for the least-fitting dimension ($z_7$), the Q--Q plot evaluated over the distribution achieves a high goodness-of-fit score ($R^2 > 0.979$).
While mild deviations appear in the extreme tails, the central mass of the distribution, which dominates the probability mass and governs the Wasserstein objective, is well approximated by a Gaussian.
These results support the use of a Gaussian-based Wasserstein formulation as an effective approximation for distributional matching.
A more comprehensive univariate and multivariate analysis, including Mahalanobis distance based normality assessment across all dimensions, is provided in Appendix~\ref{sec:normality_tests}.

\begin{table*}[!t]
  \centering
  \caption{Analysis of generalization ability from ImageNet to FFHQ dataset.}
  \label{tab:generalization ability}
  \vspace{-1ex}
  \resizebox{\textwidth}{!}{%
  \begin{tabular}{@{}lccccccc@{}} \hline
    Approaches  & Tokens & Codebook Dim & $\mathcal{U}$ ({\color{green!60!black}\bfseries$\pmb{\uparrow}$}) & $\mathcal{C}$ ({\color{green!60!black}\bfseries$\pmb{\uparrow}$}) & PSNR({\color{green!60!black}\bfseries$\pmb{\uparrow}$}) & SSIM({\color{green!60!black}\bfseries$\pmb{\uparrow}$}) & Rec. Loss ({\color{green!60!black}\bfseries$\pmb{\downarrow}$})\\\hline
    Vanilla VQ & 256 & 16384 & 1.7\% & 332.9 & 27.18 & 71.6 & 0.0138\\ 							
    EMA VQ & 256 & 16384 & 12.5\% & 1292.1 & 27.69 & 72.3 & 0.0126 \\ 					
    Online VQ & 256 & 16384 & 33.2\% & 2794.9 & 27.91 & 72.7 & 0.0120 \\ 				
    \textbf{Wasserstein VQ} & 256 & 16384 & 99.2\% & 13975.4 & 28.62 & 75.0 &  0.0103 \\ 	
    \hline
    \vspace{-3ex}
  \end{tabular}
}
\end{table*}

\begin{table*}[!t]
  \centering
  \vspace{-1ex}
  \caption{The computational overhead of various VQ approaches.}
  \label{tab:computational overhead}
  \vspace{-1ex}
  \resizebox{\textwidth}{!}{%
  \begin{tabular}{@{}lcccccc@{}} \hline
    Approaches  & Codebook Size & Times (second)  & Codebook Size & Times (second)  & Codebook Size & Times (second) \\\hline
    Vanilla VQ & 16384 & \textbf{0.184} & 50000 & \textbf{0.408}  & 100000 & \textbf{0.655} \\ 
    EMA VQ & 16384 & 0.265 & 50000 & 0.807 & 100000 & 1.502\\  			
    Online VQ & 16384 & 1.810 & 50000 & 5.438 & 100000 & 10.64\\ 	
    \textbf{Wasserstein VQ} & 16384 & \underline{0.294}  & 50000 & \underline{0.525} & 100000& \underline{0.774} \\ 	
    \hline
    \vspace{-6ex}
  \end{tabular}
}
\end{table*}

\paragraph{Representation Visualization} 
To examine the learned distribution of feature and code vectors across different VQ methods trained on the FFHQ dataset (with a fixed codebook size of $K=8192$), we randomly sample 3,000 feature vectors and 1,000 code vectors and visualize them via scatter plots. As shown in Figures~\ref{fig:codebook_feature 1} and~\ref{fig:codebook_feature 2}, in both Vanilla VQ and EMA VQ, most code vectors are concentrated near the origin, rendering them largely inactive. While Online VQ mitigates this central clustering, its code vectors are pushed toward the extremes of the feature space, as illustrated in Figure~\ref{fig:codebook_feature 3}. This distributional mismatch leads to increased information loss and reduced codebook utilization. By contrast, \emph{Wasserstein VQ} achieves substantially better alignment between feature and code vector distributions, significantly reducing information loss and improving codebook utilization.

\begin{figure*}[!t]
    \vspace{-2mm}
	\centering
	\subfloat[Vanilla VQ]{ 
		\label{fig:codebook_feature 1}  
		\includegraphics[width=0.225\textwidth]{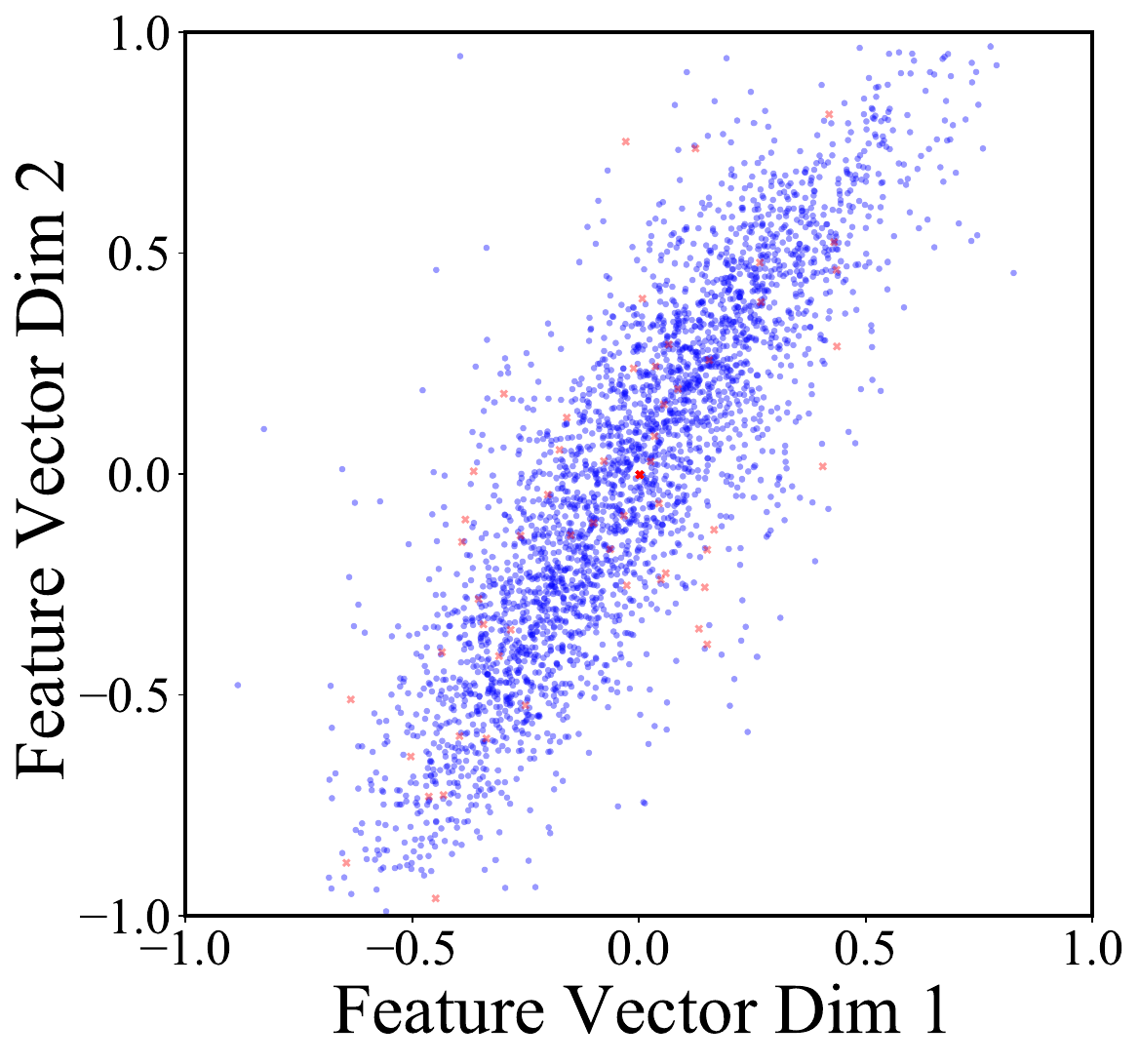}
        }
	\hspace{0.03cm}
	\subfloat[EMA VQ]{
		\label{fig:codebook_feature 2}
		\includegraphics[width=0.225\textwidth]{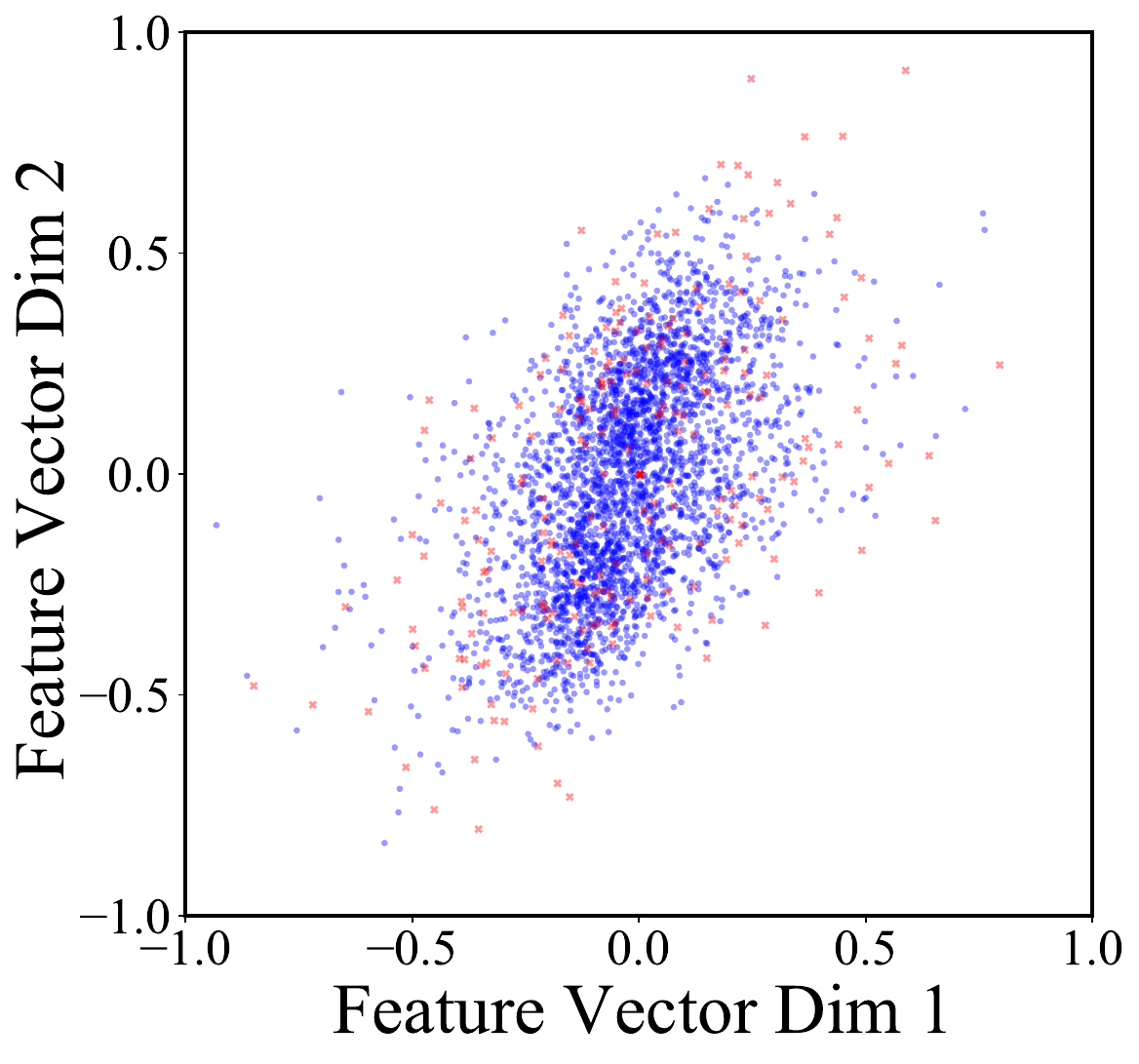}
	}
        \hspace{0.03cm}
	\subfloat[Online VQ]{ 
		\label{fig:codebook_feature 3}  
		\includegraphics[width=0.225\textwidth]{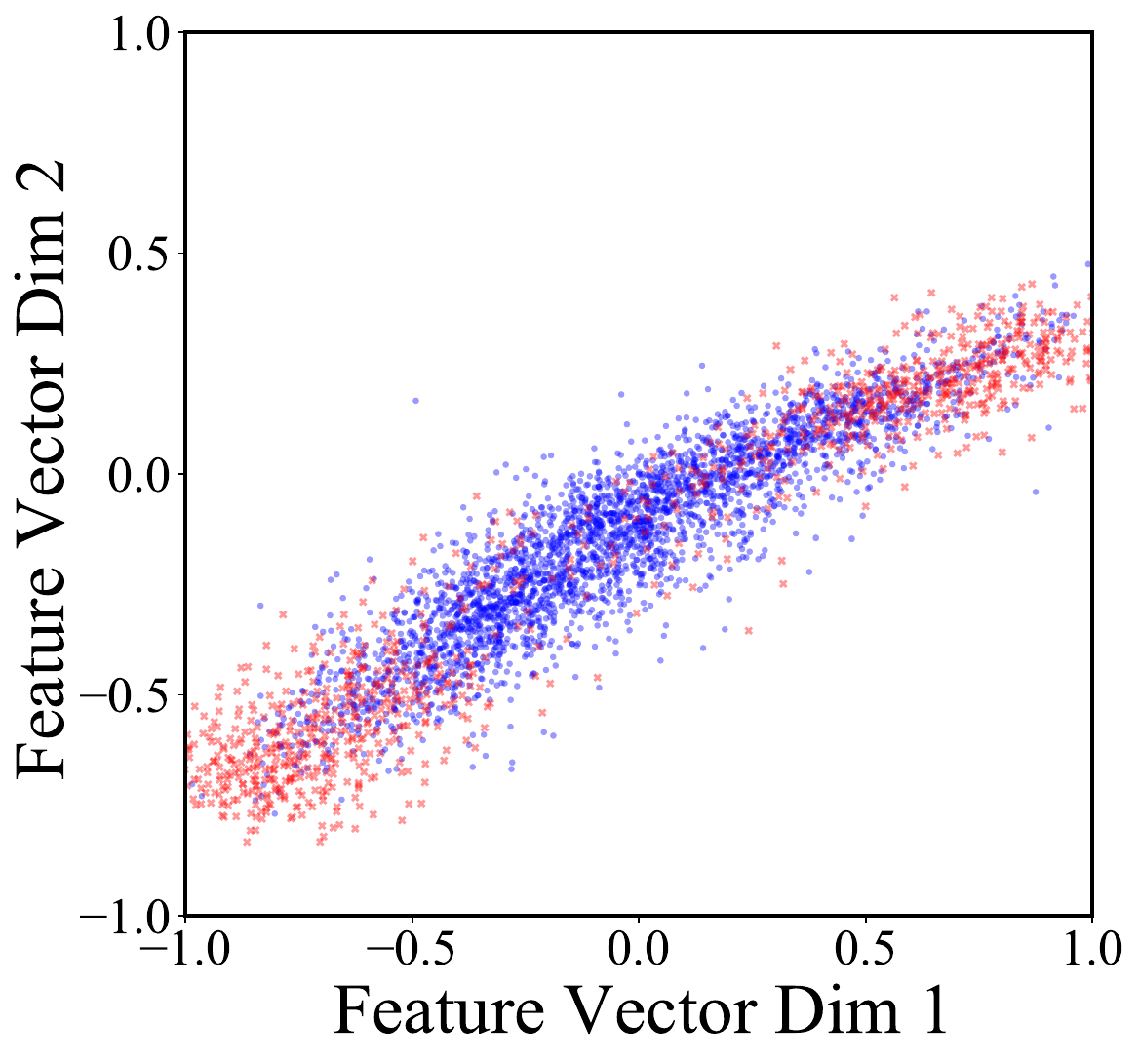}
	}
        \hspace{0.03cm}
	\subfloat[Wasserstein VQ]{
		\label{fig:codebook_feature 4}
		\includegraphics[width=0.225\textwidth]{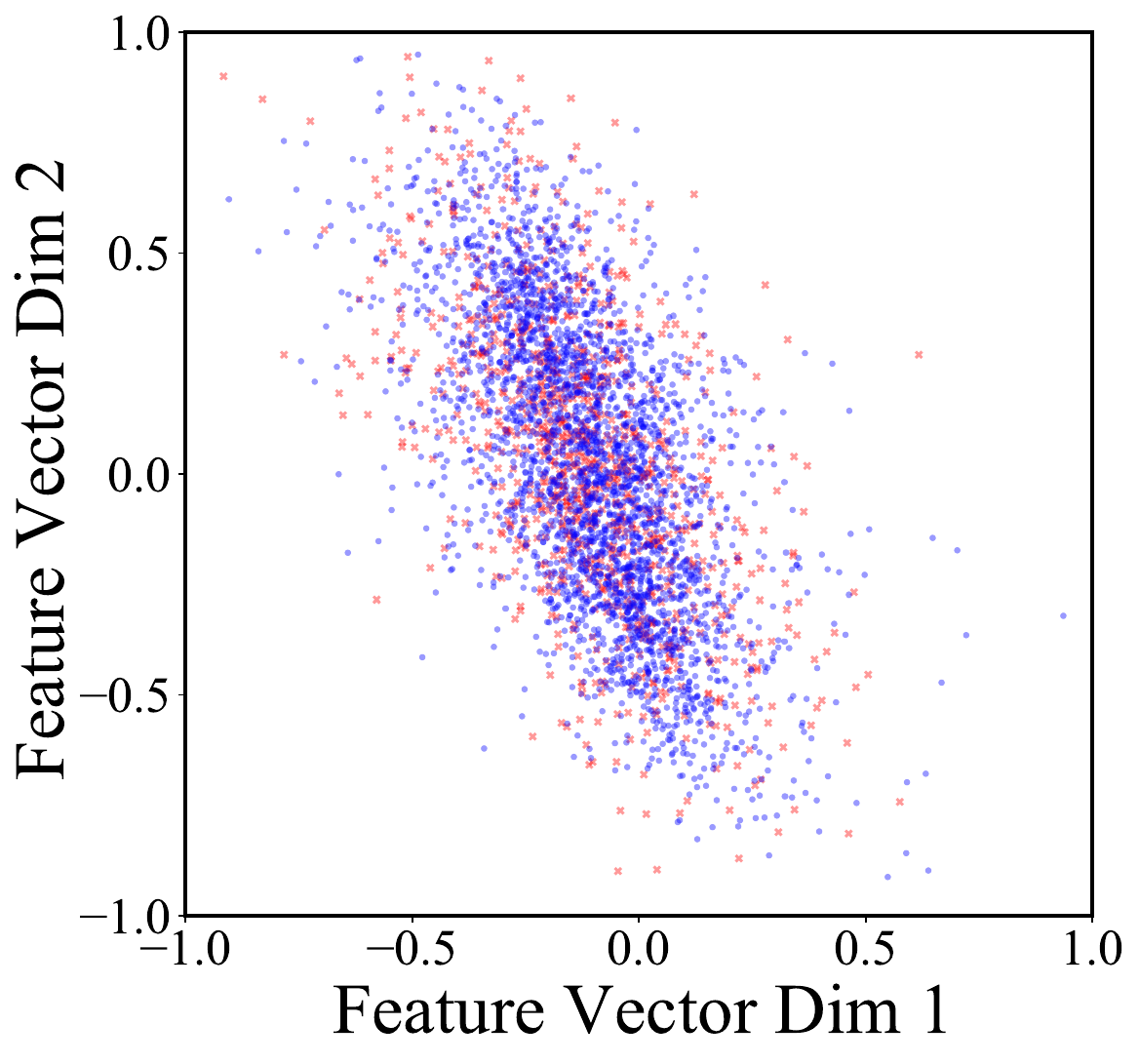}}
    \vspace{-2mm}
	\caption{Visualization of feature and codebook distributions. Blue $\cdot$ and red $\times$ represent the feature and code vectors, respectively.}
	\label{fig:feature codebook visualization}
    \vspace{-3mm}
\end{figure*}

\paragraph{Cross-Dataset Generalization}
To evaluate the model’s generalization capability, we conducted a cross-dataset transfer experiment. Specifically, we used the pre-trained checkpoint from Table~\ref{tab:vqvae imagenet} (trained on ImageNet) and evaluated its performance on the FFHQ dataset, which is out-of-distribution relative to ImageNet. The results are summarized in Table~\ref{tab:generalization ability}.

Among all methods, Wasserstein VQ achieves the highest performance in this transfer setting, indicating that the proposed distribution alignment objective does not impair generalization. Moreover, Wasserstein VQ’s strong performance on the out-of-distribution FFHQ dataset suggests that enforcing alignment may contribute to improved robustness under distribution shifts. Overall, these findings provide evidence that models trained with the proposed regularization generalize effectively to out-of-distribution data.

\paragraph{Computational Overhead Analyses} We evaluate the runtime efficiency by measuring the forward and backward pass times of the VQ module. As shown in Table~\ref{tab:computational overhead} in Appendix~\ref{appendix:computational overhead}, the runtime of Wasserstein VQ is comparable to Vanilla VQ, with only negligible overhead arising from the covariance computation.  This confirms that the closed-form Wasserstein objective offers a highly efficient pathway to optimal codebook utilization without incuring a significant time overhead. More implementation details see Appendix~\ref{appendix:computational overhead}.

\paragraph{Sensitivity Analysis on $\gamma$} 
We conduct a sensitivity analysis with respect to $\gamma$ on the FFHQ dataset by varying $\gamma \in \{0, 10^{-5}, 10^{-4}, 10^{-3}, 10^{-2}, 10^{-1}, 1\}$. As reported in Table~\ref{tab:sensitivity analysis} in Appendix~\ref{appendix:sensitivity analysis}, when $\gamma = 0$, Wasserstein VQ yields the worst performance, as it degenerates into the vanilla VQ formulation without distributional matching. As $\gamma$ increases from $10^{-5}$ to $10^{-3}$, the performance of Wasserstein VQ consistently improves. When $\gamma$ reaches $10^{-2}$, Wasserstein VQ achieves full ($100\%$) codebook utilization along with competitive quantitative results. Moreover, within the range $\gamma \in [10^{-2}, 1]$, the performance of Wasserstein VQ remains stable, indicating that the method is not sensitive to the precise choice of $\gamma$ once it exceeds a moderate threshold.

\paragraph{Effect of Codebook Size on FFHQ} To investigate the impact of codebook size on VQ performance, we vary the codebook size $K$  across a wide range: $K \in \{1024, 2048, 4096, 8192, 16384, 50000, 100000\}$. As shown in Table~\ref{tab:vqvae ffhq} in the main text and Table~\ref{tab:vqvae ffhq codebook size} in Appendix~\ref{appendix:ablation study}, the vanilla VQ model suffers from severe codebook collapse even at relatively small sizes, such as $K=1024$. In contrast, improved variants like EMA VQ and Online VQ handle smaller codebooks effectively, but still exhibit codebook collapse when $K$ becomes very large (e.g., $K \geq 50000$).  In contrast, \emph{Wasserstein VQ} consistently maintains 100\% codebook utilization across all codebook sizes, highlighting the effectiveness of enforcing distributional alignment via the quadratic Wasserstein distance in mitigating codebook collapse.

\paragraph{Effect of Codebook Dimensionality} 
We investigate the impact of codebook dimensionality on VQ performance by conducting experiments on CIFAR-10 with $d$ ranging from 2 to 32. As reported in Table~\ref{tab:vqvae cifar-10 codebook dimension} (Appendix~\ref{appendix:ablation study}), our proposed Wasserstein VQ consistently outperforms all baselines across all dimensionalities. We also observe a manifestation of the curse of dimensionality: performance generally degrades as $d$ increases. Vanilla VQ experiences the most severe decline, followed by EMA VQ and Online VQ, whereas Wasserstein VQ exhibits only minimal reduction in codebook utilization.


\begin{table*}[!t]
\centering
\vspace{-1ex}
\caption{
    {Reconstruction performance on the ImageNet-1K dataset. The suffixes "–a," "–b," and "–c" correspond to decoder adaptation for 5, 10, and 15 epochs, respectively. The best-performing result is highlighted in bold. Results marked with $^{\dagger}$ are cited from VQGAN-LC \cite{Zhu2024ScalingTC}, those with $^{\star}$ from Llama GEN \cite{Sun2024AutoregressiveMB}, and those with $^{\ddagger}$ from VQ-Transplant \cite{Fang2025VQTransplant}. }
    }
\vspace{-1ex}
\small
\resizebox{0.97\textwidth}{!}{
\begin{tabular}{@{}lcccccccc@{}}
\toprule
\textbf{Methods} & \textbf{Tokens} & \textbf{Codebook Size} & $\mathcal{U}$ ({\color{green!60!black}\bfseries$\pmb{\uparrow}$}) & \textbf{r-FID} ({\color{green!60!black}\bfseries$\pmb{\downarrow}$}) & \textbf{r-IS} ({\color{green!60!black}\bfseries$\pmb{\uparrow}$}) & \textbf{LPIPS} ({\color{green!60!black}\bfseries$\pmb{\downarrow}$}) & \textbf{PSNR} ({\color{green!60!black}\bfseries$\pmb{\uparrow}$}) & \textbf{SSIM} ({\color{green!60!black}\bfseries$\pmb{\uparrow}$})\\
\midrule
DQVAE$^{\dagger}$ & 256  & $1{,}024$& - & 4.08  & - & - & - & - \\
DiVAE$^{\dagger}$ & 256 & $16{,}384$ & - & 4.07 & - & - & - & - \\
RQVAE$^{\dagger}$ & 256 & $16{,}384$ & - &  3.20 & - & - & - & -\\
RQVAE$^{\dagger}$ & 512 & $16{,}384$ & - & 2.69 & - & - & - & -\\
RQVAE$^{\dagger}$ & $1{,}024$& $16{,}384$ & - & 1.83 & - & - & -  & -\\
DF-VQGAN$^{\dagger}$ & 256 & $12{,}288$ & - & 5.16 & - & - & - & - \\
DF-VQGAN$^{\dagger}$ & $1{,}024$& $8{,}192$& - & 1.38 & - & - & - & - \\
Llama GEN$^{\star}$ & 256 & $16{,}384$ & 97.0\% & 2.19 & -  & - &  20.79 & 67.5 \\
\midrule
\multirow{3}{*}{VQGAN$^{\dagger}$}  & 256 & $16{,}384$ & 3.4\% & 5.96 & - & 0.17 & 23.3& 52.4 \\
& 256 &  $50{,}000$ & 1.1\% & 5.44 & - & 0.17 & 22.5 & 52.5 \\
& 256 &  $100{,}000$ & 0.5\% &  5.44 & - & 0.17 & 22.3 & 52.5 \\
\midrule
\multirow{3}{*}{VQGAN-FC$^{\dagger}$} & 256 & $16{,}384$ & 11.2\%  & 4.29 & - & 0.17 & 22.8 & 54.5 \\ 
& 256 & $50{,}000$ & 3.6\%  & 4.96 & - & 0.15 & 23.1 & 54.7 \\
& 256 & $100{,}000$ & 1.9\%  & 4.65 & - & 0.15 & 22.9 & 55.1 \\
\midrule
\multirow{3}{*}{VQGAN-EMA$^{\dagger}$} & 256 & $16{,}384$ & 83.2\% & 3.41 & -  & 0.14 & 23.5 & 56.6 \\
& 256 & $50{,}000$ &  40.2\% & 3.88 & -  & 0.14 & 23.2 & 55.9 \\
& 256 & $100{,}000$ & 24.2\% & 3.46 & -  &  0.13 & 23.4 & 56.2 \\
\midrule
\multirow{4}{*}{VQGAN-LC$^{\dagger}$} & 256 & $16{,}384$ & \textbf{99.9\%}  & 3.01 & - & 0.13 & 23.2 & 56.4 \\
 & 256 & $50{,}000$ & \textbf{99.9\%}  & 2.75 & - & 0.13 &  23.8 &  58.4 \\
 & 256 & $100{,}000$ & \textbf{99.9\%}  & 2.62 & - &  0.12 &  23.8 & 58.9 \\
 & $1{,}024$& $100{,}000$ & 99.5\%  & 1.29 & - &  \textbf{0.07} &  \textbf{27.0} &  \textbf{71.6} \\\midrule
\midrule
\multirow{3}{*}{\textbf{Wasserstein VQ-a}$^{\ddagger}$} & 512 & $16{,}384$ & 99.8\% & 1.04 & 191.3 & 0.114 & 24.36 & 64.0 \\
& 512 & $32{,}768$&  99.7\% & 0.98 &  193.9 & 0.111 & 24.37 & 64.3 \\
& 512 & $65{,}536$& 99.6\% & 0.92 & 195.5 & 0.106 & 24.68 & 65.4 \\\midrule
\multirow{3}{*}{\textbf{Wasserstein VQ-b}} & 512 & $16{,}384$ & 99.8\% & 0.98 & 192.9 & 0.114 & 24.34 & 63.7\\
& 512 & $32{,}768$& 99.8\% & 0.88 & 196.2 & 0.109 & 24.60 & 64.7 \\
& 512 & $65{,}536$& 99.6\% & 0.81 & \textbf{198.7} & 0.105 & \textbf{24.77} & \textbf{65.5}\\\midrule
\multirow{3}{*}{\textbf{Wasserstein VQ-c}} & 512 & $16{,}384$ & 99.8\% & 0.90 & 194.1 & 0.114 & 24.27 & 63.4\\
& 512 & $32{,}768$& 99.8\% & 0.85 & 196.4 & 0.109 & 24.43 & 64.1 \\
& 512 & $65{,}536$& 99.6\% & \textbf{0.79} & 198.5 & \textbf{0.104} & 24.73 & 65.2 \\
\bottomrule
\end{tabular}}
\vspace{-3ex}
\label{tab:reconstruction_IN-main table}
\end{table*}

\begin{table}[!t]
\centering
\caption{
    {Reconstruction performance on the ImageNet-1K dataset for multi-scale quantization algorithms. The suffixes "–a," "–b," and "–c" correspond to decoder adaptation for 5, 10, and 15 epochs, respectively. For each codebook size, the best-performing result is highlighted in bold. $^{\ddagger}$: results are cited from VQ-Transplant \cite{Fang2025VQTransplant}. }
    }
\small
\resizebox{0.97\textwidth}{!}{
\begin{tabular}{@{}lcccccc@{}}
\toprule
\textbf{Methods} & \textbf{Tokens} & \textbf{Codebook Size} & $\mathcal{E}$ ({\color{green!60!black}\bfseries$\pmb{\downarrow}$}) & $\mathcal{U}$ ({\color{green!60!black}\bfseries$\pmb{\uparrow}$}) & $\mathcal{C}$ ({\color{green!60!black}\bfseries$\pmb{\uparrow}$}) & \textbf{r-FID} ({\color{green!60!black}\bfseries$\pmb{\downarrow}$}) \\\midrule
VAR$^{\ddagger}$ & 680 & $4{,}096$& 0.283 & \textbf{100\%} & $2{,}981.4$ & 0.92\\\midrule
\multirow{2}{*}{Vanilla VAR-a$^{\ddagger}$} & 680 & $4{,}096$& 0.305 & 38.2\% & $1{,}300$.4 & 1.25 \\
& 680 & $8{,}192$& 0.309 & 22.9\% & $1{,}422$.6 & 1.30\\\midrule
\multirow{2}{*}{EMA VAR-a$^{\ddagger}$} & 680 & $4{,}096$& 0.321 & 99.9\% & $1{,}806$.6 &  1.69 \\
& 680 & $8{,}192$&  0.312 & 99.8\% & $2{,}498$.8 &  1.15\\\midrule
\multirow{2}{*}{Online VAR-a$^{\ddagger}$} & 680 & $4{,}096$&  0.276 &  99.0\% & $1{,}950$.9 & 1.05\\
& 680 & $8{,}192$& 0.269 & 73.9\% & $3{,}588$.6 & 1.00\\\midrule
\multirow{2}{*}{\textbf{Wasserstein VAR-a}$^{\ddagger}$} & 680 & $4{,}096$& \textbf{0.255} & \textbf{100\%} &  \textbf{$3{,}286.2$} & 0.93 \\
& 680 & $8{,}192$&  \textbf{0.240}  & \textbf{100\%} & \textbf{$6{,}518.2$}
 & 0.83\\\midrule
\multirow{2}{*}{\textbf{Wasserstein VAR-b}} & 680 & $4{,}096$& \textbf{0.255} & \textbf{100\%} & \textbf{$3{,}286.2$} & 0.88 \\
& 680 & $8{,}192$&  \textbf{0.240}  & \textbf{100\%} & \textbf{$6{,}518.2$}
 & 0.79 \\\midrule
\multirow{2}{*}{\textbf{Wasserstein VAR-c}} & 680 & $4{,}096$& \textbf{0.255} & \textbf{100\%} & \textbf{$3{,}286.2$} & \textbf{0.81} \\
& 680 & $8{,}192$&  \textbf{0.240}  & \textbf{100\%} & \textbf{$6{,}518.2$}
 & \textbf{0.73}\\
\bottomrule
\end{tabular}}
\vspace{-4ex}
\label{tab:reconstruction_IN-multi-scale}
\end{table}

\subsection{Evaluation on the VQGAN Framework}

\paragraph{Datasets and Baselines} We evaluate our proposed method against a comprehensive set of baselines on the ImageNet and FFHQ datasets. For ImageNet, the compared methods include DQVAE~\cite{Huang2023TowardsAI}, DF-VQGAN~\cite{Ni2022NWALIPLI}, DiVAE~\cite{Shi2022DiVAEPI}, RQVAE~\cite{Lee2022AutoregressiveIG}, VQGAN~\cite{Esser2020TamingTF}, VQGAN-FC~\cite{Yu2021VectorquantizedIM}, VQGAN-EMA~\cite{Razavi2019GeneratingDH}, VQGAN-LC~\cite{Zhu2024ScalingTC}, Llama GEN~\cite{Sun2024AutoregressiveMB}, and VAR~\cite{Tian2024VisualAM}.
For FFHQ, we compare against RQVAE~\cite{Lee2022AutoregressiveIG}, VQGAN~\cite{Esser2020TamingTF}, VQGAN-FC~\cite{Yu2021VectorquantizedIM}, VQGAN-EMA~\cite{Razavi2019GeneratingDH}, VQ-WAE~\cite{Vuong2023VectorQW}, MQVAE~\cite{Huang2023NotAI}, and VQGAN-LC~\cite{Zhu2024ScalingTC}. Implementation details see Appendix~\ref{appendix:experimental details}.

\paragraph{Metrics} We report standard reconstruction and perceptual quality metrics, including Fréchet Inception Distance (r-FID)~\cite{Heusel2017GANsTB},
Reconstruction Inception Score (r-IS)~\cite{Salimans2016ImprovedTF},
Learned Perceptual Image Patch Similarity (LPIPS)~\cite{Zhang2018TheUE},
Peak Signal-to-Noise Ratio (PSNR), and Structural Similarity Index Measure (SSIM).

\begin{table*}[!t]
\centering
\caption{{Reconstruction performance on the FFHQ dataset. Results marked with $^{\dagger}$ are cited from VQGAN-LC \cite{Zhu2024ScalingTC}, and those with $^{\ddagger}$ from VQ-Transplant~\cite{Fang2025VQTransplant}.}}
\vspace{-1ex}
\label{tab:vqgan tranplant ffhq}
\resizebox{0.96\textwidth}{!}{%
\begin{tabular}{@{}lcccccccc@{}}
\toprule
VQs & Tokens & Codebook Size $K$ & $\mathcal{E}$({\color{green!60!black}\bfseries$\pmb{\downarrow}$}) & $\mathcal{U}$ ({\color{green!60!black}\bfseries$\pmb{\uparrow}$}) & PSNR({\color{green!60!black}\bfseries$\pmb{\uparrow}$}) & SSIM({\color{green!60!black}\bfseries$\pmb{\uparrow}$}) & LPIPS ({\color{green!60!black}\bfseries$\pmb{\downarrow}$}) & r-FID({\color{green!60!black}\bfseries$\pmb{\downarrow}$}) \\
\midrule
RQVAE$^{\dagger}$ & 256 & 2048 & - & - & 22.9 & 67.0 & 0.13 & 7.04  \\
VQ-WAE$^{\dagger}$ & 256 & 1024 & - & -  & 22.5 & 66.5 & 0.12 & 4.20 \\
MQVAE$^{\dagger}$ & 256 & 1024 & - & 78.2\% & - & - & -  & 4.55 \\
VQGAN$^{\dagger}$ & 256 & 16384 & - & 2.3\% & 24.4 & 63.3 & 0.12 & 5.25  \\
VQGAN-FC$^{\dagger}$ & 256 & 16384  & - & 10.9\% & 24.8 & 64.6 & 0.11 & 4.86  \\
VQGAN-EMA$^{\dagger}$  & 256 & 16384  & - & 68.2\% & 25.4 & 66.1 & 0.10 & 4.79   \\
VQGAN-LC$^{\dagger}$ & 256 & 100000  & - & 99.5\% & 26.1 & 69.4 & 0.08 & 3.81   \\\midrule
\textbf{Wasserstein VQ$^{\ddagger}$} & 512 & 16384 & \textbf{0.153} & \textbf{99.7\%}   &  \textbf{27.25} & \textbf{75.4} & \textbf{0.075} & \textbf{1.81} \\
\textbf{Wasserstein VQ$^{\ddagger}$} & 512 & 32768  & \textbf{0.142} & \textbf{99.7\%}  & \textbf{27.33} & \textbf{75.7} & \textbf{0.072} & \textbf{1.21} \\
\bottomrule
\vspace{-4ex}
\end{tabular}
}
\end{table*}

\begin{figure*}[!t]
    \centering
    \includegraphics[width=0.96\linewidth]{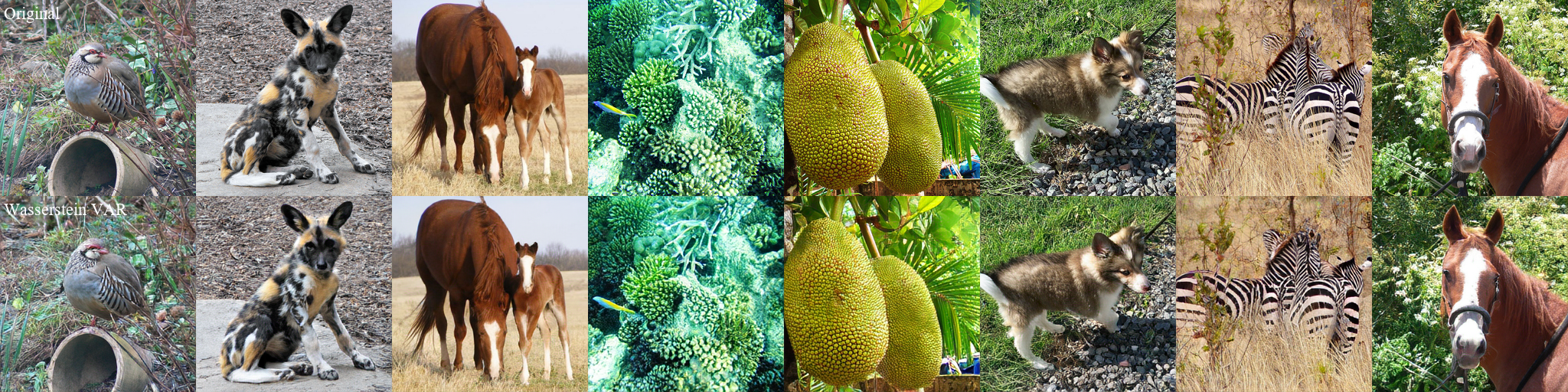}
    \vspace{-1ex}
    \caption{Visualization of reconstructed ImageNet Images. The top row displays the original input images with a resolution of $256 \times 256$ pixels, while the bottom row shows the reconstructed images from the \emph{Wasserstein VAR}.}
    \label{fig:visual reconstruction imagenet}
    \vspace{-2ex}
\end{figure*}

\begin{figure*}[!t]
    \centering    \includegraphics[width=0.96\linewidth]{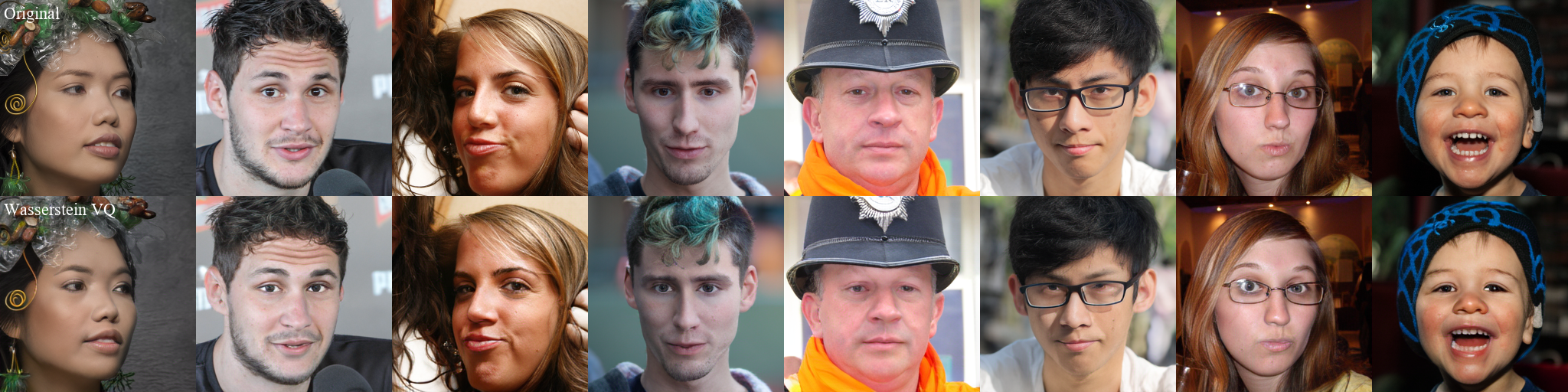}
    \vspace{-1ex}
    \caption{Visualization of reconstructed FFHQ Images. The top row displays the original input images with a resolution of $256 \times 256$ pixels, while the bottom row shows the reconstructed images from the \emph{Wasserstein VQ}.}
    \label{fig:visual reconstruction ffhq}
    \vspace{-3ex}
\end{figure*}

\paragraph{Main Results} As shown in Tables~\ref{tab:reconstruction_IN-main table} and~\ref{tab:vqgan tranplant ffhq}, our proposed \emph{Wasserstein VQ} achieves the lowest r-FID within the VQGAN framework. 
We further evaluate multi-scale quantization algorithms in Table~\ref{tab:reconstruction_IN-multi-scale}. All experiments in this table are conducted under fully controlled conditions: the encoder–decoder architecture is fixed to VAR~\cite{Tian2024VisualAM}, and, importantly, the latent features are kept identical across all methods. This ensures that measurements of codebook utilization, perplexity, and quantization error are not influenced by variations in the latent features, as discussed in Section~\ref{sec:distributional formulation}.

Under these controlled conditions, \emph{Wasserstein VAR} consistently demonstrates the strongest overall performance, with particularly notable improvements in quantization error. Reconstruction quality further improves as the number of decoder adaptation epochs increases, reflecting stronger adversarial refinement. For instance, when $K=8192$, the r-FID decreases from \textbf{0.83} (Wasserstein VAR-a) to \textbf{0.73} (Wasserstein VAR-c).



\paragraph{Reconstruction Visualization}
To further demonstrate the reconstruction performance, we visualize the reconstructed FFHQ and ImageNet images in Figure~\ref{fig:visual reconstruction ffhq} and~\ref{fig:visual reconstruction imagenet}. As shown, the images reconstructed by our method are nearly identical to the original inputs. In particular, on the FFHQ dataset, only very minor differences can be observed in fine details. 
\vspace{-1ex}

\begin{table}[!t]
\centering
\caption{
    {Reconstruction performance on the ImageNet-1K dataset for multi-scale quantization algorithms. The suffixes “–a,” “–b,” and “–c” correspond to decoder adaptation for 5, 10, and 15 epochs, respectively. For each codebook size, the best-performing result is highlighted in bold. }
    }
\resizebox{0.95\textwidth}{!}{
\begin{tabular}{@{}lcccccccc@{}}
\toprule
Methods & Tokens  & Codebook Size $K$ & $\mathcal{E}$ ({\color{green!60!black}\bfseries$\pmb{\downarrow}$}) & $\mathcal{U}$ ({\color{green!60!black}\bfseries$\pmb{\uparrow}$}) & $\mathcal{C}$ ({\color{green!60!black}\bfseries$\pmb{\uparrow}$}) & r-FID ({\color{green!60!black}\bfseries$\pmb{\downarrow}$}) \\\midrule
\multirow{2}{*}{MMD VAR-a} & 680 & 4096 & \textbf{0.255} & \textbf{100\%} & \textbf{3757.3}  &   \textbf{0.91}\\
& 680 & 8192 &  \textbf{0.234}  & \textbf{100\%} & \textbf{7539.4} &  \textbf{0.81}\\
\multirow{2}{*}{\textbf{Wasserstein VAR-a}} & 680 & 4096 & \textbf{0.255} & \textbf{100\%} &  3286.2 & 0.93 \\
& 680 & 8192 &  0.240  & \textbf{100\%} & 6518.2
 & 0.83\\\midrule
\multirow{2}{*}{MMD VAR-b} & 680 & 4096 & \textbf{0.255} & \textbf{100\%} & \textbf{3757.3}  &  \textbf{0.87}\\
& 680 & 8192 &   \textbf{0.234} & \textbf{100\%} & \textbf{7539.4} 
 & \textbf{0.78}\\
\multirow{2}{*}{\textbf{Wasserstein VAR-b}} & 680 & 4096 & \textbf{0.255} & \textbf{100\%} & 3286.2 & 0.88 \\
& 680 & 8192 &  0.240 & \textbf{100\%} & 6518.2
 & 0.79 \\\midrule
\multirow{2}{*}{MMD VAR-c} & 680 & 4096 & \textbf{0.255} & \textbf{100\%} & \textbf{3757.3}  &   0.82\\
& 680 & 8192 &  \textbf{0.234}  & \textbf{100\%} & 
 \textbf{7539.4} & 0.75\\
\multirow{2}{*}{\textbf{Wasserstein VAR-c}} & 680 & 4096 & \textbf{0.255} & \textbf{100\%} & 3286.2 & \textbf{0.81} \\
& 680 & 8192 &  0.240  & \textbf{100\%} & 6518.2
 & \textbf{0.73}\\
\bottomrule
\end{tabular}}
\vspace{-1.5ex}
\label{tab:mmd vs wasserstein}
\end{table}

\section{Discussion on Two Distributional Matching Approaches: Wasserstein VQ vs. MMD VQ}
\label{sec:mmd-vs-wasserstein}

We compare Wasserstein VQ and MMD VQ from three perspectives: visual tokenization performance, computational efficiency, and robustness to non-Gaussian distributions. First, we assess visual tokenization performance under the VQGAN framework. As shown in Table~\ref{tab:mmd vs wasserstein}, we evaluate performance with decoder adaptation conducted for 5, 10, and 15 epochs. Overall, the two methods achieve very similar performance, with Wasserstein VQ slightly outperforming MMD VQ when the decoder adaptation is extended to 15 epochs.

Second, we analyze the computational cost of the two approaches. Following the setup in Appendix~\ref{appendix:computational overhead}, we measure runtime efficiency by recording forward and backward pass times over 100 iterations. We consider two scenarios: (i) fixing the number of data samples at $N = 8192$ while varying the codebook size $K$ from 128 to 32,768, and (ii) fixing the codebook size at $K = 8192$ while varying the number of data samples $N$ from 128 to 32,768, providing complementary comparisons. As illustrated in Figures~\ref{fig:computational-codebook} and~\ref{fig:computational-data}, MMD VQ exhibits a rapidly increasing computational cost, reflecting its inefficiency. This difference arises because Wasserstein VQ performs distribution matching by aligning first- and second-order statistics, whereas MMD VQ relies on pairwise distances between all elements. Consequently, the computational complexity of MMD VQ grows substantially, especially when both the number of data samples and the codebook size are large.

\begin{figure}[!t]
	\centering
	\subfloat[Codebook Size]{ 
		\label{fig:computational-codebook}  \includegraphics[width=0.475\textwidth]{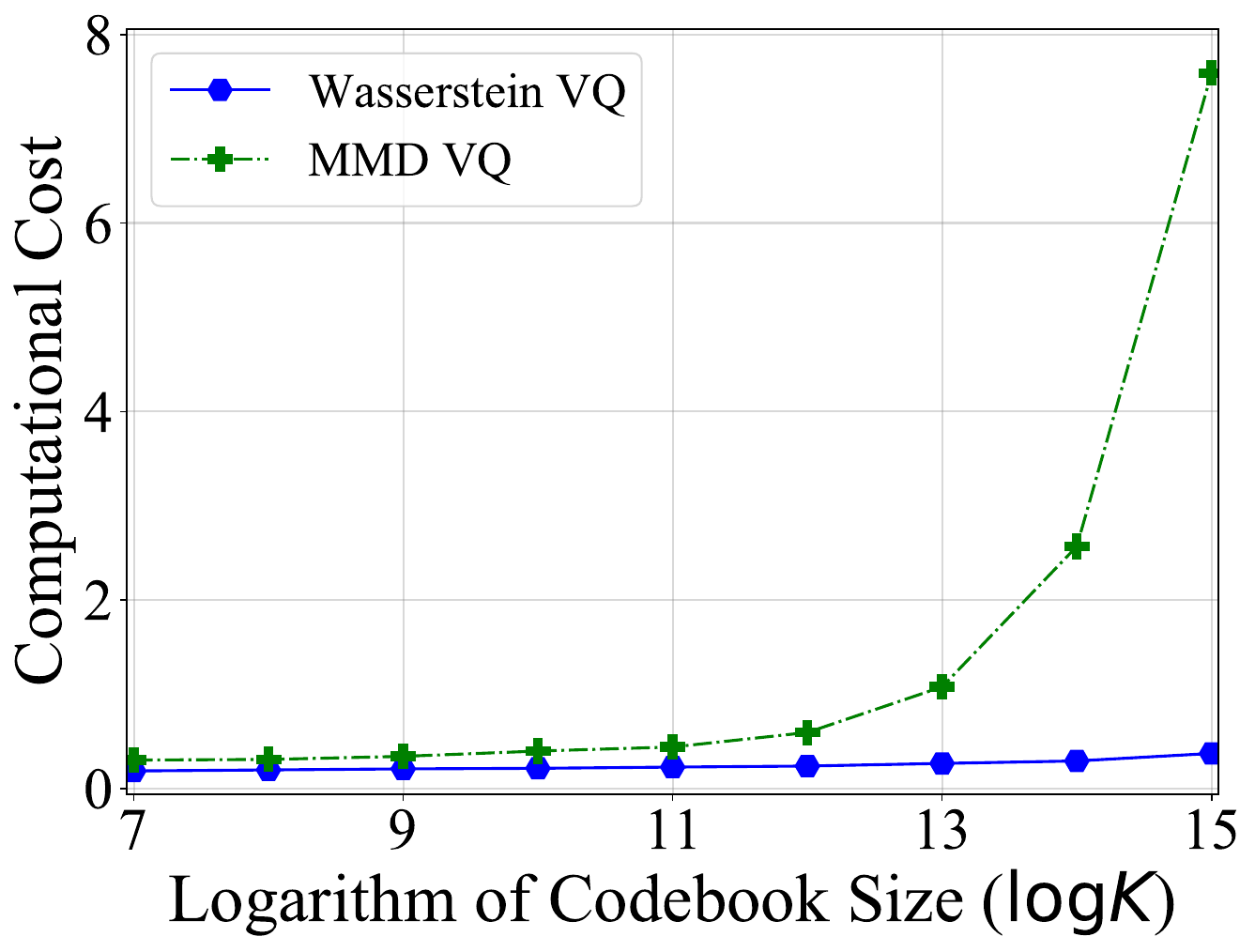}
        }
	\subfloat[Data Sample Size]{
		\label{fig:computational-data}\includegraphics[width=0.475\textwidth]{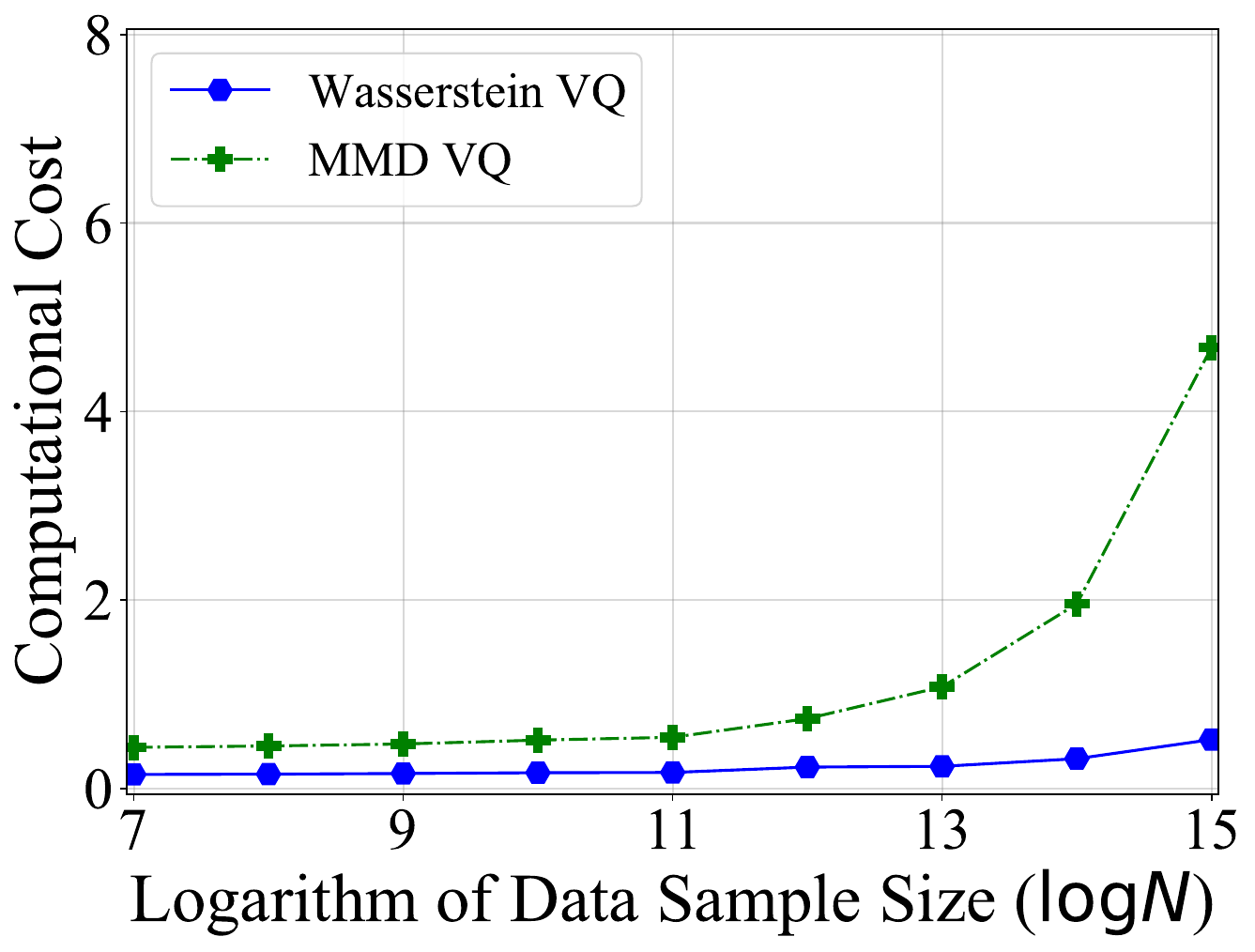}
	}
    \vspace{-2mm}
	\caption{Computational overhead (seconds) comparison between Wasserstein VQ and MMD VQ.}
	\label{fig:computational cost comparison}
    \vspace{-2mm}
\end{figure}

Third, we evaluate the robustness of Wasserstein VQ and MMD VQ under non-Gaussian latent distributions. While Section~\ref{sec:vqvae experiments} shows that the latent distributions in visual tokenization tasks are approximately Gaussian, this experiment allows us to explicitly assess robustness beyond the Gaussian regime. To simulate a non-Gaussian distribution, we follow the setting in Section~\ref{sec:atomic setting} and assume the encoder output features follow a bimodal mixture:
\begin{equation}  
z_i \sim 0.5 \cdot \mathcal{N}(\zeta \mathbf{1}, I) + 0.5 \cdot \mathcal{N}(-\zeta  \mathbf{1}, I), \nonumber
\end{equation}
where the mode separation parameter $\zeta \in \{0.0, 0.5, \dots, 4.0\}$ controls deviation from Gaussianity, with larger $\zeta$ indicating stronger non-Gaussianity. The codebook is initialized with a standard Gaussian distribution, and code vectors are treated as trainable parameters. After 10,000 training steps, we evaluate Wasserstein VQ and MMD VQ in terms of quantization error and codebook utilization rate. As shown in Figures~\ref{fig:comparison quantization error} and~\ref{fig:comparison codebook utilization}, for small $\zeta$ (nearly Gaussian), the two methods perform comparably. As $\zeta$ increases, MMD VQ consistently achieves lower quantization error and higher codebook utilization, whereas Wasserstein VQ degrades due to its limited moment-matching assumption. These synthetic experiments empirically demonstrate that MMD VQ is more robust to non-Gaussian feature distributions.

\paragraph{Summary.}  
Wasserstein VQ and MMD VQ exhibit complementary strengths and limitations. Wasserstein VQ is computationally efficient and performs stably under nearly Gaussian latent distributions, benefiting from its first- and second-order moment-matching formulation, which scales well with large datasets and codebooks. However, it is less robust when feature distributions deviate from Gaussianity. In contrast, MMD VQ excels under non-Gaussian distributions, consistently achieving lower quantization error and higher codebook utilization, but at the cost of substantially higher computational overhead. In practice, Wasserstein VQ is preferred for efficiency and standard Gaussian-like scenarios, while MMD VQ provides an advantage when robustness to non-Gaussian features is critical.
\begin{figure}[!t]
	\centering
    \vspace{-4mm}
    \subfloat[Quantization error]{
		\label{fig:comparison quantization error}\includegraphics[width=0.475\textwidth]{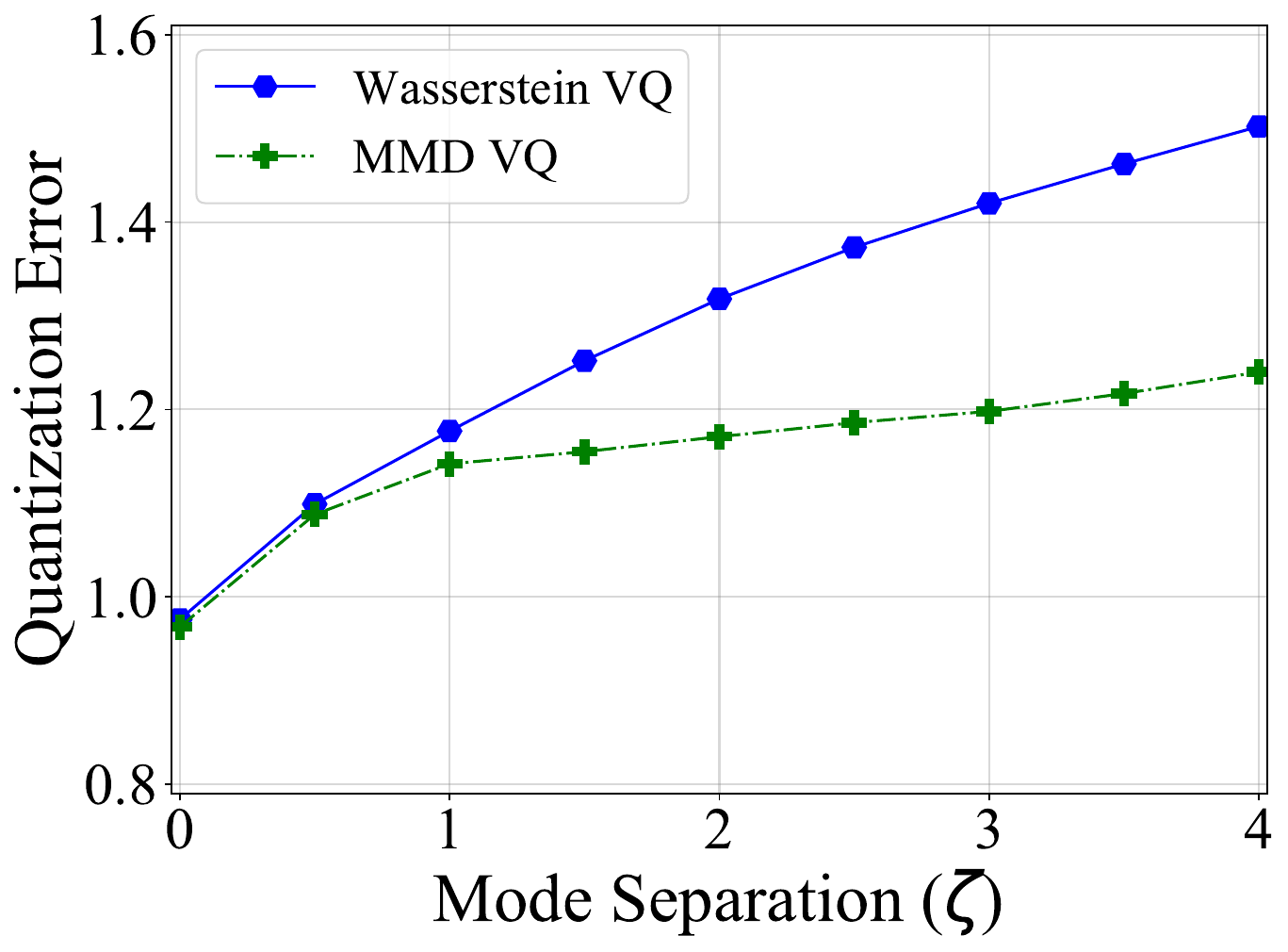}
	}
	\subfloat[Codebook utilization]{ 
		\label{fig:comparison codebook utilization}  \includegraphics[width=0.475\textwidth]{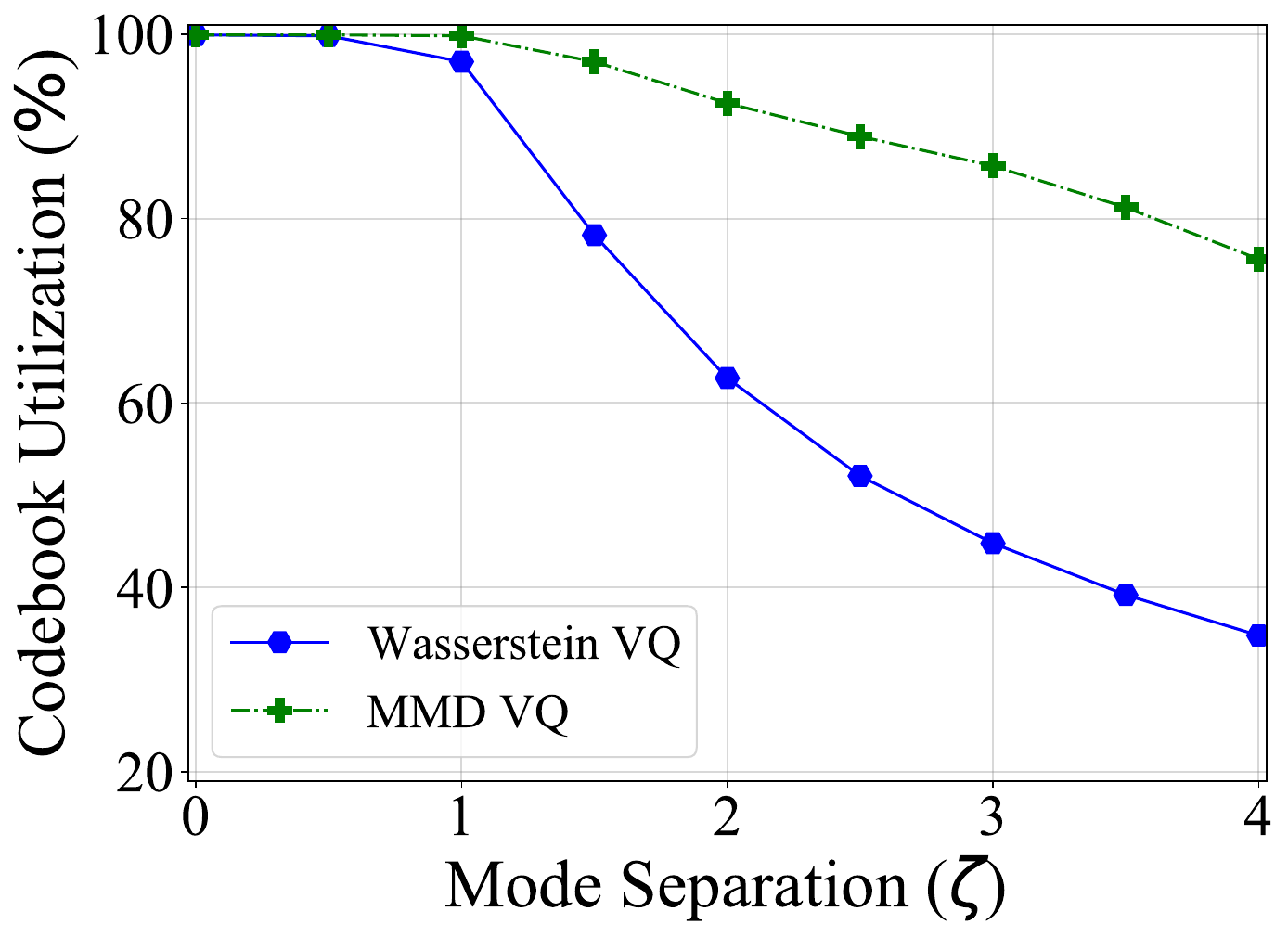}
        }
    \vspace{-1mm}
	\caption{ comparison between Wasserstein VQ and MMD VQ.}
	\label{fig:comparison non-gaussian}
    \vspace{-5mm}
\end{figure}

%% file: sections/conclusion.tex
\section{Conclusion}\label{sec:conclusion}

We identify a fundamental distributional mismatch between feature and code vectors as a common cause of training instability and codebook collapse in vector quantization. To address this issue, we propose a general distributional matching framework that formalizes desirable VQ behavior through principled criteria. Within this framework, we show that aligning feature and code distributions provides a unified mechanism for improving training stability, codebook utilization, and representation efficiency. We instantiate the framework using a Wasserstein-based objective with an efficient closed-form solution under a mild Gaussian approximation, and further demonstrate that a nonparametric alternative based on maximum mean discrepancy achieves comparable performance. More broadly, this work offers a unified perspective on vector quantization and opens new avenues for designing stable and efficient discrete representation learning methods.

\section{Limitation and Discussion}
One limitation of this work is that we do not directly validate the effectiveness of the proposed distributional matching framework through downstream generative results, due to limited computational resources. Recent studies have shown that improved reconstruction performance does not necessarily translate into better generative performance. For instance, increasing the codebook size or token length in discrete visual tokenizers~\cite{yu2024language}, or increasing the latent resolution or dimensionality in continuous tokenizers, can significantly improve reconstruction quality while simultaneously degrading generative performance~\cite{Yao2025ReconstructionVG}. 

In these cases, larger codebooks or longer token sequences in discrete tokenizers substantially increase the complexity of training autoregressive models, due to longer sequences and larger vocabularies. Similarly, higher latent resolution or dimensionality in continuous tokenizers makes the training of diffusion models more challenging, leading to reduced generative quality. 

In contrast, distributional matching improves tokenizer quality without introducing additional burdens to downstream generative models. For example, compared with VAR~\cite{Tian2024VisualAM}, Wasserstein VAR achieves improved reconstruction without increasing the codebook size or token length. As a result, the improved reconstruction performance induced by distributional matching is more likely to translate into improved generative performance.

%% file: sections/appendix.tex
\section{Optimal Support of The Codebook Distribution}

\begin{proof}[Proof of Theorem \ref{thm:1}]
    First, we assume $\overline{\supp(\cP_B)}=\overline{\supp(\cP_A)}$. Then for any $\bz\in \supp(\cP_A)$, there exist a sequence of points in $\supp(\cP_B)$ that converge to $\bz$. Let $\{\be_k\}_{k=1}^K$ be $K$ code vectors independently generated from $\cP_B$. 
    Then the empirical distribution of $\{\be_k\}_{k=1}^K$ tends to $\cP_B$ as the  size $K$ tends to infinity. 
    Since $\Omega=\supp(\cP_A)$ is a bounded region,  we have the following: 
\begin{align}    \sup_{\bz\in\overline{\supp(\cP_A)}}\min_{k}\norm{\bz-\be_k}^2= \sup_{\bz\in\overline{\supp(\cP_B)}}\min_{k}\norm{\bz-\be_k}^2\overset{p}{\to}0,\quad \textnormal{as }K\to\infty.
\end{align}
    This quantity is an upper bound on the quantization error $\cE(\{\bz_i\};\{\be_k\})$. Thus,
    \begin{align}
    \sup_{\{\bz_i\}\subseteq\Omega}\cE\left(\{\bz_i\}_{i=1}^N;\{\be_k\}_{k=1}^K\right)\leq \sup_{\bz\in\overline{\Omega}}\min_{k}\norm{\bz-\be_k}^2\overset{p}{\to}0, \quad \textnormal{as }K\to\infty.
    \end{align}
    This demonstrates that $\cP_B$ has vanishing quantization error asymptotically. Furthermore, for any $K$ code vectors $\{\be_k\}_{k=1}^K$ independently drawn from $\cP_B$, we have $\{\be_k\}_{k=1}^K\subseteq\overline{\Omega}$. Since the empirical distribution of $\{\bz_i\}_{i=1}^N$ tends to $\cP_A$ as the feature sample size $N$ tends to infinity, we can easily show that for any fixed $\{\be_k\}_{k=1}^K\subseteq\overline{\Omega}$, the codebook utility rate satisfies
    \begin{align}
    \cU\left(\{\bz_i\}_{i=1}^N,\{\be_k\}_{k=1}^K\right)\overset{p}{\to}1,\quad\textnormal{as }N\to\infty.
    \end{align}
    This shows that $\{\be_k\}_{k=1}^K$ attains full utilization asymptotically, and thus $\cP_B$ attains full utilization asymptotically. 

    On the other hand, we assume $\cP_B$ attains full utilization and vanishing quantization error asymptotically. Then we first claim that $\overline{\supp(\cP_A)}\subseteq\overline{\supp(\cP_B)}$. Since $\cP_B$ has vanishing quantization error asymptotically, then for any $\bz\in\supp(\cP_A)$, there exist a sequence of points in $\supp(\cP_B)$ that converge to $\bz$. This implies that $\supp(\cP_A)\subseteq\overline{\supp(\cP_B)}$ and thus $\overline{\supp(\cP_A)}\subseteq\overline{\supp(\cP_B)}$. 
    
    To show $\overline{\supp(\cP_B)}=\overline{\supp(\cP_A)}$, it remains to show $\supp(\cP_B)\subseteq\overline{\supp(\cP_A)}$. In fact, if $\supp(\cP_B)\subseteq\overline{\supp(\cP_A)}$ does not hold, then there exists an open region $
    \cR\subseteq \supp(\cP_B)-\overline{\supp(\cP_A)}
    $ 
    such that $\cP_B(\cR)>0$ and 
    \begin{align}
    \min_{\bz\in\supp(\cP_A),\bz'\in\cR}\norm{\bz-\bz'}\geq\epsilon_0
    \end{align}
    for some $\epsilon_0>0$. Since $\supp(\cP_A)\subseteq\overline{\supp(\cP_B)}$, then there exists a sufficiently large $K_0$ such that the event  
    \begin{align}\label{equ:codebookselection}
   \!\!\! \left\{\textnormal{Generating}\{\be_k\}_{k=1}^{K_0} \textnormal{ i.i.d. from }\cP_B \textnormal{~s.t.~}  \{\be_k\}\subseteq\supp(\cP_A), \sup_{\bz\in\supp(\cP_A)}\min_k\norm{\bz-\be_k}<\epsilon_0\right\}
    \end{align}
    has some positive probability $C>0$. Then with a positive probability of at least {$C\cdot\cP_B(\cR)$}, we can pick the first $K_0$ code vectors from Equation (\ref{equ:codebookselection}) and the $(K_0+1)$th code vector from $\cR$. For any such codebook of size $K_0+1$, we know the $(K_0+1)$th code vector will never be used regardless of the choice of the feature set $\{\bz_i\}$. Therefore, the codebook utilization 
    \begin{align}
    \sup_{\{\bz_i\}}\cU\left(\{\be_k\}_{k=1}^{K_0+1};\{\bz_i\}\right)\leq \frac{K_0}{K_0+1}<1.
    \end{align}
    This contradicts the property that $\cP_B$ attains full utilization asymptotically. Thus, $\supp(\cP_B)\subseteq\overline{\supp(\cP_A)}$ must hold. This concludes the proof.  
\end{proof} 

\section{Understanding Codebook Collapse Through the Lens of Voronoi Partition}
\label{appendix:understanding codebook collapse by Voronoi Partition}

\begin{figure*}[!h]
    \vspace{-2mm}
	\centering
	\subfloat[]{         
        \label{fig:voronoi partition a}  
		\includegraphics[width=0.24\textwidth]{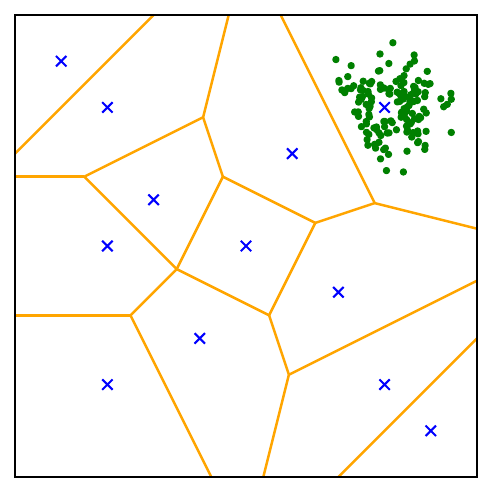}
        }
	\subfloat[]{         
        \label{fig:voronoi partition b}  
		\includegraphics[width=0.24\textwidth]{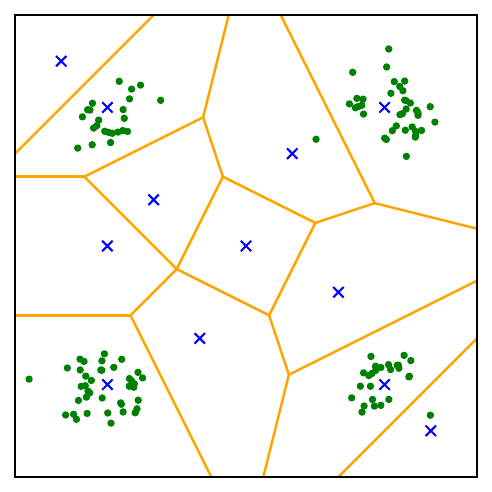}
        }
	\subfloat[]{         
        \label{fig:voronoi partition c}  
		\includegraphics[width=0.24\textwidth]{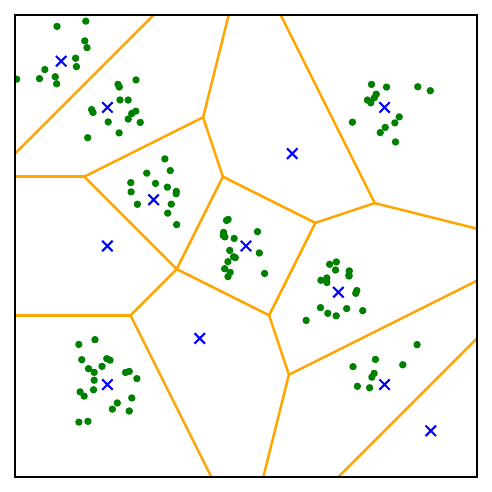}
        }
    \subfloat[]{         
        \label{fig:voronoi partition d}  
		\includegraphics[width=0.24\textwidth]{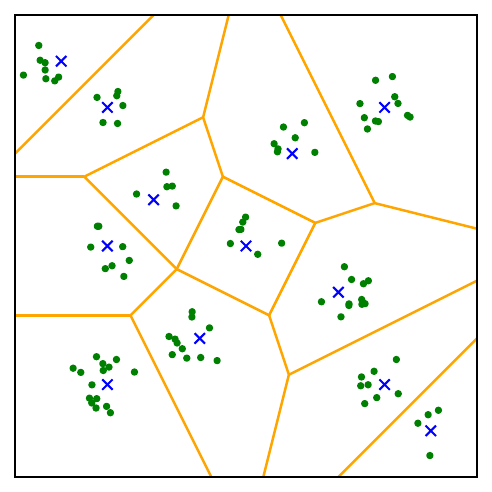}
        }
    \vspace{-1ex}
	\caption{Visualization of the Voronoi partition. The symbols $\cdot$ and $\times$ represent the feature and code vectors, respectively.}
    \vspace{-3ex}
	\label{fig:visualization of voronoi partition}
\end{figure*}

\subsection{Voronoi Partition: Definition and Connection to Codebook Collapse}
\label{appendix:voronoi partition definition}

Let $\mathcal{X}$ be a metric space with distance function $d(\cdot, \cdot)$, and let $\{\be_k\}_{k=1}^K$ denote a set of code vectors. The Voronoi cell (or Voronoi region) $\mathcal{R}_k$ associated with code vector $\be_k$ is the set of all points in $\mathcal{X}$ whose distance to $\be_k$ is no greater than their distance to any other code vector $\be_j$ ($j \neq k$):
\begin{align}
\mathcal{R}_k = \{x \in \mathcal{X} \;|\; d(x, \be_k) \le d(x, \be_j), \forall j \neq k\}.
\end{align}
The Voronoi diagram is the collection of all cells $\{\mathcal{R}_k\}_{k=1}^K$. Figure~\ref{fig:visualization of voronoi partition} illustrates a Voronoi diagram for 12 code vectors, partitioning the metric space into 12 regions according to $\mathcal{R}_k$. When $d$ is the $\ell_2$ distance, vector quantization (VQ) can equivalently be expressed in terms of Voronoi regions:
\begin{align}
\label{eq:vq voronoi partition}
\forall \bz_i \in \mathcal{R}_j, \quad \bz_i' = \argmin_{\be \in \{\be_k\}} \norm{\bz_i - \be} = \be_j,
\end{align}
where $\bz_i$ is an arbitrary feature vector. Equation~\ref{eq:vq voronoi partition} provides an alternative viewpoint for nearest neighbor search: identify the region $\mathcal{R}_j$ containing $\bz_i$ and select the corresponding code vector $\be_j$.

\paragraph{Relation to Codebook Collapse} 
Codebook collapse occurs in its most severe form when all feature vectors fall within the same Voronoi region. As illustrated in Figure~\ref{fig:voronoi partition a}, all features occupy a single region, leading to the utilization of only one code vector. To prevent collapse, feature vectors should be distributed across all regions as uniformly as possible (Figure~\ref{fig:voronoi partition d}).

\subsection{Limitations of Existing Vector Quantization Methods}
\label{appendix:VQ codebook collapse}

We analyze why standard VQ methods often fail to prevent codebook collapse, using Vanilla VQ~\cite{Oord2017NeuralDR} and $k$-means-based VQ~\cite{Razavi2019GeneratingDH} as examples. Both share a similar assignment step but differ in their update mechanisms. 

\paragraph{Assignment Step} 
Given feature vectors $\{\bz_i\}_{i=1}^N$ and code vectors $\{\be_k\}_{k=1}^K$, at iteration $t$, both methods partition the feature space into Voronoi cells and assign features to the nearest code vectors:
\begin{align}
\mathcal{R}_k^{(t-1)} = \{x \in \mathcal{X} \;|\; \left\Vert x- \be_k^{(t-1)} \right\Vert_2^{2} \le \left\Vert x- \be_j^{(t-1)} \right\Vert_2^{2}, \forall j \neq k\}, \quad \mathcal{S}_{k}^{(t)} = \{\bz_i \;|\; \bz_i \in \mathcal{R}_k^{(t-1)}\}. \nonumber
\end{align}

\paragraph{Update Step in Vanilla VQ} Code vectors are updated via gradient descent on the reconstruction loss:
\begin{align}
\mathcal{L} = \frac{1}{N}\sum_{k=1}^{K}\sum_{\bz_m \in \mathcal{S}_{k}^{(t)}} \left\Vert \bz_m - \be_k^{(t-1)} \right\Vert_2^{2}.
\end{align}

\paragraph{Update Step in $k$-means-based VQ} Code vectors are updated using an exponential moving average:
\begin{align}
\be_k^{(t)} = \alpha \be_k^{(t-1)} + (1-\alpha) \frac{1}{|\mathcal{S}_{k}^{(t)}|} \sum_{\bz_m \in \mathcal{S}_{k}^{(t)}} \bz_m.
\end{align}

\paragraph{Codebook Collapse in Vanilla and $k$-means VQ} 
Despite different update strategies, both methods can suffer from codebook collapse because the assignment step does not guarantee that all Voronoi cells receive feature vectors (Figures~\ref{fig:voronoi partition a}--\ref{fig:voronoi partition c}). Larger codebooks exacerbate the issue, leaving some cells unassigned and their corresponding code vectors underutilized.

\paragraph{Connection to Distribution Matching and Mitigation} 
As demonstrated in Section~\ref{sec:atomic setting}, both VQ methods are sensitive to codebook initialization. Collapse is mitigated only if the codebook distribution approximates the feature distribution. However, in practice, feature distributions are typically unknown and dynamically evolving. To address this, we introduce a \textbf{distributional matching constraint} that aligns the codebook distribution with the feature distribution, ensuring complete codebook utilization.

\section{The Relationship Between Gradient Gap and Quantization Error}
\label{appendix:gradient gap quantization error}

In this section, we provide a formal derivation of the relationship between the \textbf{Quantization Error} (Criterion 1) and the \textbf{Gradient Gap} caused by the Straight-Through Estimator (STE). This analysis theoretically supports our claim that minimizing quantization error via distribution matching inherently mitigates training instability.

\textbf{Problem Setup}: Let $\mathcal{L}$ denote the global loss function. In the VQ process, the encoder outputs a continuous latent vector $z_e$, which is discretized to a code $z_q$. 
During backpropagation, the non-differentiable quantization operation is bypassed using the STE, which approximates the gradient of the encoder output $\frac{\partial \mathcal{L}}{\partial z_e}$ using the gradient of the quantized code $\frac{\partial \mathcal{L}}{\partial z_q}$:
\begin{equation}
    \text{STE Approximation:} \quad \frac{\partial \mathcal{L}}{\partial z_e} \leftarrow \frac{\partial \mathcal{L}}{\partial z_q}
\vspace{-3pt}
\end{equation}
We define the \textbf{Gradient Gap} ($\mathcal{G}$) as the magnitude of the discrepancy between the true gradient and the STE approximated gradient:
\begin{equation}
    \mathcal{G} \triangleq \left\| \frac{\partial \mathcal{L}}{\partial z_e} - \frac{\partial \mathcal{L}}{\partial z_q} \right\|
\vspace{-3pt}
\end{equation}

\textbf{Taylor Expansion Analysis}: To analyze this gap, we perform a Taylor expansion of the gradient function around the quantized point $z_q$. Let us consider the gradient function with respect to the latent variable $x$. Expanding $\frac{\partial \mathcal{L}}{\partial z_e}$ at the point $x = z_q$, we obtain:

\begin{equation}
    \frac{\partial \mathcal{L}}{\partial z_e} = \left. \frac{\partial \mathcal{L}}{\partial x} \right|_{x=z_q} + \left. \frac{\partial^2 \mathcal{L}}{\partial x^2} \right|_{x=z_q} (z_e - z_q) + \mathcal{O}(\|z_e - z_q\|^2)
    \label{eq:taylor}
\vspace{-3pt}
\end{equation}

where $\left. \frac{\partial \mathcal{L}}{\partial x} \right|_{x=z_q}$ is exactly the gradient backpropagated from the quantized code, i.e., $\frac{\partial \mathcal{L}}{\partial z_q}$, and $\mathbf{H} = \left. \frac{\partial^2 \mathcal{L}}{\partial x^2} \right|_{x=z_q}$ is the Hessian matrix, representing the local curvature of the loss function at $z_q$.

\textbf{First-Order Approximation and Upper Bound}: By retaining only the first-order term (as is common in gradient gap analysis) and ignoring the higher-order terms $\mathcal{O}(\|z_e - z_q\|^2)$, the relationship can be approximated as:

\begin{equation}
    \frac{\partial \mathcal{L}}{\partial z_e} \approx \frac{\partial \mathcal{L}}{\partial z_q} + \mathbf{H} (z_e - z_q)
\vspace{-3pt}
\end{equation}

Substituting this back into the definition of the Gradient Gap $\mathcal{G}$, we get:
\begin{equation}
    \mathcal{G} = \left\| \frac{\partial \mathcal{L}}{\partial z_e} - \frac{\partial \mathcal{L}}{\partial z_q} \right\| \approx \| \mathbf{H} (z_e - z_q) \|
\vspace{-3pt}
\end{equation}

By applying the definition of the consistent matrix norm (specifically, the spectral norm for the symmetric Hessian), we derive the upper bound:

\begin{equation}
    \mathcal{G} \le \| \mathbf{H} \|_2 \cdot \| z_e - z_q \|
    \label{eq:bound}
\vspace{-3pt}
\end{equation}
where $\| \mathbf{H} \|_2$ represents the spectral norm of the Hessian (indicative of the maximum curvature) and $\| z_e - z_q \|$ corresponds to the \textbf{Quantization Error}.

The derivation in Eq. \ref{eq:bound} explicitly demonstrates that the Gradient Gap $\mathcal{G}$ is linearly bounded by the Quantization Error. In methods like Vanilla VQ, a large Quantization Error implies a looser bound on $\mathcal{G}$, leading to inaccurate gradient signals passed to the encoder. This large discrepancy causes the ``Training Instability'' observed in previous works. By minimizing the Wasserstein distance, our method explicitly minimizes the Quantization Error (Criterion 1). This tightens the upper bound on $\mathcal{G}$, ensuring that the gradient passed to the encoder ($\frac{\partial \mathcal{L}}{\partial z_q}$) is a faithful approximation of the true gradient ($\frac{\partial \mathcal{L}}{\partial z_e}$).

\section{Complementary Roles of Criterion~\ref{criteria:cu} and~\ref{criteria:cp} in Assessing Codebook Collapse} 
\label{appendix:explanation on criterions}

\begin{figure*}[!h]
    \vspace{-6ex}
	\centering
	\subfloat[$(50\%, 4.92)$]{         
        \label{fig:criterion sample a}  
         \includegraphics[width=0.23\textwidth]{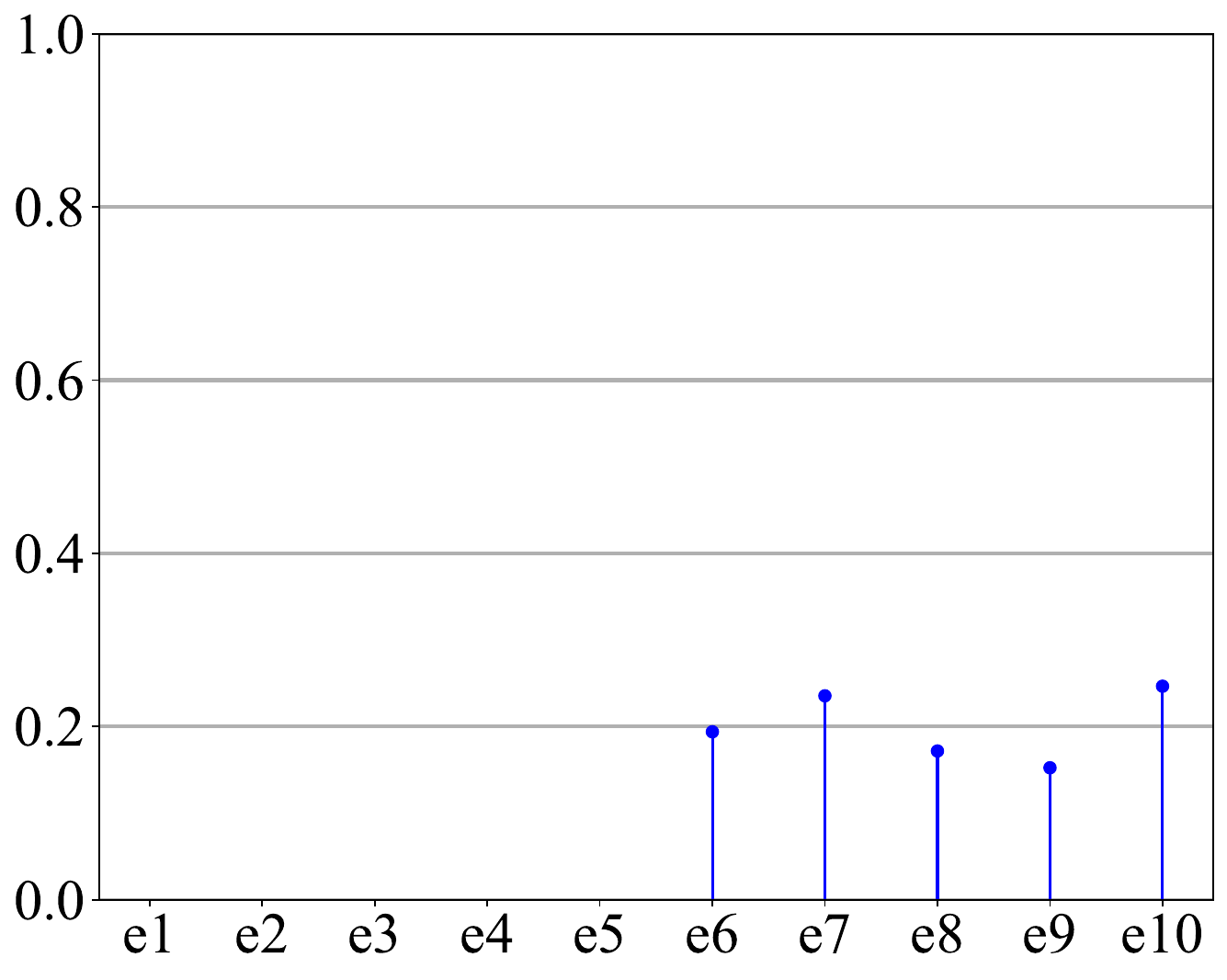}
        }
	\subfloat[$(100\%, 10.00)$]{        
        \label{fig:criterion sample b}  
		\includegraphics[width=0.23\textwidth]{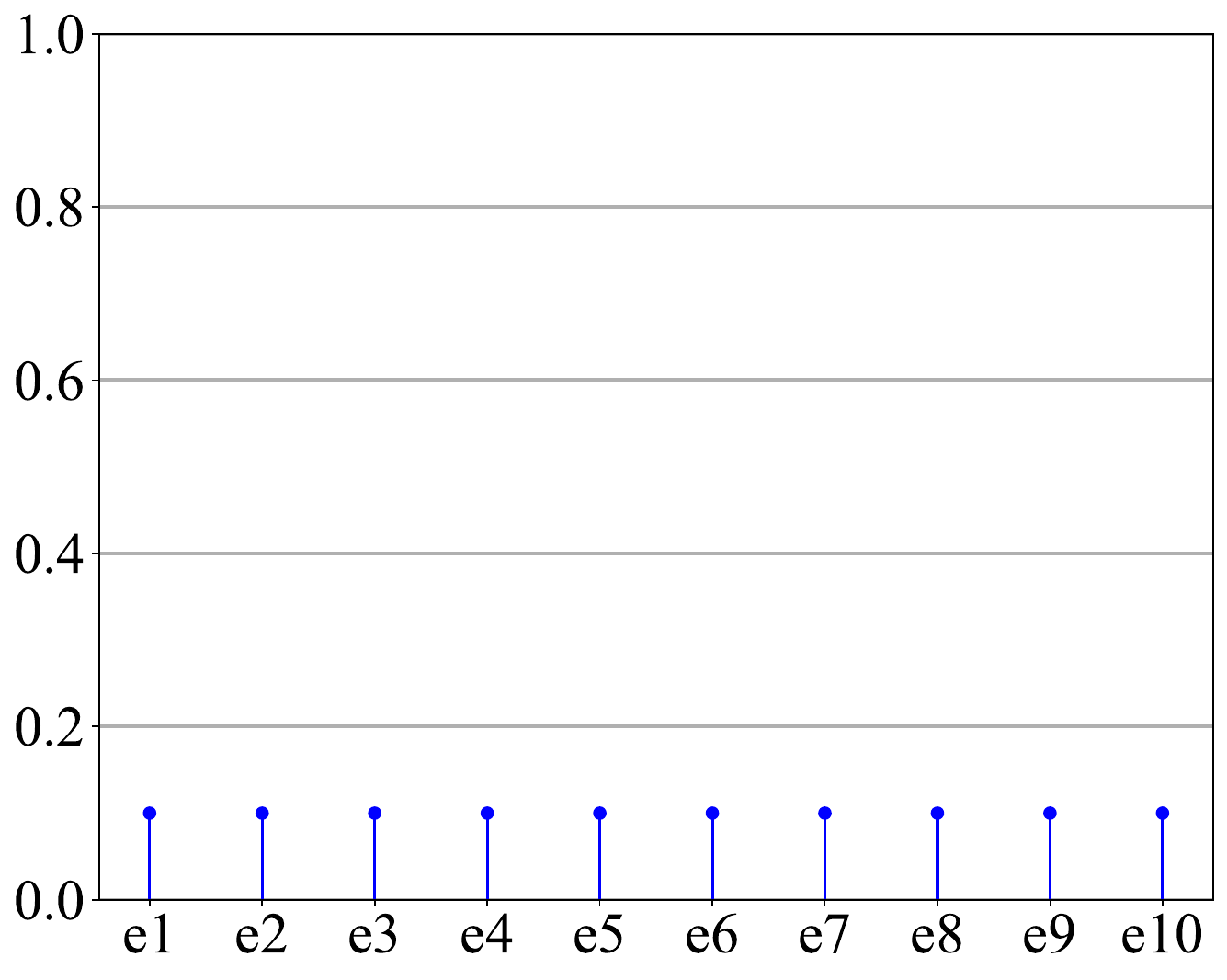}
        }
	\subfloat[$(100\%, 1.02)$]{         
        \label{fig:criterion sample c}  
		\includegraphics[width=0.23\textwidth]{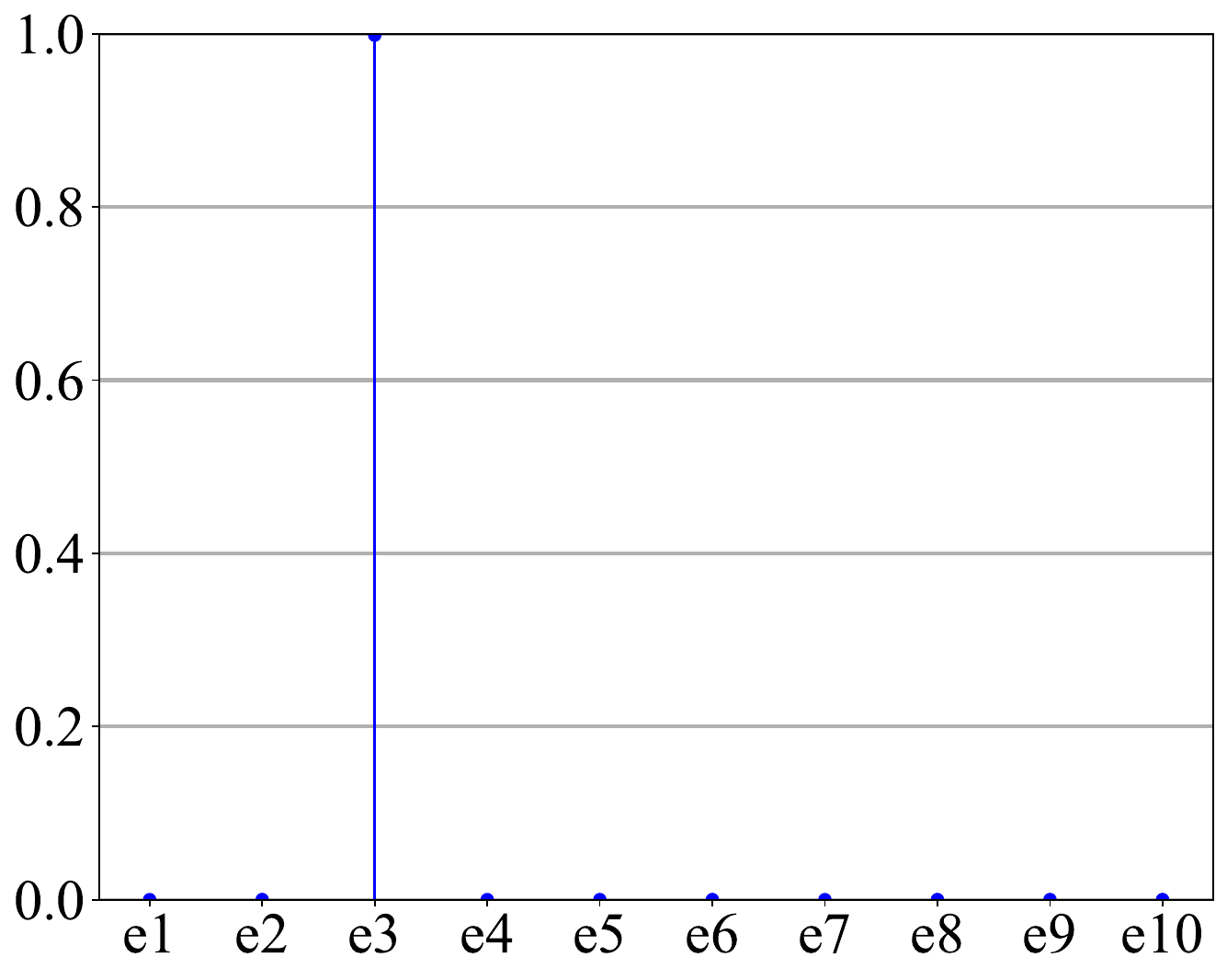}
        }
    \subfloat[$(100\%, 4.92)$]{         
        \label{fig:criterion sample d}  
		\includegraphics[width=0.23\textwidth]{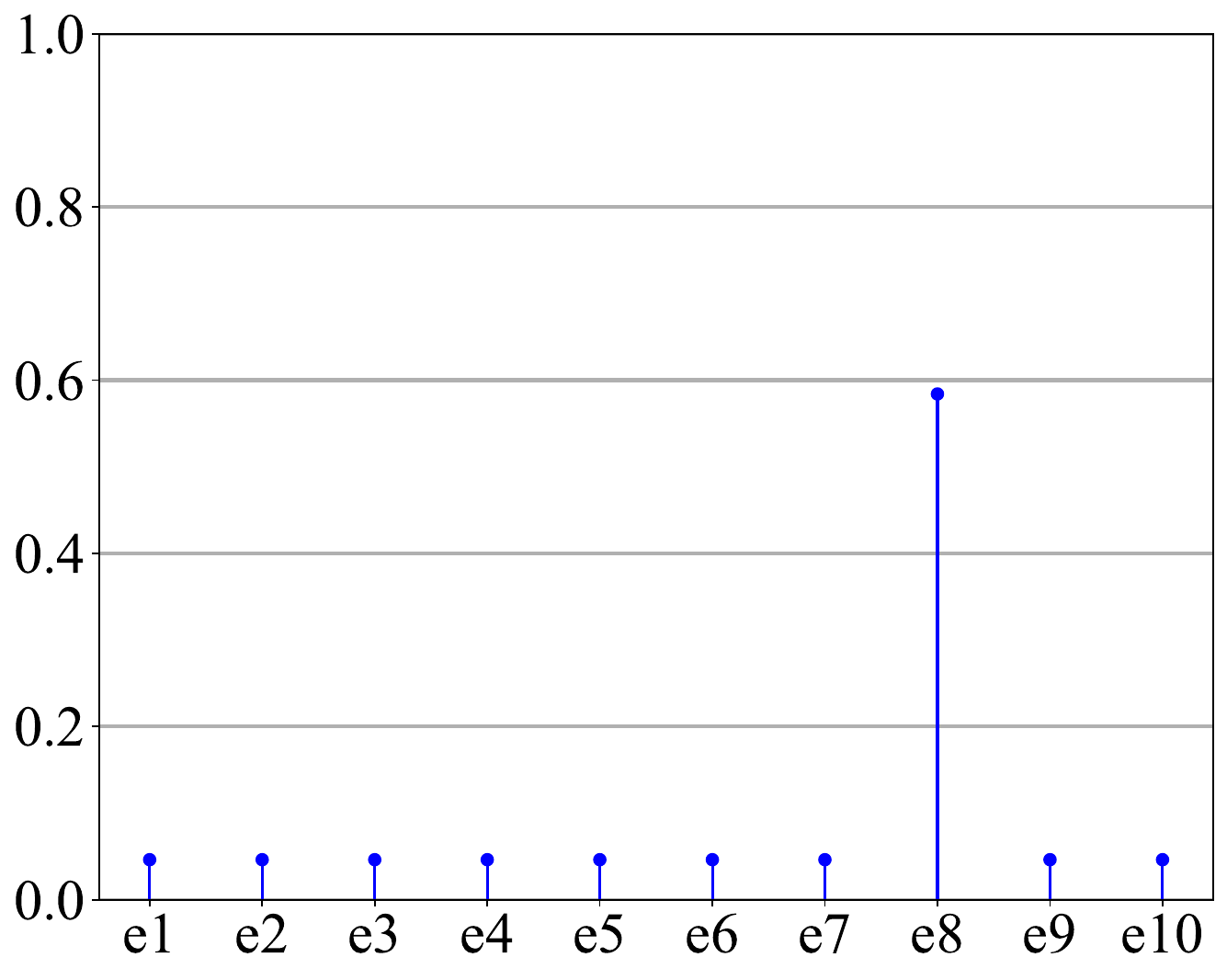}
        }
    \vspace{-1ex}
	\caption{Visualization of the evaluation criteria $(\cU, \cC)$.}
    \vspace{-2ex}
	\label{fig:visualization of criterion triple}
\end{figure*}

To clarify the complementary roles of Criterion~\ref{criteria:cu} and Criterion~\ref{criteria:cp}, which are introduced in Section~\ref{sec:distributional formulation}, we provide visual illustrations to facilitate intuitive understanding. The metric $\mathcal{U}$ is designed to quantify the \emph{completeness} of codebook utilization. As shown in Figures~\ref{fig:criterion sample a} and~\ref{fig:criterion sample b}, the corresponding values of $\mathcal{U}$ are $50\%$ and $100\%$, respectively\footnote{This difference arises because, in Figure~\ref{fig:criterion sample a}, only half of the code vectors have nonzero utilization probabilities $p_k$, as defined in Criterion~\ref{criteria:cp}, whereas in Figure~\ref{fig:criterion sample b}, all code vectors satisfy $p_k>0$.}.

However, $\mathcal{U}$ alone is insufficient to characterize the severity of codebook collapse, since it does not account for imbalance among utilized code vectors. This limitation is illustrated in Figure~\ref{fig:criterion sample c}. Although all code vectors are utilized, resulting in $\mathcal{U}=100\%$, the code vector $e_3$ overwhelmingly dominates the assignment distribution. Such extreme imbalance constitutes a degenerate form of codebook collapse, even though $\mathcal{U}$ attains its maximum value. This observation motivates the introduction of Criterion~\ref{criteria:cp}, which explicitly measures the uniformity, or conversely the imbalance, of codebook utilization.

A comparison between Figures~\ref{fig:criterion sample b} and~\ref{fig:criterion sample c} further highlights the discriminative capability of Criterion~\ref{criteria:cp}. While both cases share the same utilization completeness, namely $\mathcal{U}=100\%$, their corresponding values of $\mathcal{C}$ differ substantially, being $10.00$ and $1.02$, respectively. This contrast demonstrates that Criterion~\ref{criteria:cp} effectively captures differences in the distribution of $p_k$ that are not reflected by $\mathcal{U}$. In particular, the near minimal value of $\mathcal{C}$ in Figure~\ref{fig:criterion sample c} correctly identifies this scenario as a collapsed codebook, which aligns with our intended interpretation.

Nevertheless, Criterion~\ref{criteria:cp} alone is also insufficient for a comprehensive evaluation of codebook collapse. As illustrated by Figures~\ref{fig:criterion sample a} and~\ref{fig:criterion sample d}, identical values of $\mathcal{C}$ can correspond to markedly different levels of utilization completeness, as indicated by their substantially different values of $\mathcal{U}$. This discrepancy shows that $\mathcal{C}$ by itself cannot quantify the proportion of actively utilized code vectors.

Consequently, in this work, we adopt the joint use of Criterion~\ref{criteria:cu} and Criterion~\ref{criteria:cp} to quantitatively assess codebook collapse. Effective mitigation is achieved only when both metrics attain sufficiently large values, indicating that the codebook is both fully utilized and balanced in its assignment distribution.

\section{The Significant Impact of Distribution Variance on Quantization Error} 
\label{appendix:Impact of Distribution Variance}
As discussed in Section~\ref{sec:effects of distribution matching} and~\ref{sec:theoretical analysis}, the optimal criterion triple is achieved when the distributions $\mathcal{P}_A$ and $\mathcal{P}_B$ are identical. In this section, we further analyze the criterion triple by the lens of distribution variance under the condition that both distributions are identical. Specifically, we first sample a set of feature vectors $\{\bz_i\}_{i=1}^N$ along with a set of code vectors $\{\be_k\}_{k=1}^K$ from the distribution $\mathcal{N}_d(\m 0, \sigma^2\bm I)$ or the distribution $\unif_d(-\zeta, \zeta)$. We then calculate the evaluation criteria according to their definitions in Section~\ref{sec:distributional formulation}. As demonstrated in Table~\ref{table:distribution variance}, $\sigma$ and $\zeta$ have a substantial impact on $\cE$, while $\cU$ and $\mathcal{C}$ remains largely unaffected. 

\begin{table*}[!h]  
\centering
\vspace{-2ex}
\caption{The criterion triple influence by the distribution variance.}
\vspace{-1ex}
\resizebox{0.9\textwidth}{!}{
\begin{tabular}{c|c|c|c|c|c|c|c|c|c|c|}\hline
\multirow{2}{*}{Evaluation Criteria}  & \multicolumn{5}{|c}{$\sigma$} & \multicolumn{5}{|c}{$\zeta$}\\\cline{2-11} 
& 0.0001 & 0.001 & 0.01 & 0.1 & 1.0 & 0.0001 & 0.001 & 0.01 & 0.1 & 1.0 \\\hline
$\cE$  & 1.25e-8 & 1.25e-6 & 1.25e-4 & 1.24e-2 & 1.25 & 3.27e-9 & 3.27e-7 & 3.27e-5 & 3.27e-3 & 0.327 \\
$\cU$ & 0.9934 & 0.9938 & 0.9940 & 0.9934 & 0.9941 & 0.9993 & 0.9986 & 0.9990 & 0.9992 & 0.9989 \\
$\mathcal{C}$ & 7265.3 & 7260.3 & 7267.7 & 7255.0  & 7275.8 & 7380.2 & 7372.2 & 7387.9 & 7397.5 & 7391.6 \\\hline
\end{tabular}}
\vspace{-2ex}
\label{table:distribution variance}
\end{table*}

This experimental finding suggests that when the distribution variance of the feature vectors is uncontrollable or unknown, reporting a comparison of quantization error among various VQ methods is unreasonable. This is because the improvement in quantization error is predominantly attributed to the reduction in distribution variance rather than the effectiveness of the VQ methods. To evaluate various VQ methods in terms of the criterion triple, we establish an atomic and fair experimental setting in Section~\ref{sec:atomic setting}, where the feature distributions for all VQ methods are identical.

\section{Interpretation of Qualitative Distributional Matching Results}
\label{appendix:prototypical study}

This section interprets the experimental results presented in Figure~\ref{fig:criterion triple analysis}. The VQ process relies on nearest neighbor search for code vector selection. As evident from Figure~\ref{fig:diagram 1 group 1} to~\ref{fig:diagram 4 group 1}, actively selected code vectors are predominantly those located in close proximity to or within the feature distribution, while distant ones remain unselected. This leads to highly uneven code vector utilization $p_k$, with those closer to the feature distribution being excessively used. This elucidates the significantly low $\mathcal{U}$ and $\mathcal{C}$ observed in Figure~\ref{fig:diagram 1 group 1}. Furthermore, a notable quantization error, e.g., $\mathcal{E}=1.19$ in Figure~\ref{fig:diagram 1 group 1}, arises when the codebook and feature distributions are mismatched, forcing feature vectors outside the codebook to settle for distant code vectors. Conversely, as the disk centers align, leading to a closer match between the two distributions, an increased number of code vectors become actively engaged. Additionally, code vectors are utilized more uniformly, and feature vectors can select nearer counterparts. This accounts for the improvement of criterion triple values towards optimality as the distributions align.

Analogously, we can employ nearest neighbor search to interpret the second case. When code vectors are distributed within the range of feature vectors, as illustrated in Figure~\ref{fig:diagram 1 group 2} and Figure~\ref{fig:diagram 2 group 2}, the majority of code vectors would be actively utilized, ensuring high $\mathcal{U}$. However, the utilization of these code vectors is not uniform; code vectors on the periphery of the codebook distribution are more frequently used, leading to relatively low $\mathcal{C}$. Feature vectors on the periphery will have larger distances to their nearest code vectors, resulting in higher $\mathcal{E}$. Conversely, when feature vectors fall within the range of code vectors, as depicted in Figure~\ref{fig:diagram 3 group 2} and Figure~\ref{fig:diagram 4 group 2}, outer code vectors remain largely unused, leading to a lower $\mathcal{U}$ and $\mathcal{C}$. Since only inner code vectors are active, each feature vector can find a nearby counterpart, maintaining low $\mathcal{E}$.

\section{Supplementary Quantitative Analyses on Distribution Matching: Further Supporting the Main Findings in Section~\ref{sec:effects of distribution matching}}
\label{appendix:supplementary comprehensive quantitative analyses}
To further elucidate the effects of the distributional matching, we conduct more quantitative analyses centered around the criterion triple $(\cE, \cU, \mathcal{C})$.

\subsection{Codebook Distribution and Feature Distribution are Gaussian Distributions}
\label{appendix:analyses under gaussian distribution}

\begin{figure*}[!t]
    \vspace{-2ex}
    \centering

    \subfloat[$\cE$ {w.r.t.} $K$]{ 
        \label{fig:gaussian_codebooksize_mean_error}  
        \includegraphics[width=0.16\textwidth]{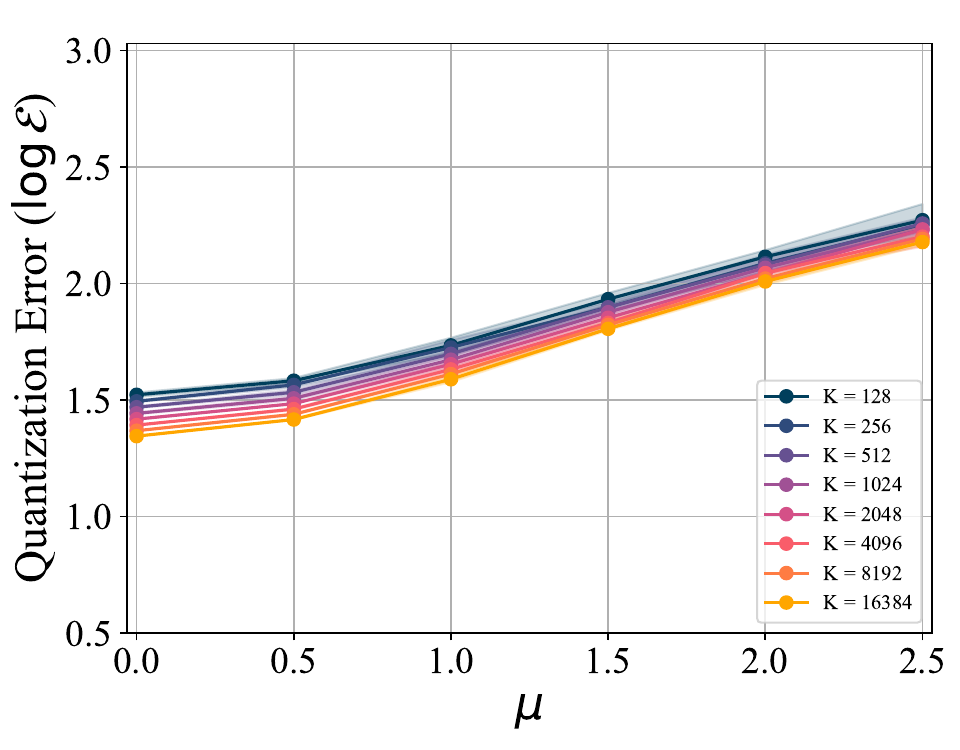}
    }
    \hspace{-0.39cm}
    \subfloat[$\cE$ {w.r.t.} $d$]{
        \label{fig:gaussian_featuredim_mean_error}
        \includegraphics[width=0.16\textwidth]{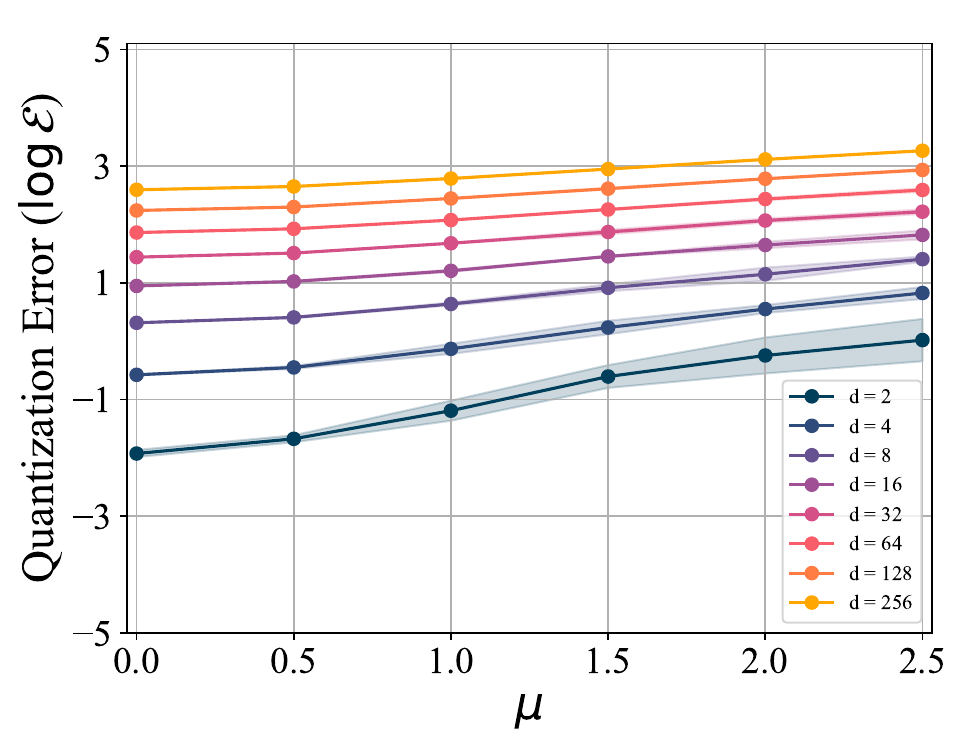}
    }
    \hspace{-0.39cm}
    \subfloat[$\cE$ {w.r.t.} $N$]{
        \label{fig:gaussian_featuresize_mean_error}
        \includegraphics[width=0.16\textwidth]{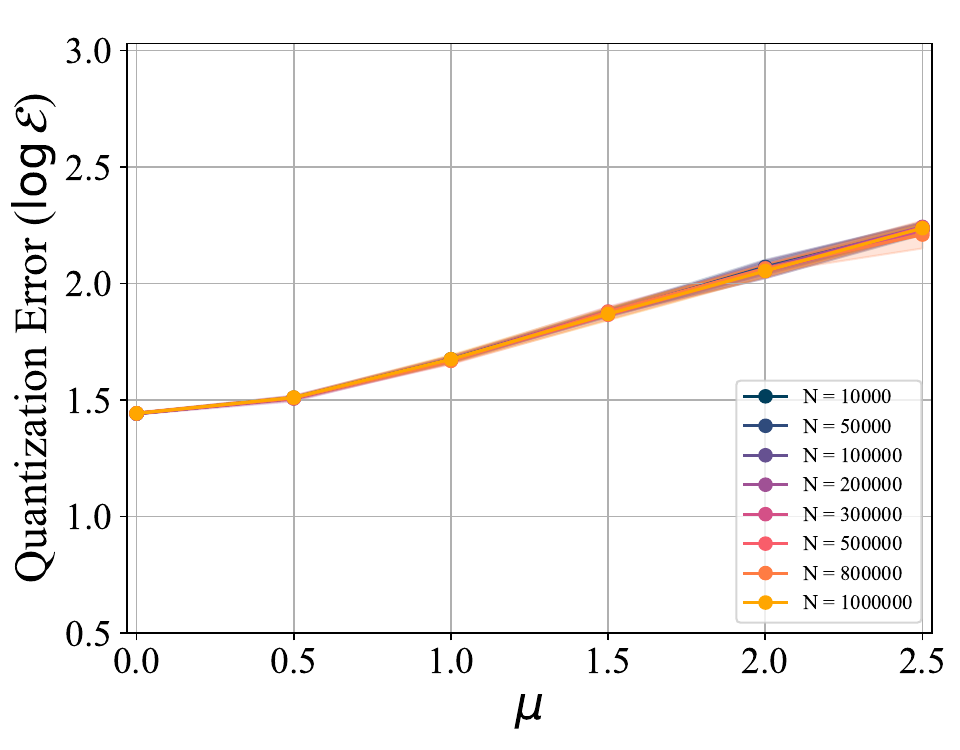}
    }
    \hspace{-0.39cm}
    \subfloat[$\cE$ {w.r.t.} $K$]{ 
        \label{fig:gaussian_codebooksize_sigma_error}  
        \includegraphics[width=0.16\textwidth]{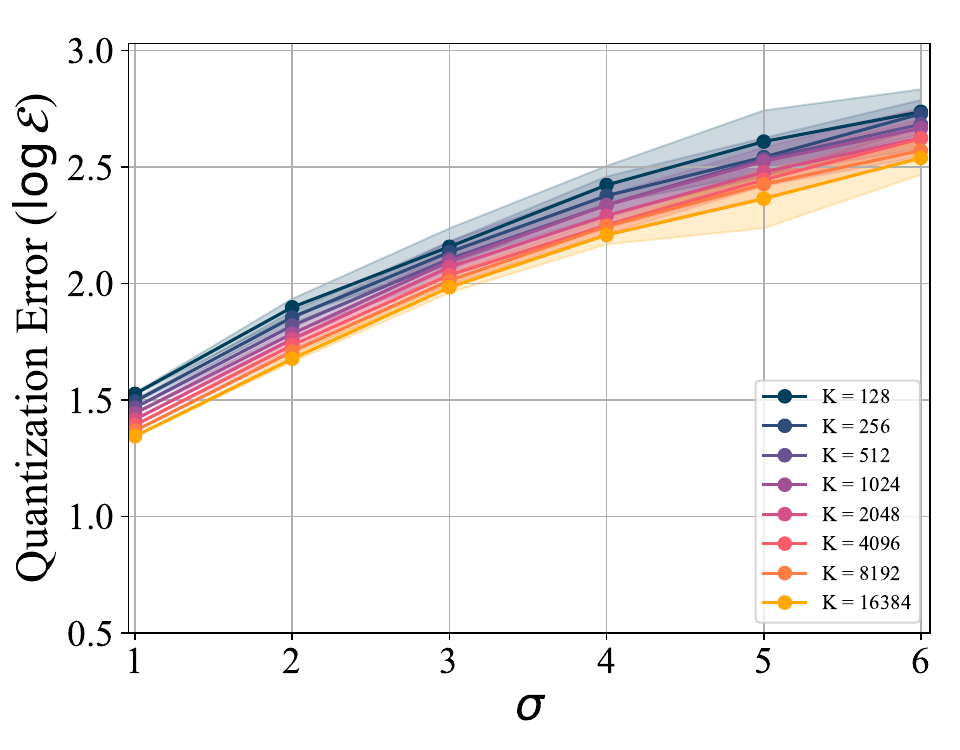}
    }
    \hspace{-0.39cm}
    \subfloat[$\cE$ {w.r.t.} $d$]{
        \label{fig:gaussian_featuredim_sigma_error}
        \includegraphics[width=0.16\textwidth]{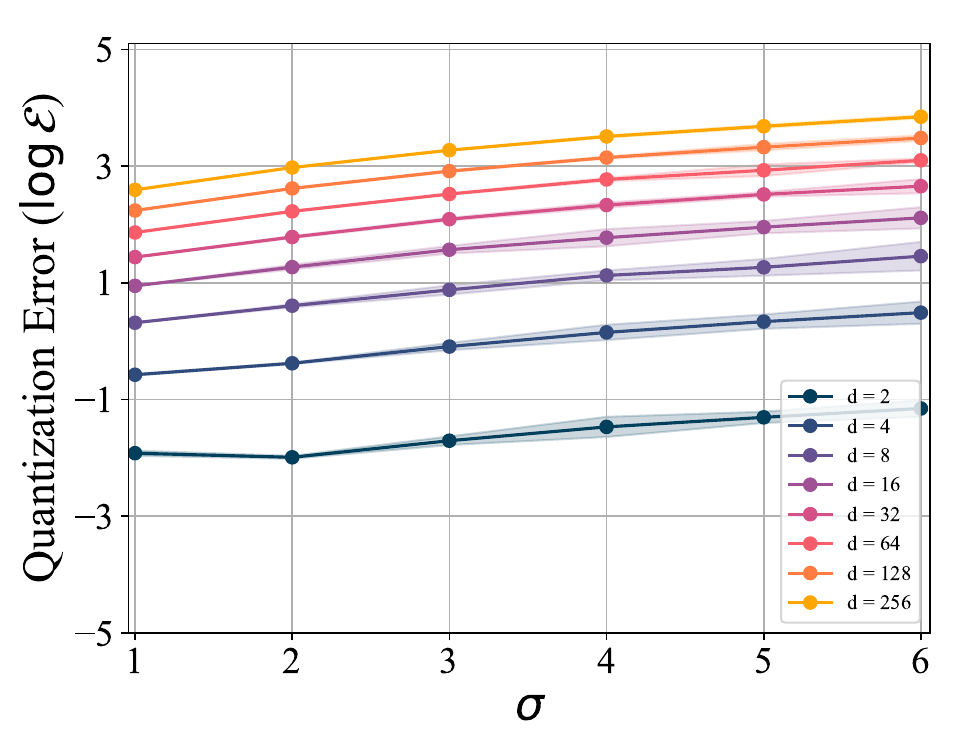}
    }
    \hspace{-0.39cm}
    \subfloat[$\cE$ {w.r.t.} $N$]{
        \label{fig:gaussian_featuresize_sigma_error}
        \includegraphics[width=0.16\textwidth]{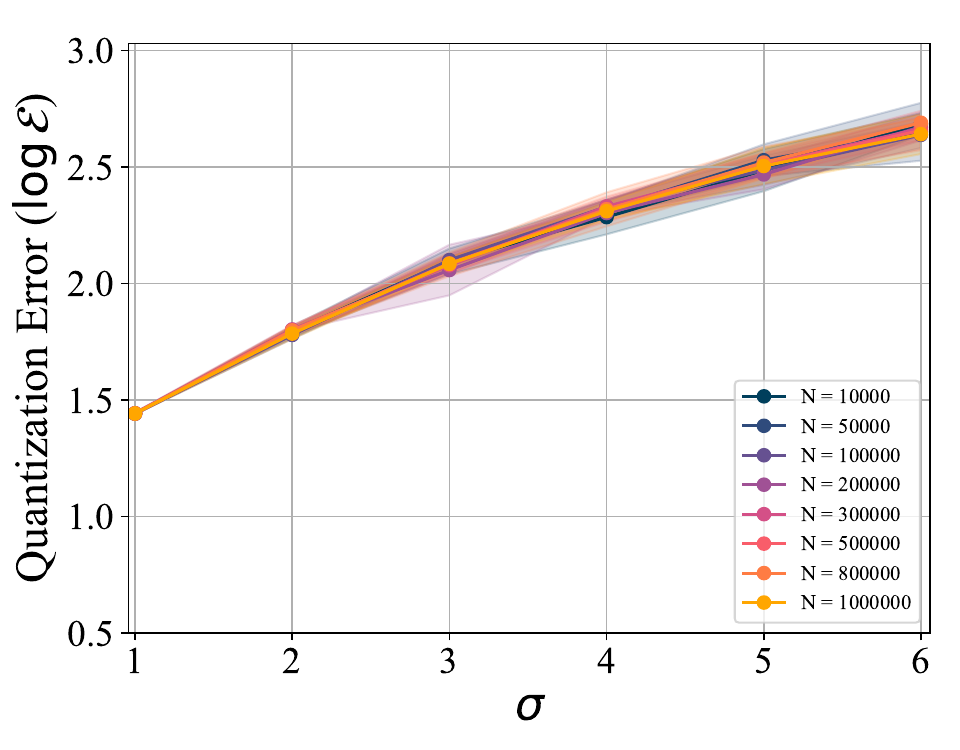}
    }

    \subfloat[$\cU$ {w.r.t.} $K$]{ 
        \label{fig:gaussian_codebooksize_mean_utilization}  
        \includegraphics[width=0.16\textwidth]{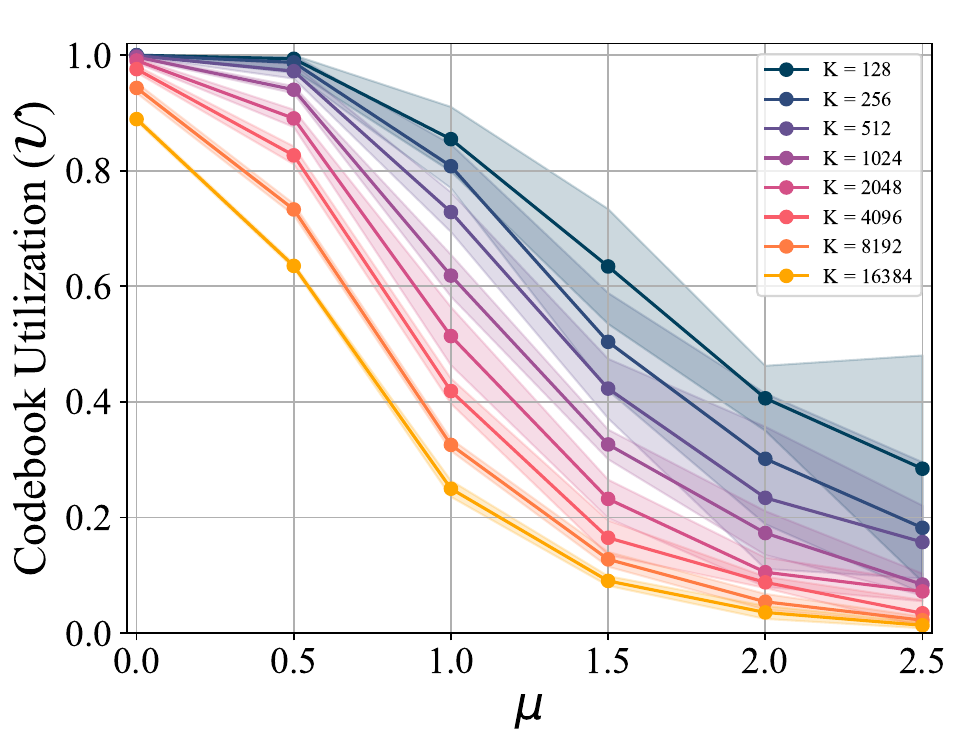}
    }
    \hspace{-0.39cm}
    \subfloat[$\cU$ {w.r.t.} $d$]{
        \label{fig:gaussian_featuredim_mean_utilization}
        \includegraphics[width=0.16\textwidth]{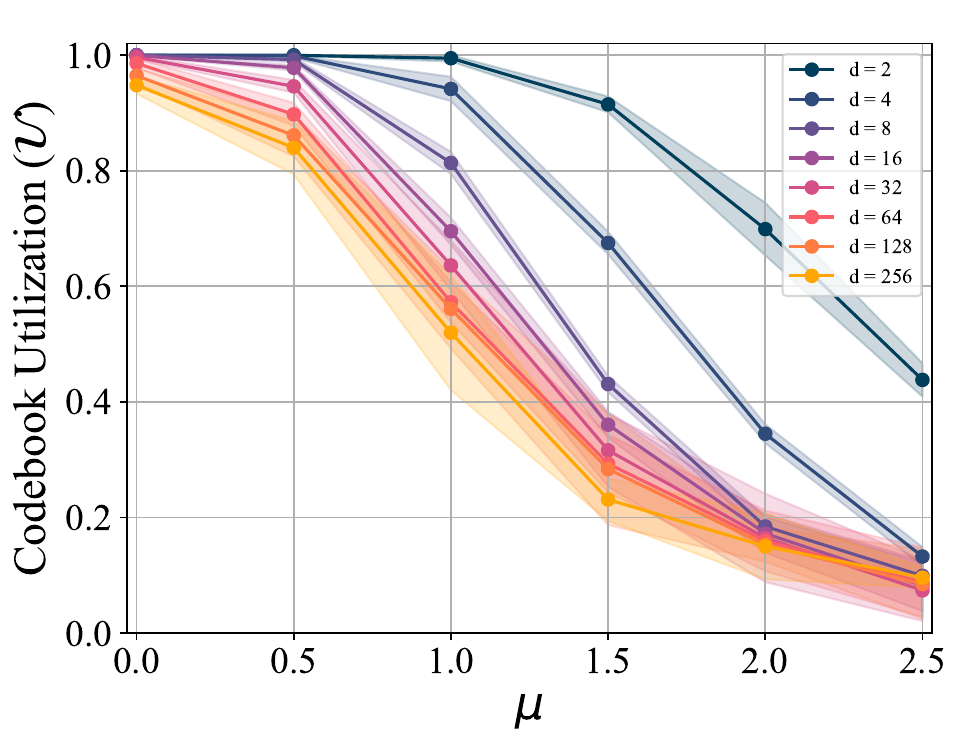}
    }
    \hspace{-0.39cm}
    \subfloat[$\cU$ {w.r.t.} $N$]{
        \label{fig:gaussian_featuresize_mean_utilization}
        \includegraphics[width=0.16\textwidth]{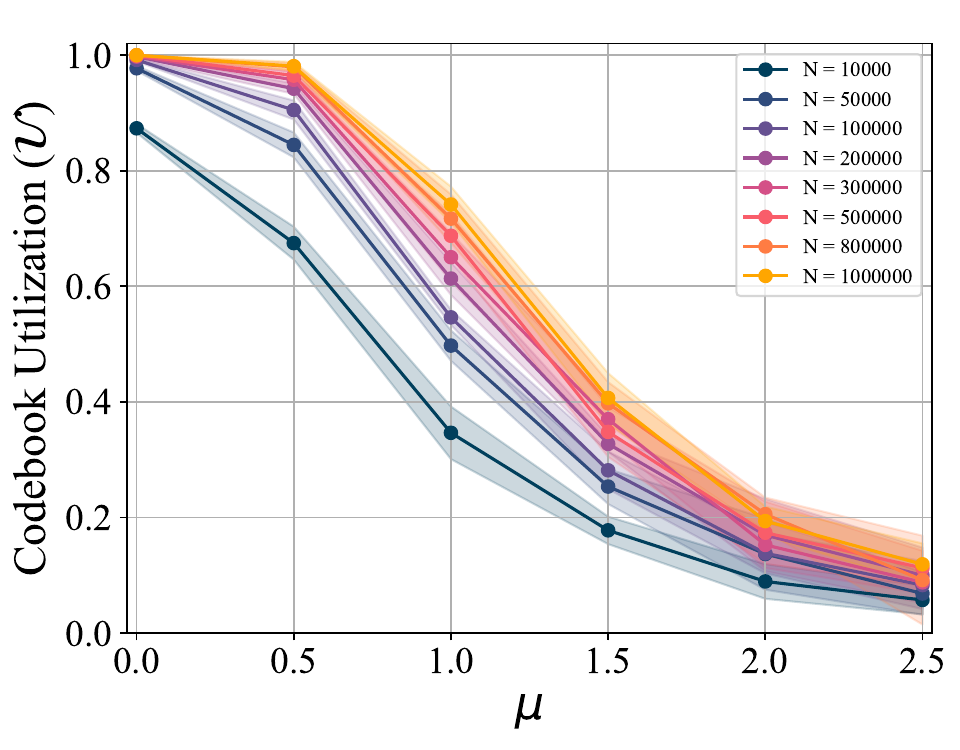}
    }
    \hspace{-0.39cm}
    \subfloat[$\cU$ {w.r.t.} $K$]{ 
        \label{fig:gaussian_codebooksize_sigma_utilization}  
        \includegraphics[width=0.16\textwidth]{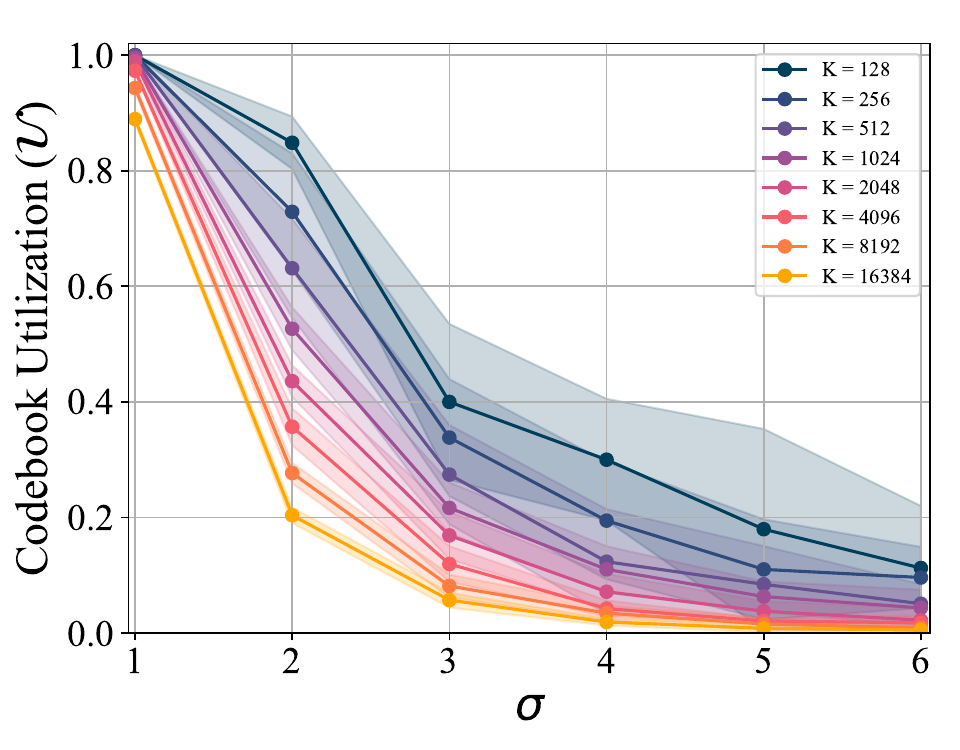}
    }
    \hspace{-0.39cm}
    \subfloat[$\cU$ {w.r.t.} $d$]{
        \label{fig:gaussian_featuredim_sigma_utilization}
        \includegraphics[width=0.16\textwidth]{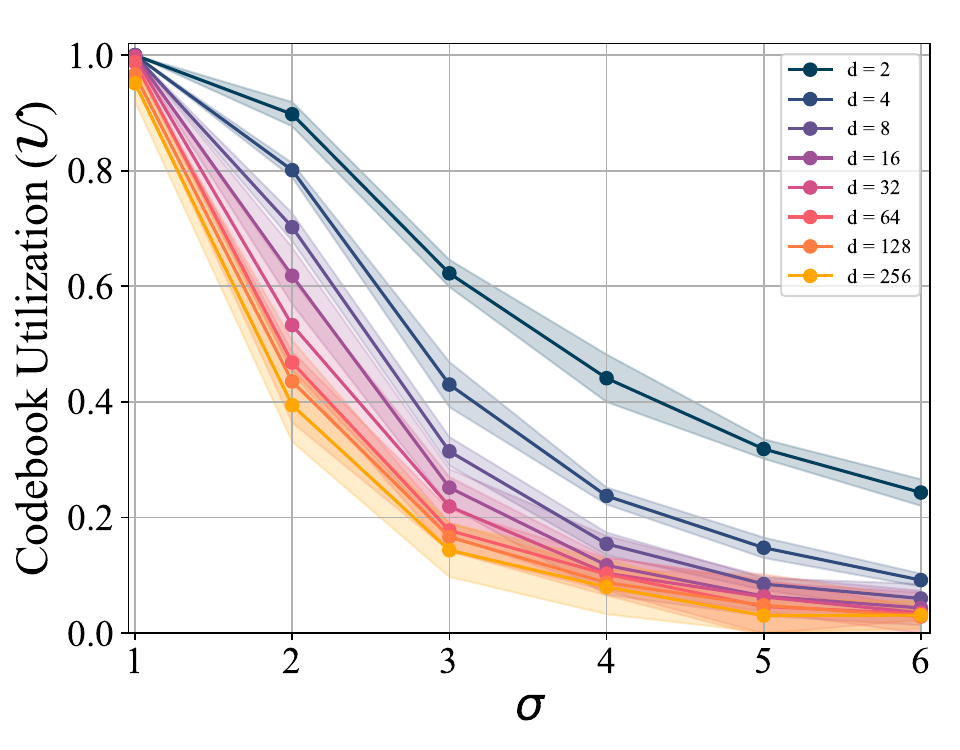}
    }
    \hspace{-0.39cm}
    \subfloat[$\cU$ {w.r.t.} $N$]{
        \label{fig:gaussian_featuresize_sigma_utilization}
        \includegraphics[width=0.16\textwidth]{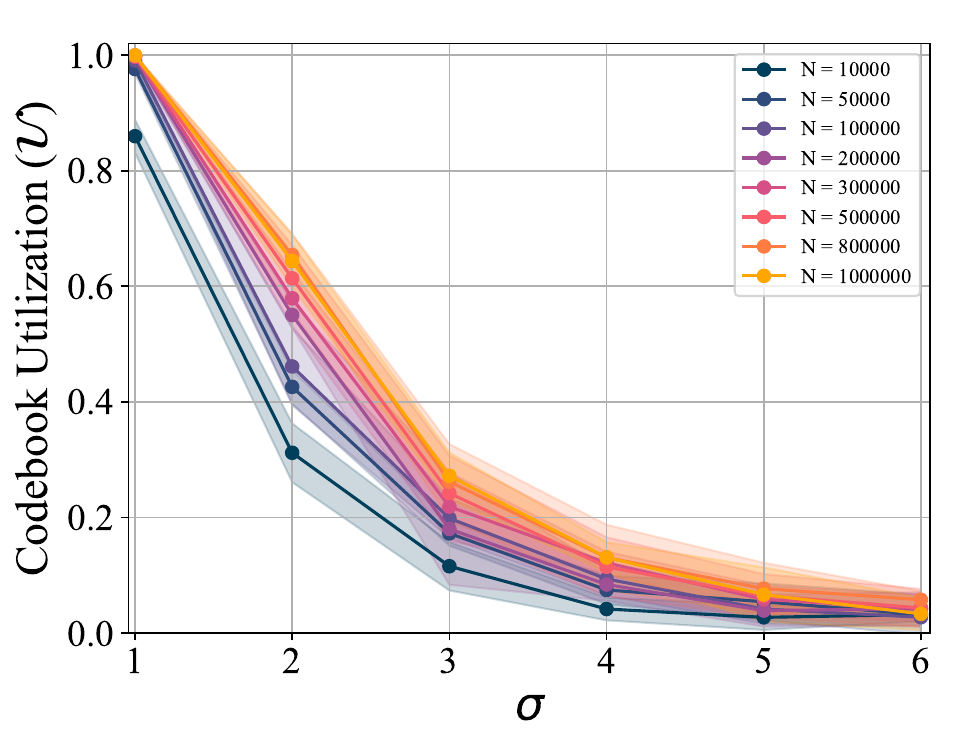}
    }

    \subfloat[$\mathcal{C}$ {w.r.t.} $K$]{ 
        \label{fig:gaussian_codebooksize_mean_perplexity}  
        \includegraphics[width=0.16\textwidth]{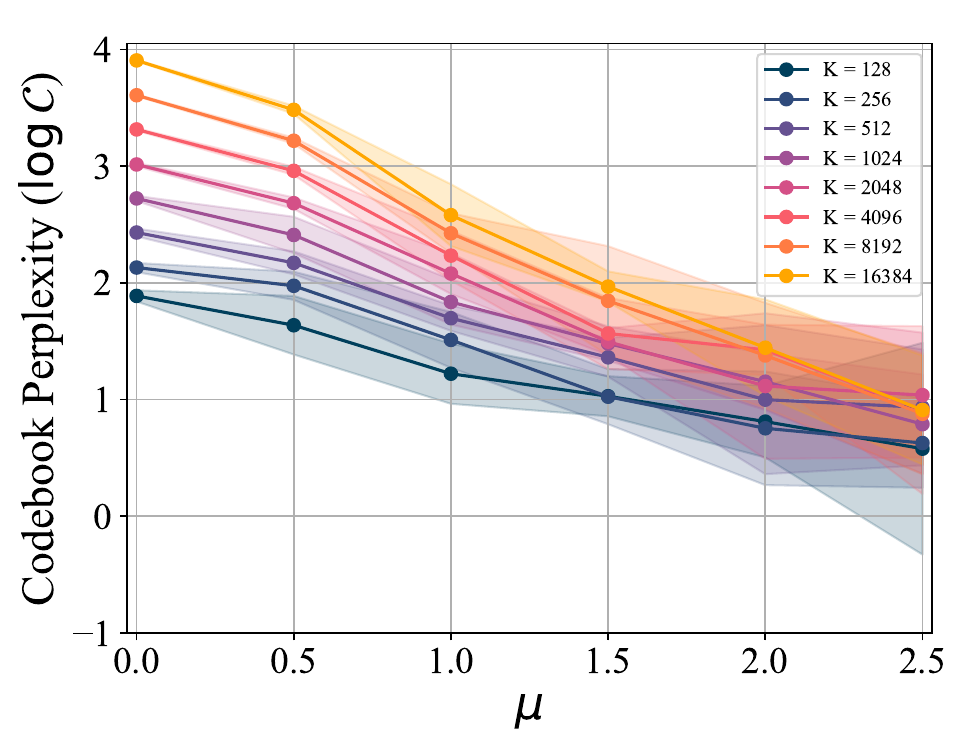}
    }
    \hspace{-0.39cm}
    \subfloat[$\mathcal{C}$ {w.r.t.} $d$]{
        \label{fig:gaussian_featuredim_mean_perplexity}
        \includegraphics[width=0.16\textwidth]{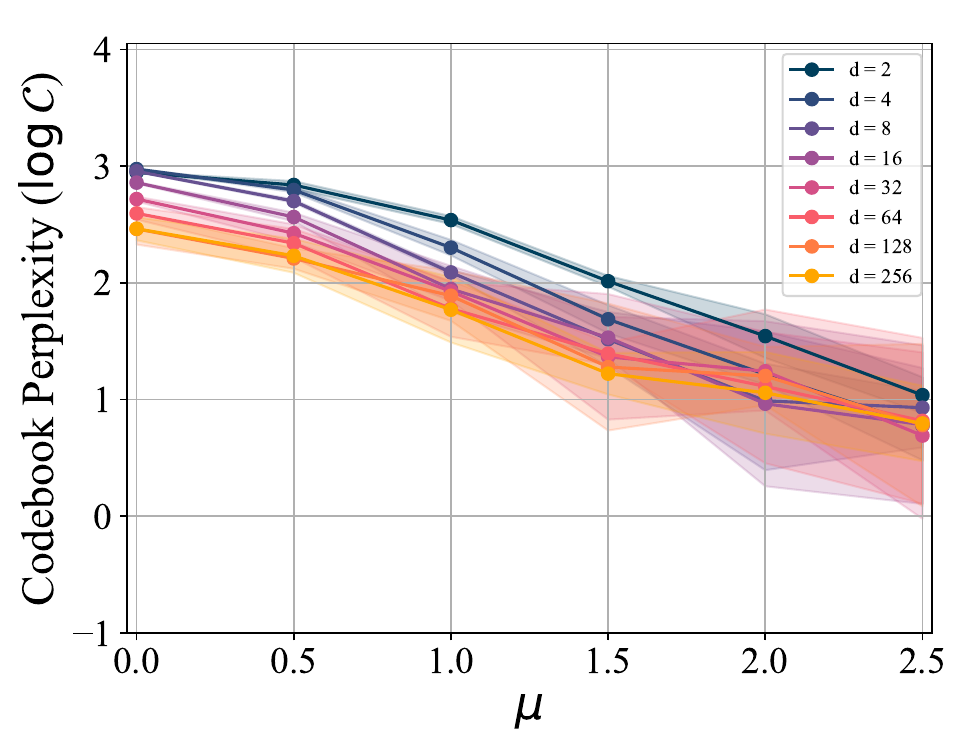}
    }
    \hspace{-0.39cm}
    \subfloat[$\mathcal{C}$ {w.r.t.} $N$]{
        \label{fig:gaussian_featuresize_mean_perplexity}
        \includegraphics[width=0.16\textwidth]{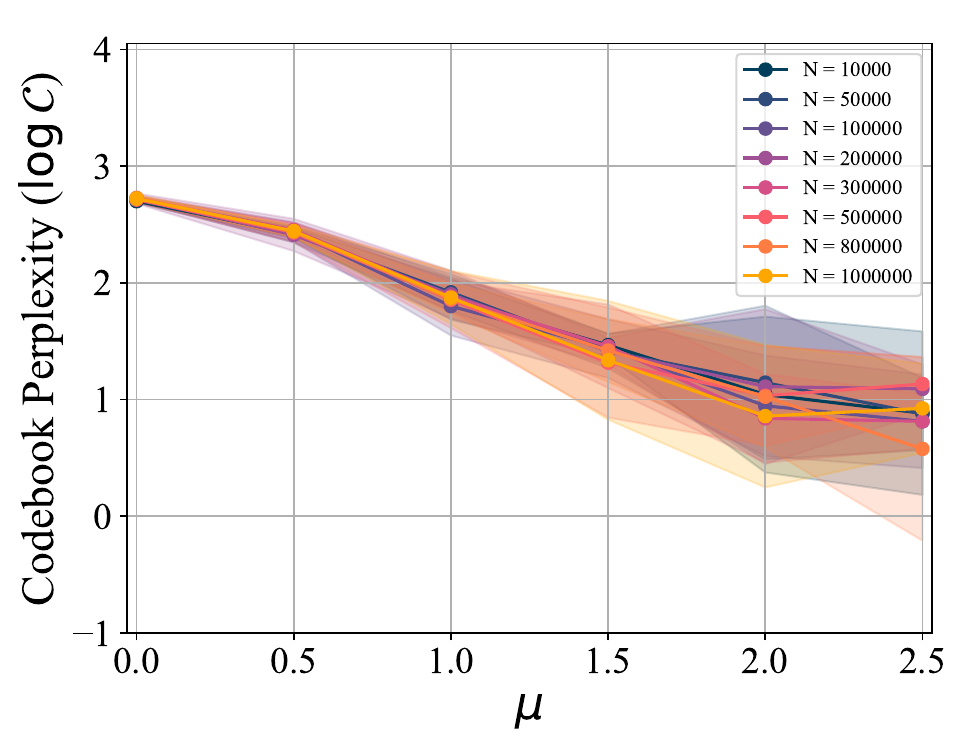}
    }
    \hspace{-0.39cm}
    \subfloat[$\mathcal{C}$ {w.r.t.} $K$]{ 
        \label{fig:gaussian_codebooksize_sigma_perplexity}  
        \includegraphics[width=0.16\textwidth]{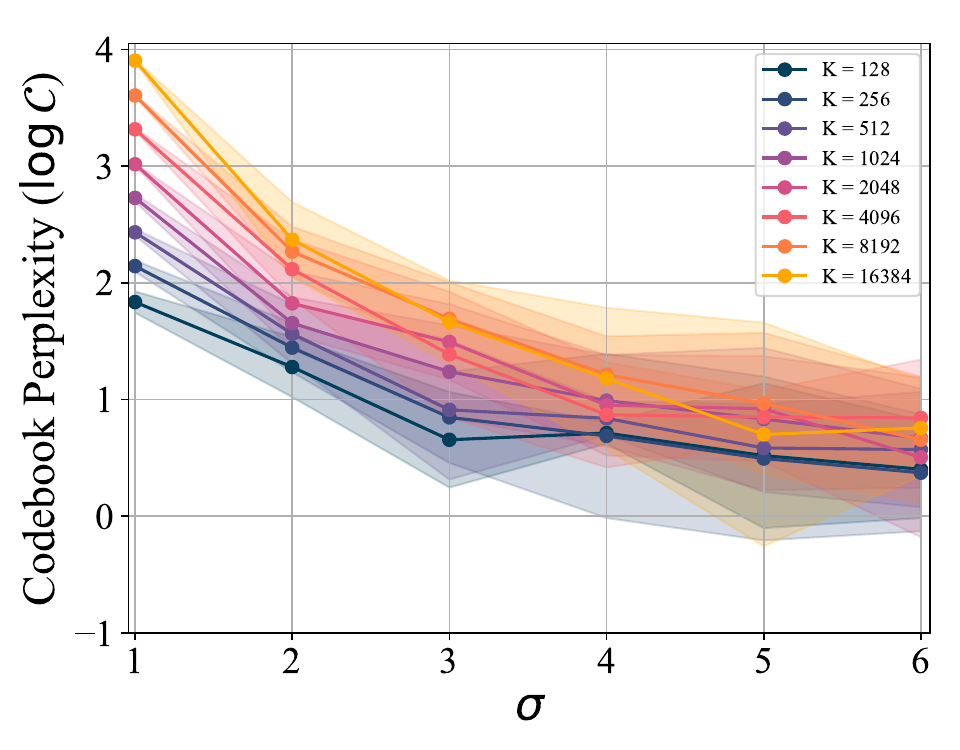}
    }
    \hspace{-0.39cm}
    \subfloat[$\mathcal{C}$ {w.r.t.} $d$]{
        \label{fig:gaussian_featuredim_sigma_perplexity}
        \includegraphics[width=0.16\textwidth]{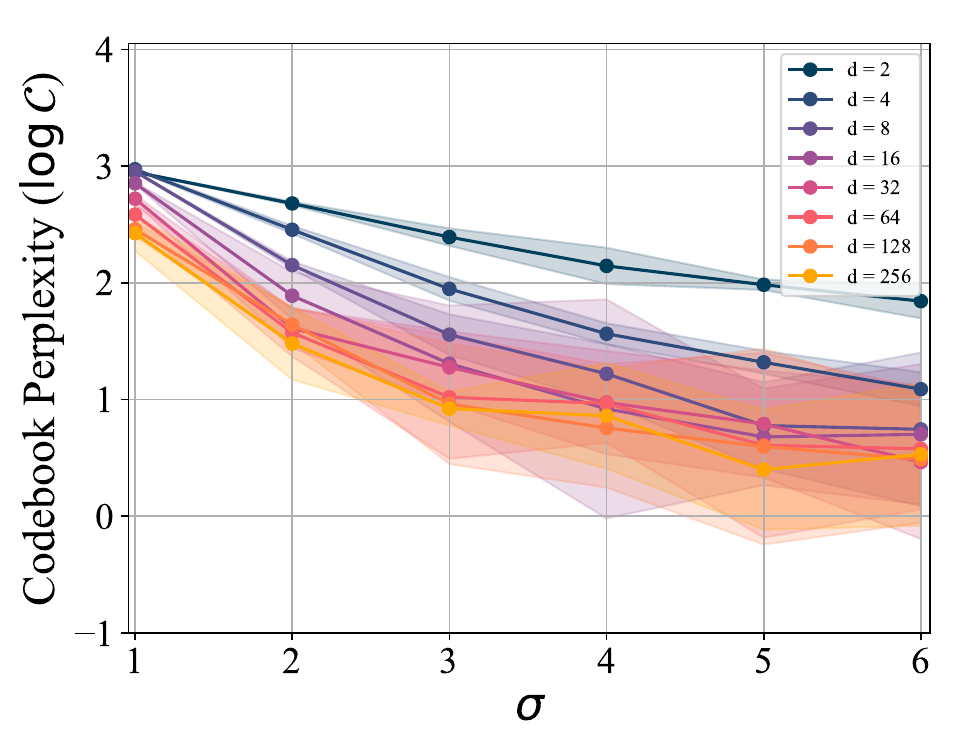}
    }
    \hspace{-0.39cm}
    \subfloat[$\mathcal{C}$ {w.r.t.} $N$]{
        \label{fig:gaussian_featuresize_sigma_perplexity}
        \includegraphics[width=0.16\textwidth]{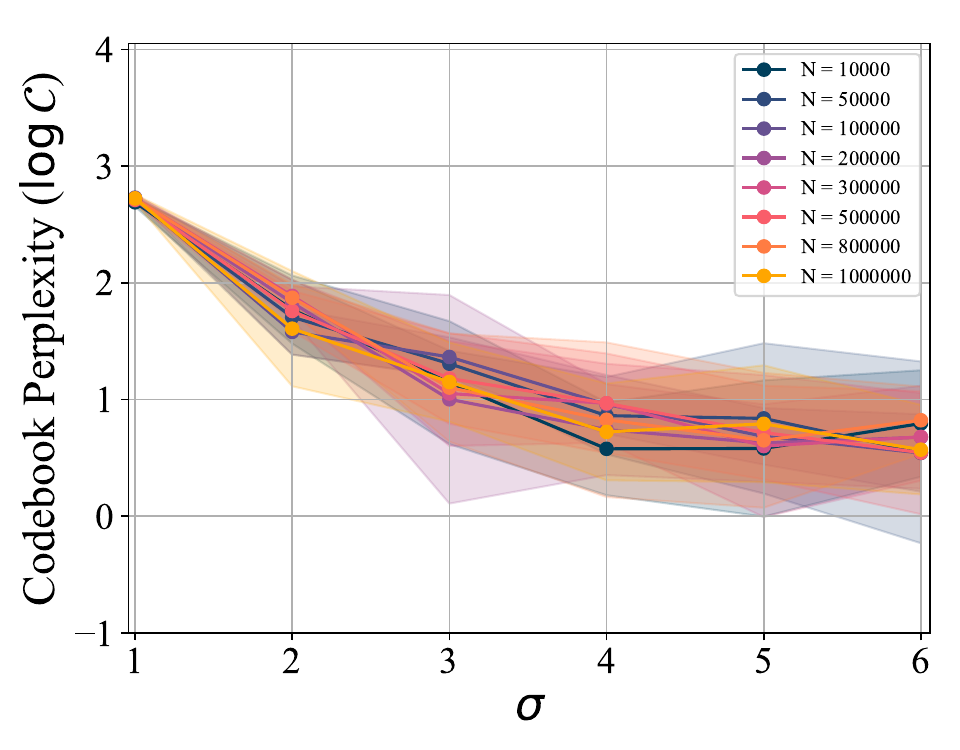}
    }
    \vspace{-1ex}
    \caption{\small{Quantitative analyses of the criterion triple when $\mathcal{P}_A$ and $\mathcal{P}_B$ are Gaussian distributions.}}
    \label{fig:quantitative analysis gaussian distribution}
    \vspace{-4ex}
\end{figure*}

\begin{figure*}[!t]
    \vspace{-1ex}
	\centering

	\subfloat[$\cE$ {w.r.t.} $K$]{ 
		\label{fig:uniform_codebooksize_mean_error}  
		\includegraphics[width=0.16\textwidth]{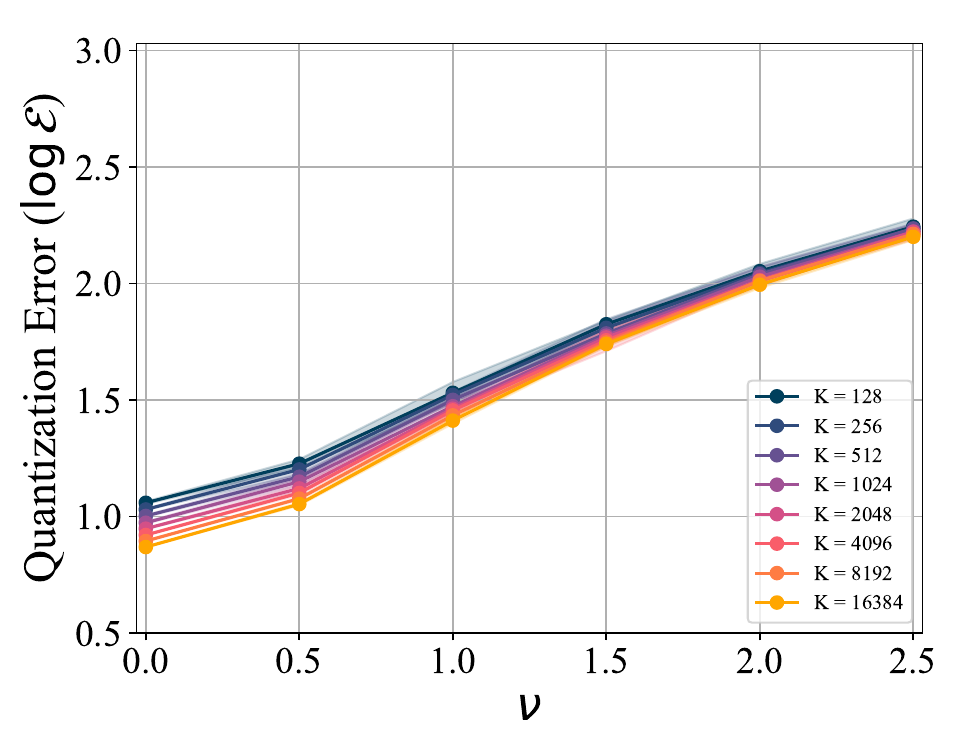}
    }
    \hspace{-0.39cm}
	\subfloat[$\cE$ {w.r.t.} $d$]{
		\label{fig:uniform_featuredim_mean_error}
		\includegraphics[width=0.16\textwidth]{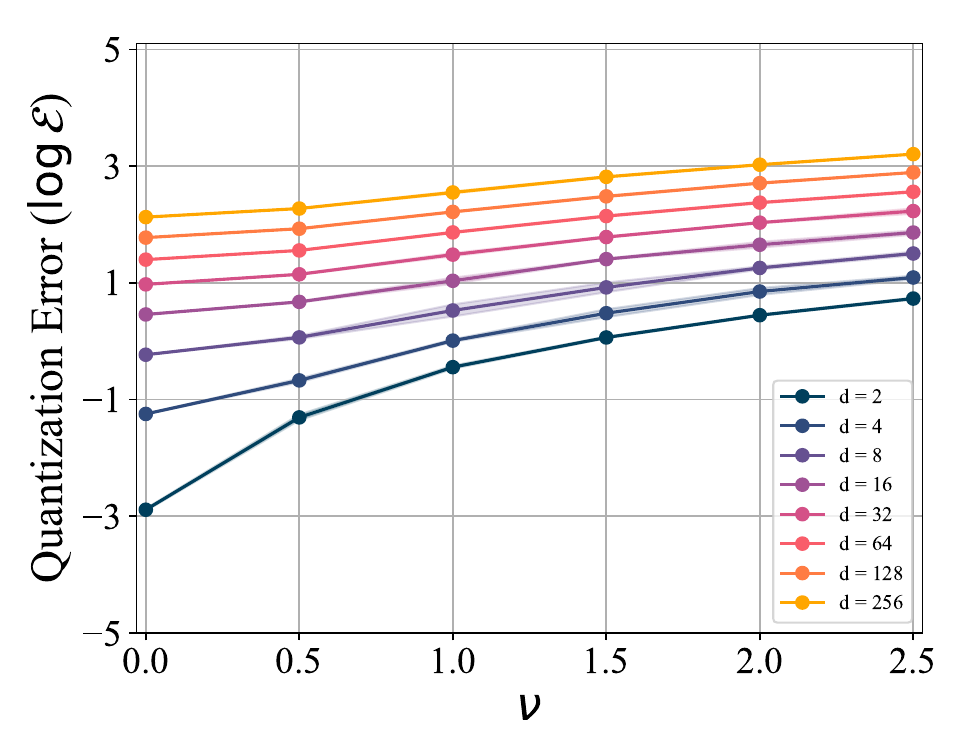}
	}
    \hspace{-0.39cm}
	\subfloat[$\cE$ {w.r.t.} $N$]{
		\label{fig:uniform_featuresize_mean_error}
		\includegraphics[width=0.16\textwidth]{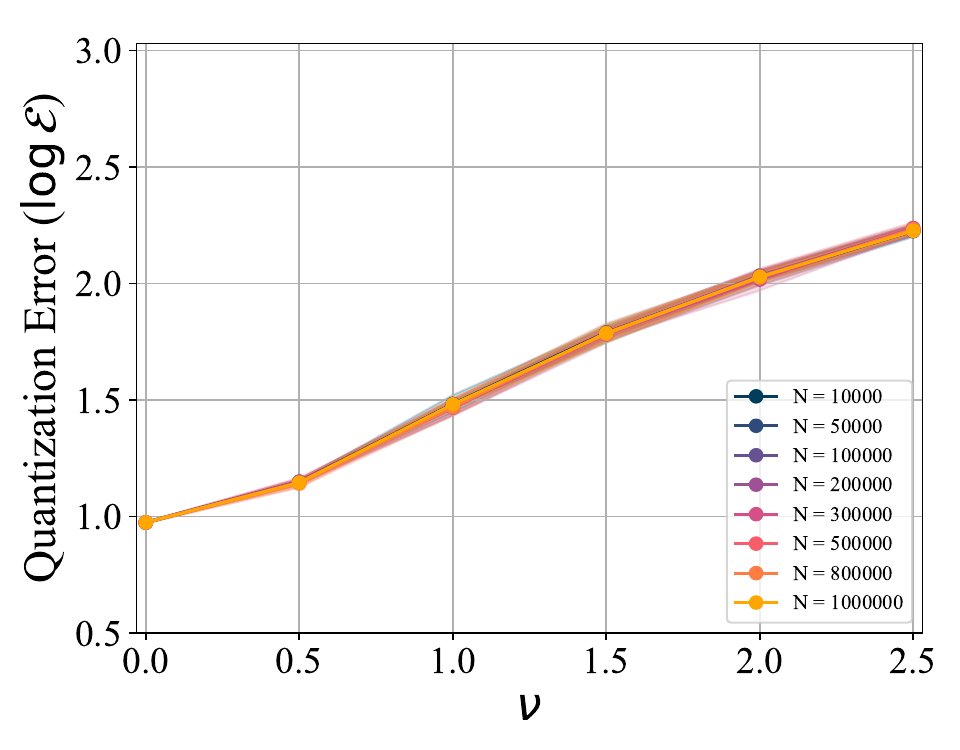}
	}
    \hspace{-0.39cm}
	\subfloat[$\cE$ {w.r.t.} $K$]{ 
		\label{fig:uniform_codebooksize_sigma_error}  
		\includegraphics[width=0.16\textwidth]{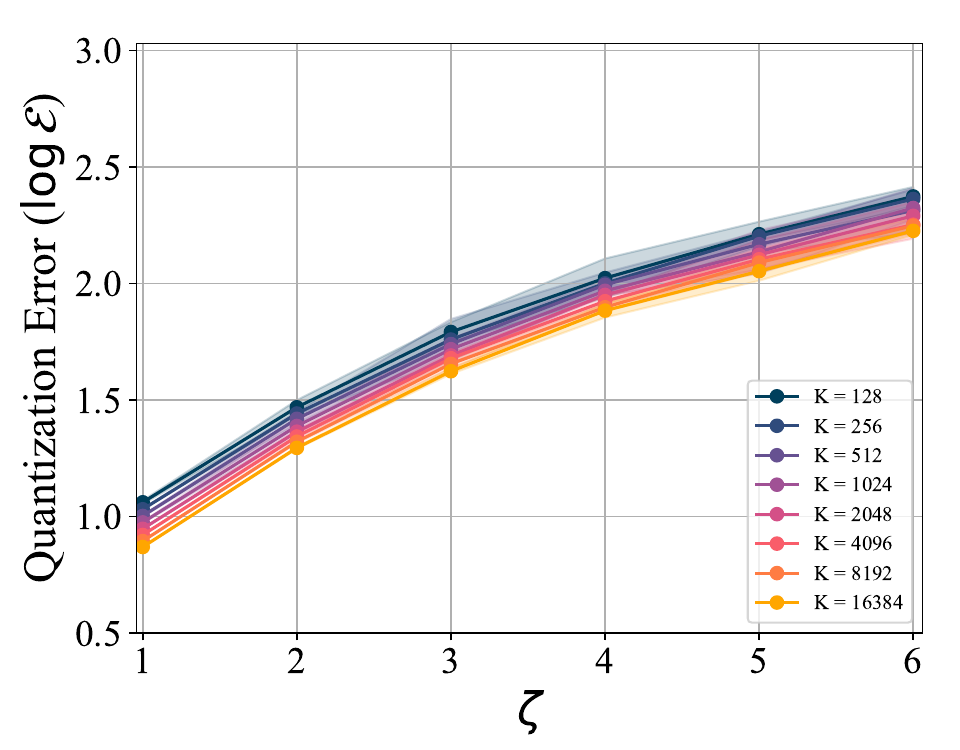}
	}
    \hspace{-0.39cm}
	\subfloat[$\cE$ {w.r.t.} $d$]{
		\label{fig:uniform_featuredim_sigma_error}
		\includegraphics[width=0.16\textwidth]{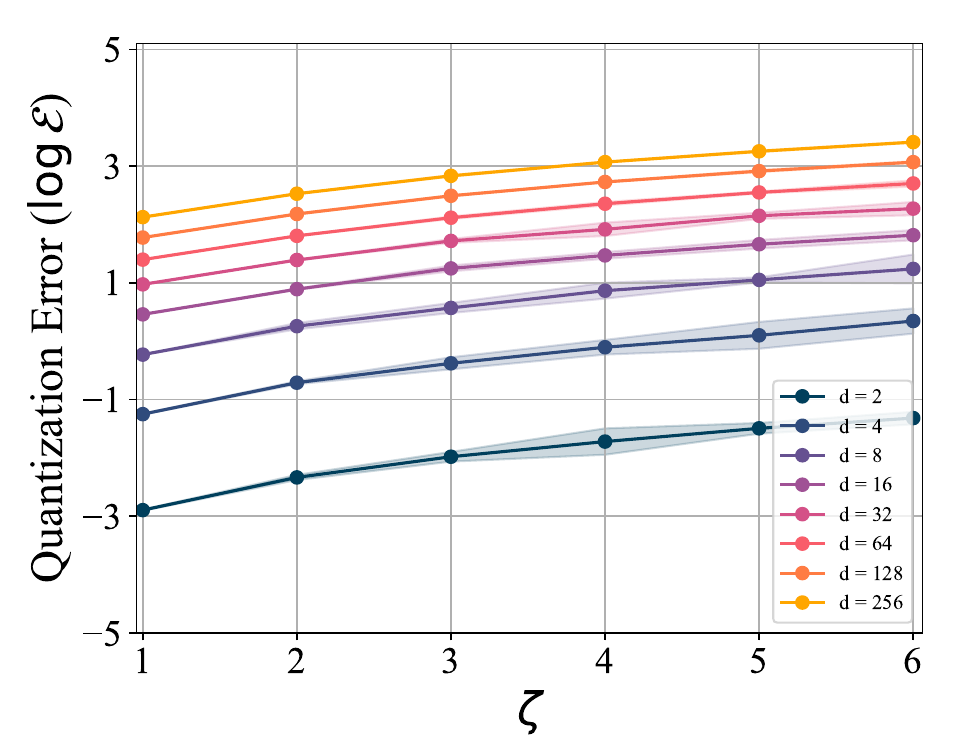}
	}
    \hspace{-0.39cm}
	\subfloat[$\cE$ {w.r.t.} $N$]{
		\label{fig:uniform_featuresize_sigma_error}
		\includegraphics[width=0.16\textwidth]{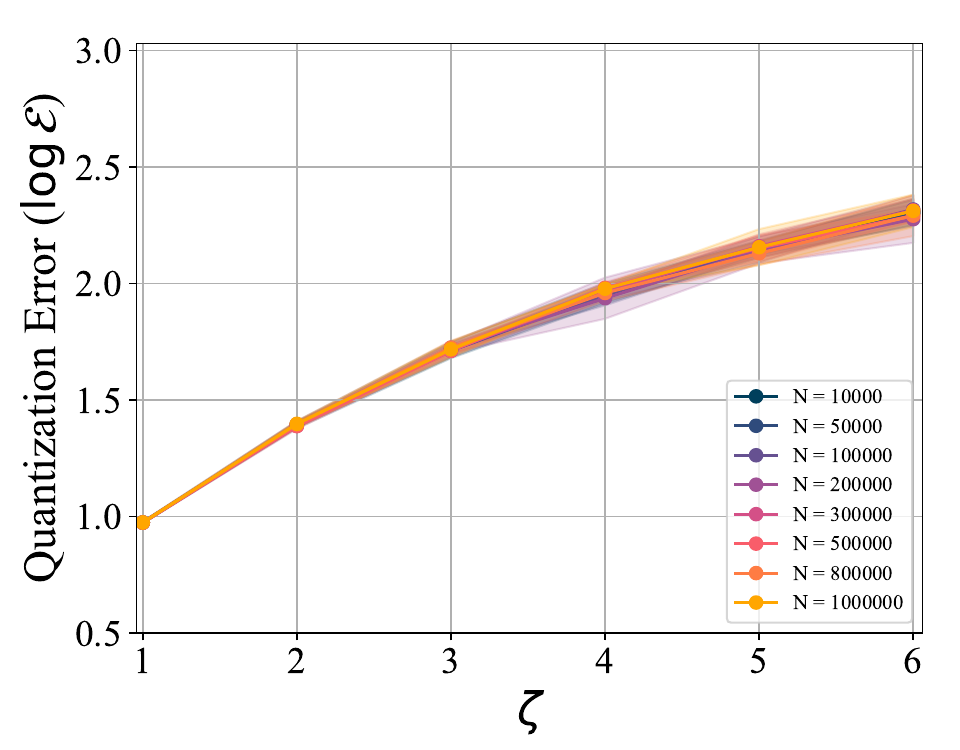}
	}


	\subfloat[$\cU$ {w.r.t.} $K$]{ 
        \label{fig:uniform_codebooksize_mean_utilization}  
		\includegraphics[width=0.16\textwidth]{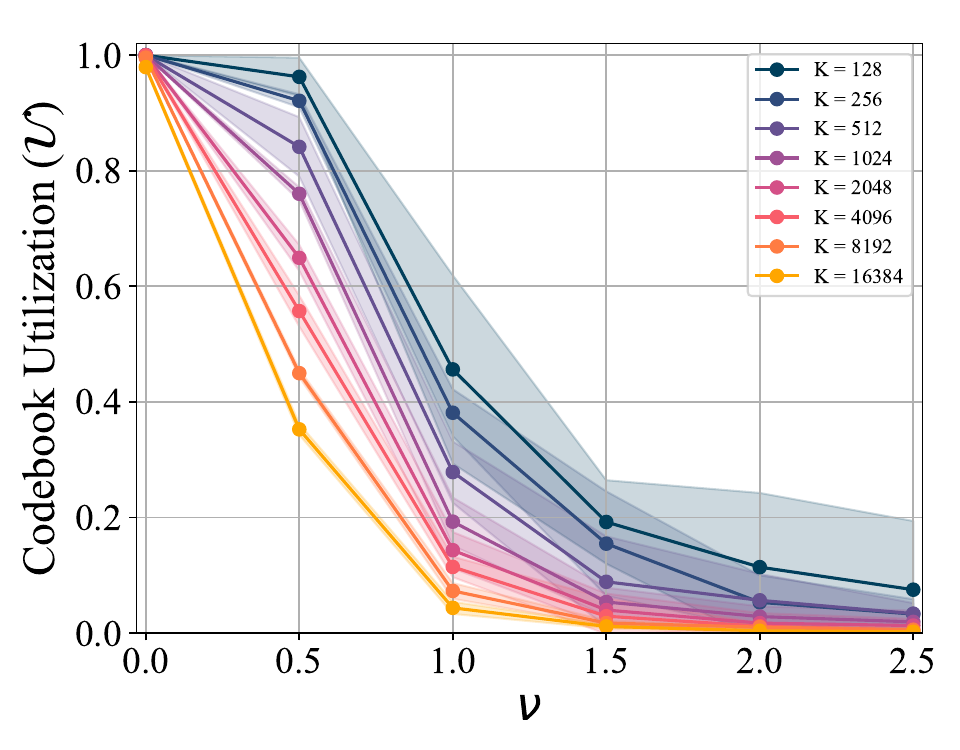}
    }
	\hspace{-0.39cm}
	\subfloat[$\cU$ {w.r.t.} $d$]{
        \label{fig:uniform_featuredim_mean_utilization}
		\includegraphics[width=0.16\textwidth]{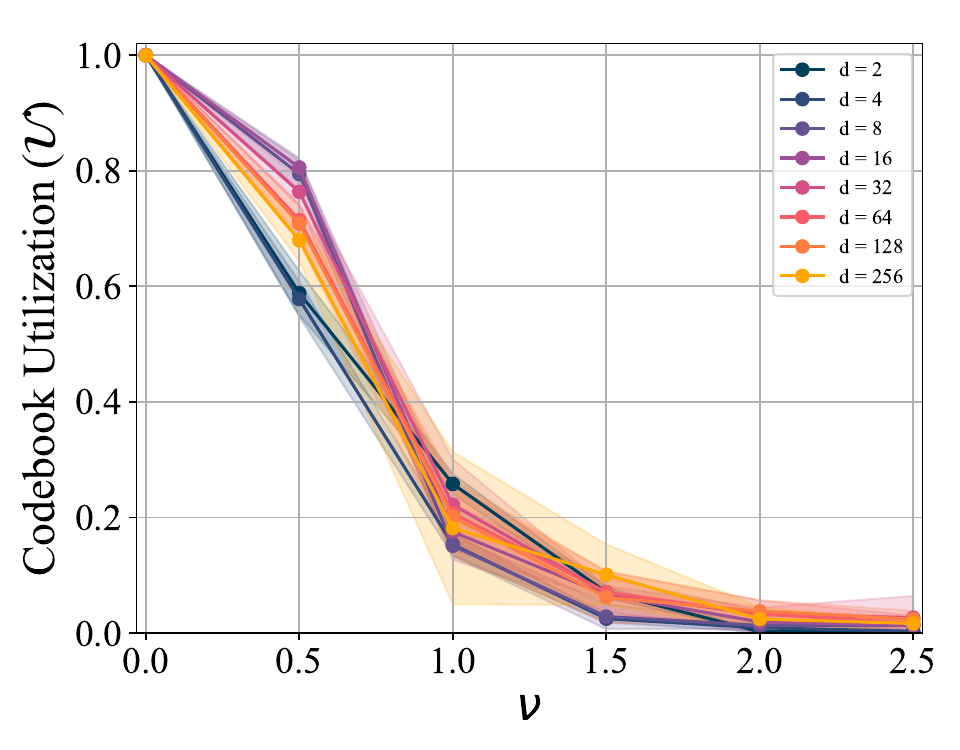}
	}
	\hspace{-0.39cm}
	\subfloat[$\cU$ {w.r.t.} $N$]{
        \label{fig:uniform_featuresize_mean_utilization}
		\includegraphics[width=0.16\textwidth]{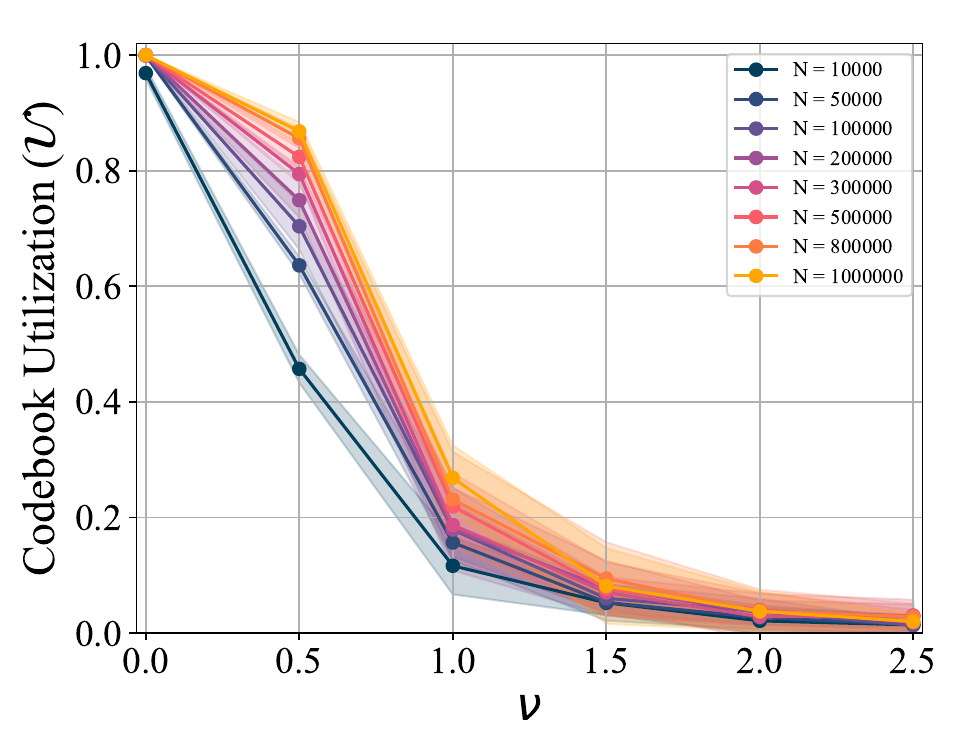}
	}
    \hspace{-0.39cm}
	\subfloat[$\cU$ {w.r.t.} $K$]{ 
        \label{fig:uniform_codebooksize_sigma_utilization}  
		\includegraphics[width=0.16\textwidth]{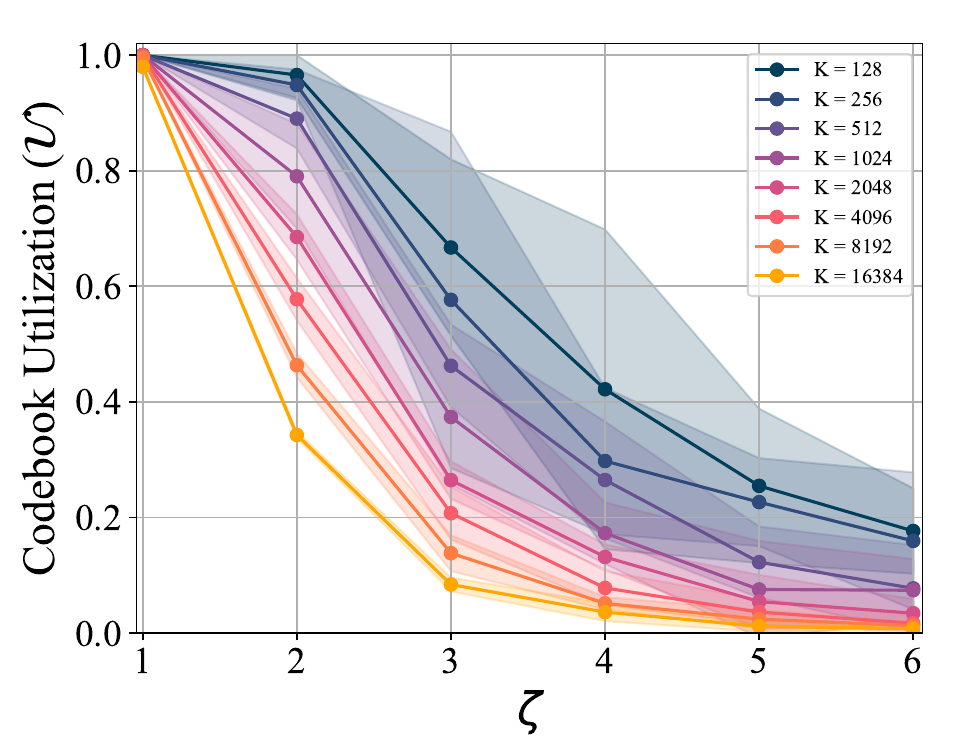}
	}
    \hspace{-0.39cm}
	\subfloat[$\cU$ {w.r.t.} $d$]{
        \label{fig:uniform_featuredim_sigma_utilization}
		\includegraphics[width=0.16\textwidth]{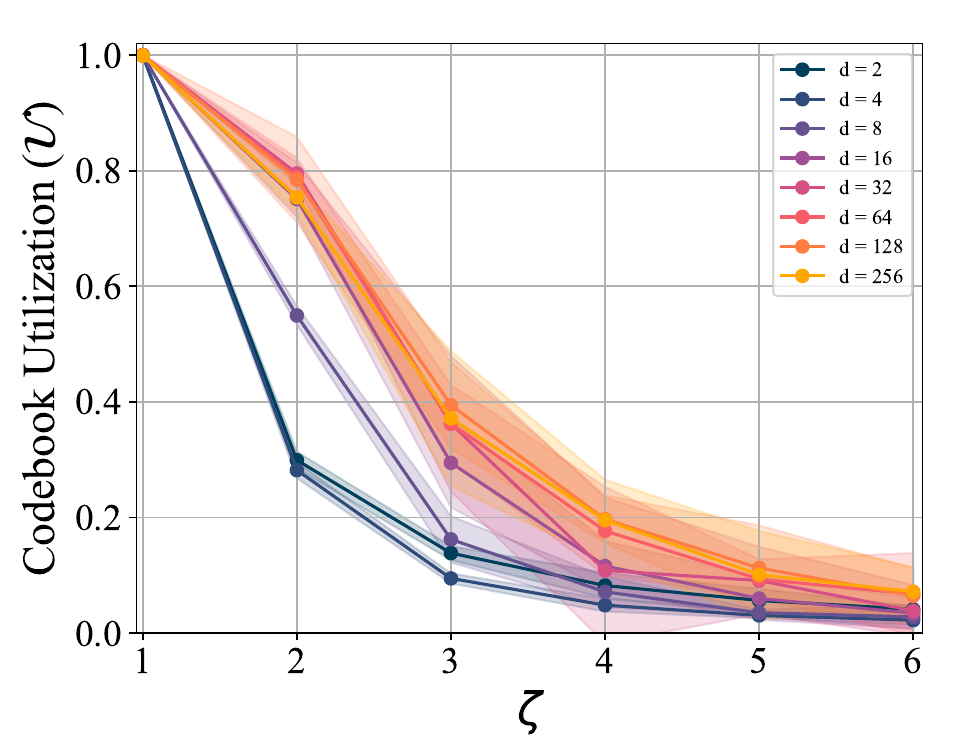}
	}
    \hspace{-0.39cm}
	\subfloat[$\cU$ {w.r.t.} $N$]{
        \label{fig:uniform_featuresize_sigma_utilization}
		\includegraphics[width=0.16\textwidth]{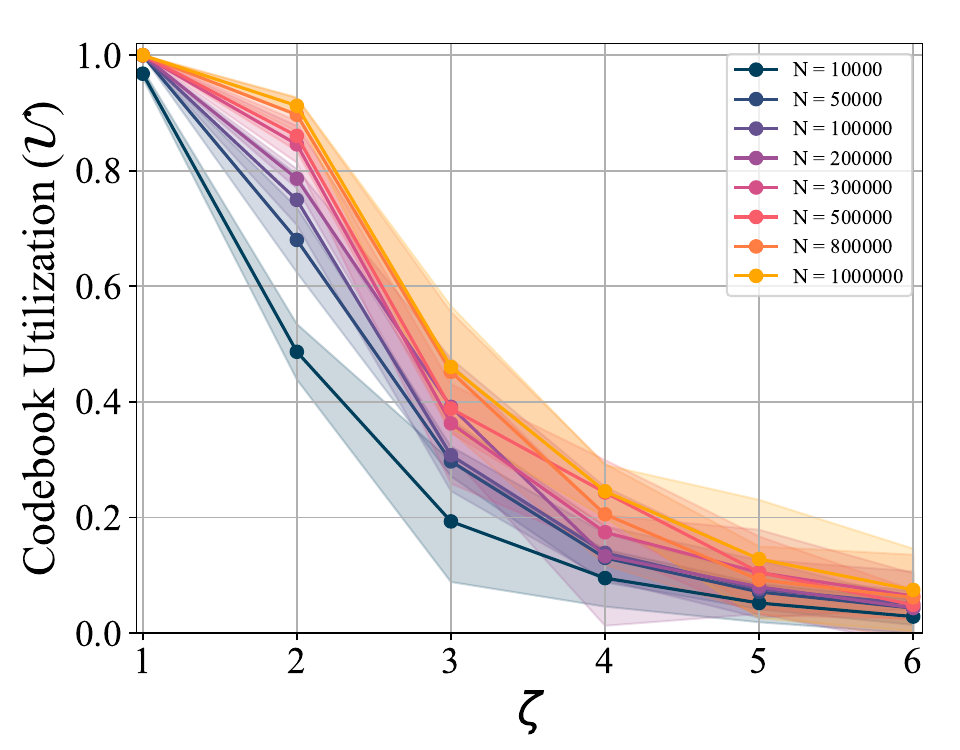}
	}


	\subfloat[$\mathcal{C}$ {w.r.t.} $K$]{ 
        \label{fig:uniform_codebooksize_mean_perplexity}  
		\includegraphics[width=0.16\textwidth]{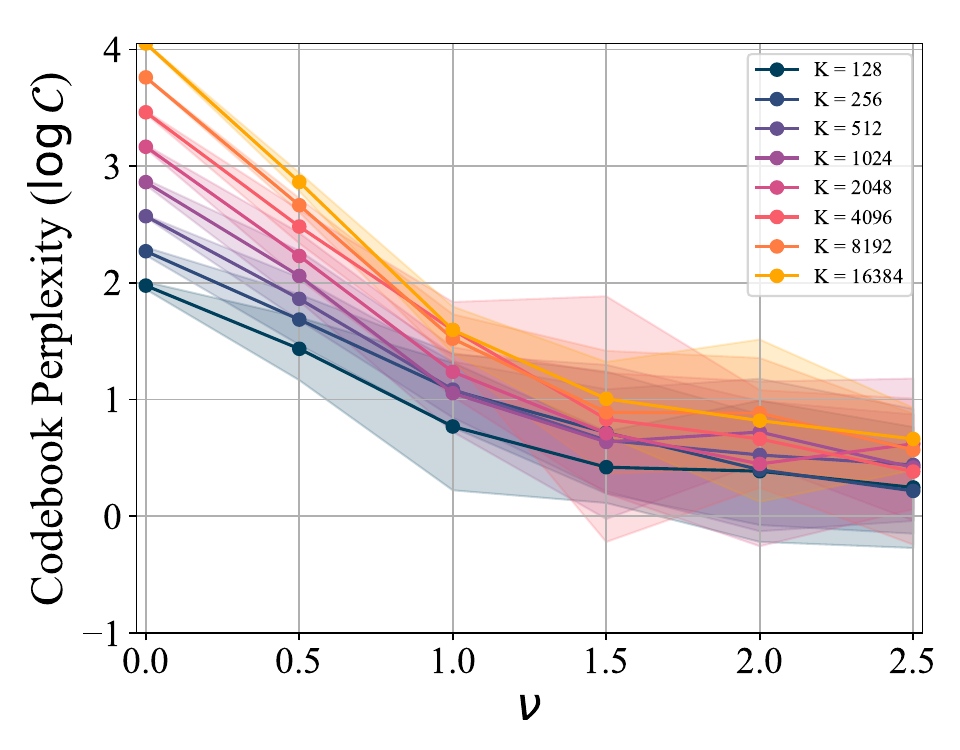}
    }
	\hspace{-0.39cm}
	\subfloat[$\mathcal{C}$ {w.r.t.} $d$]{
        \label{fig:uniform_featuredim_mean_perplexity}
		\includegraphics[width=0.16\textwidth]{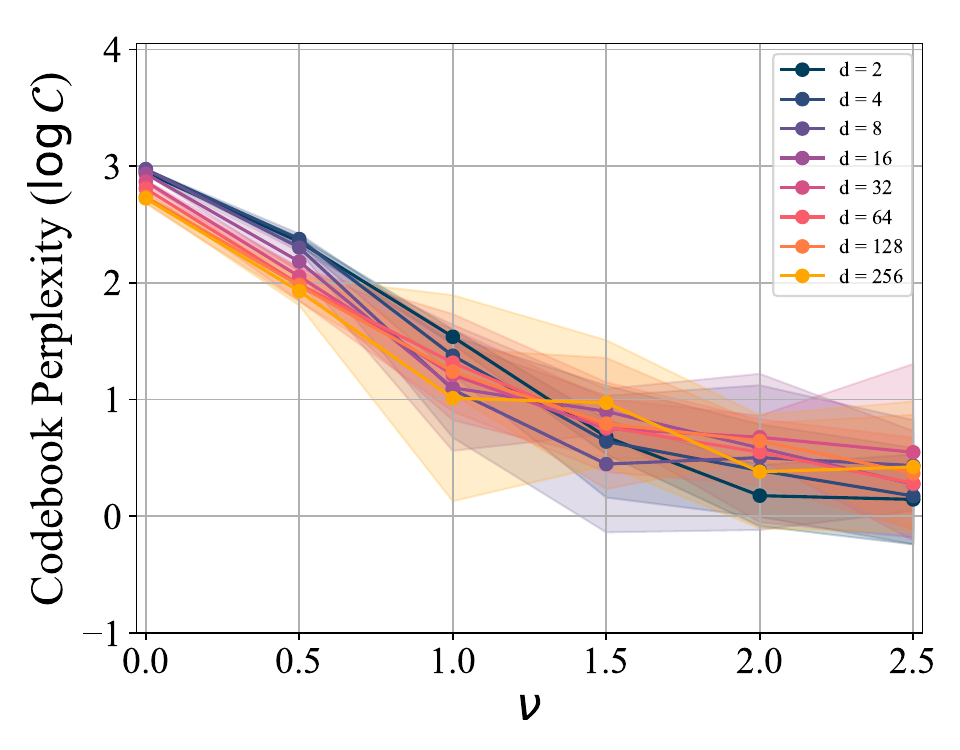}
	}
	\hspace{-0.39cm}
	\subfloat[$\mathcal{C}$ {w.r.t.} $N$]{
        \label{fig:uniform_featuresize_mean_perplexity}
		\includegraphics[width=0.16\textwidth]{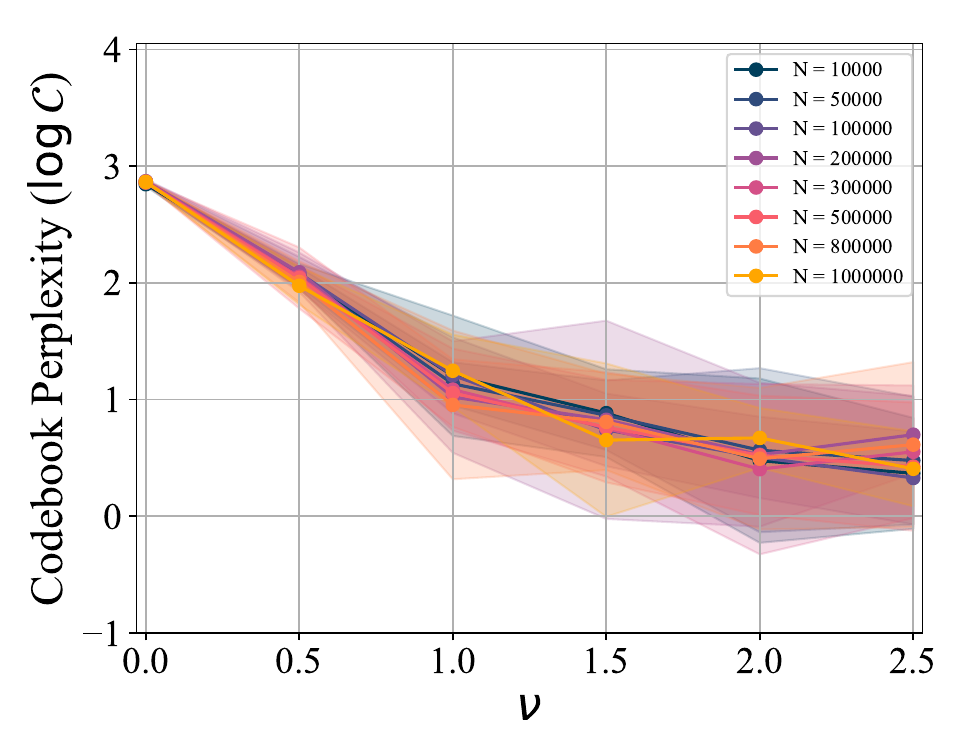}
	}
    \hspace{-0.39cm}
	\subfloat[$\mathcal{C}$ {w.r.t.} $K$]{ 
        \label{fig:uniform_codebooksize_sigma_perplexity}  
		\includegraphics[width=0.16\textwidth]{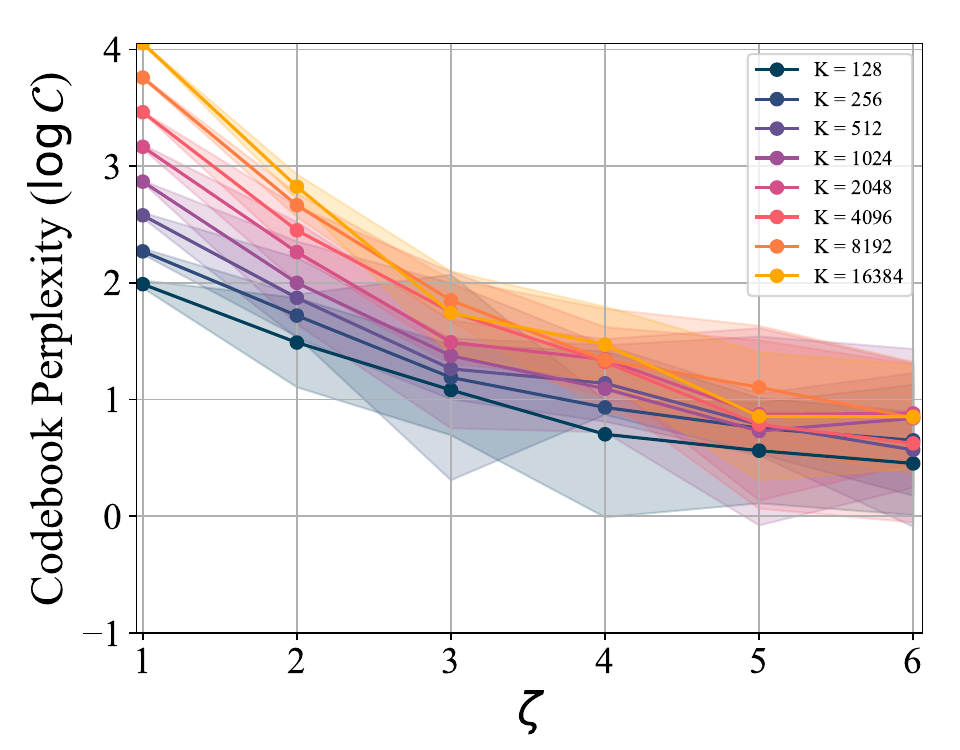}
	}
    \hspace{-0.39cm}
	\subfloat[$\mathcal{C}$ {w.r.t.} $d$]{
        \label{fig:uniform_featuredim_sigma_perplexity}
		\includegraphics[width=0.16\textwidth]{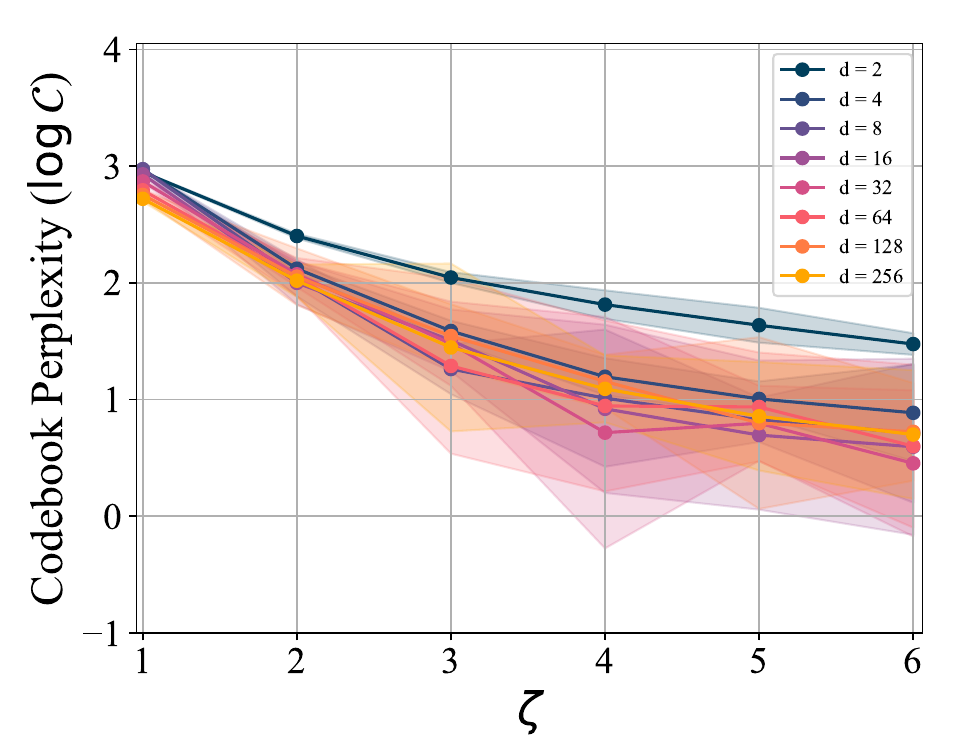}
	}
    \hspace{-0.39cm}
	\subfloat[$\mathcal{C}$ {w.r.t.} $N$]{
        \label{fig:uniform_featuresize_sigma_perplexity}
		\includegraphics[width=0.16\textwidth]{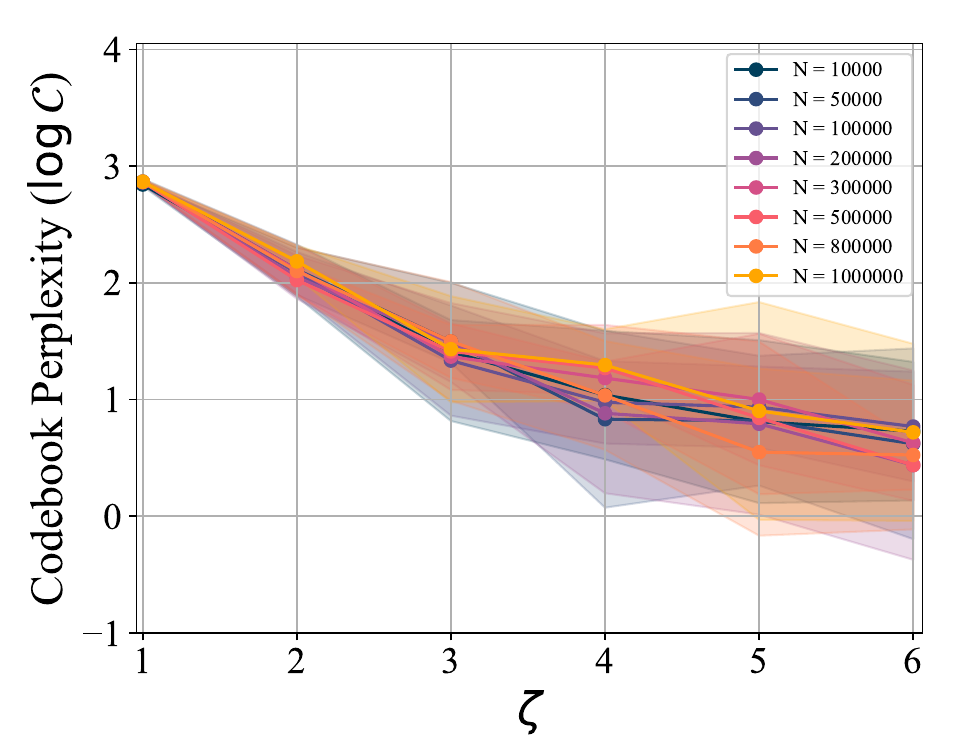}
	}

    \vspace{-1ex}
	\caption{Quantitative analyses of the criterion triple when $\mathcal{P}_A$ and $\mathcal{P}_B$ are uniform distributions.}
    \vspace{-2ex}
	\label{fig:quantitative analysis uniform distribution}
\end{figure*}

We begin by assuming that the distributions $\mathcal{P}_A$ and $\mathcal{P}_B$ are Gaussian. We generate a set of feature vectors $\{\bz_i\}_{i=1}^N$ from $\mathcal{N}_d(\bm 0,  \bm I)$ and a set of code vectors $\{\be_k\}_{k=1}^K$ from $\mathcal{N}_d(\mu \cdot \m 1, \bm I)$, with $\mu$ varying within $\{0.0, 0.5, 1.0, 1.5, 2.0, 2.5\}$. The criterion triple results are presented in Figures~\ref{fig:gaussian_codebooksize_mean_error} to~\ref{fig:gaussian_featuresize_mean_error}, Figures~\ref{fig:gaussian_codebooksize_mean_utilization} to~\ref{fig:gaussian_featuresize_mean_utilization}, and Figures~\ref{fig:gaussian_codebooksize_mean_perplexity} to~\ref{fig:gaussian_featuresize_mean_perplexity}. Across all tested configurations of $K, d, N$, we consistently observe that when $\mu = 0$ — indicating identical distributions between $\mathcal{P}_A$ and $\mathcal{P}_B$ — the criterion triple achieves the lowest $\cE$, highest $\cU$, and largest $\mathcal{C}$. This empirical evidence reinforces the effectiveness of aligning feature and codebook distributions in VQ.

Additionally, we further analyze the criterion triple by varying the covariance matrix. We sample a set of feature vectors $\{\bz_i\}_{i=1}^N$ from the distribution $\mathcal{N}_d(\bm 0, \bm I)$ and a set of code vectors $\{\be_k\}_{k=1}^K$ from $\mathcal{N}_d(\bm 0, \sigma^2 \bm I)$, where $\sigma$  is selected from $\{ 1, 2, 3, 4, 5, 6\}$. The results for the criterion triple are shown in Figures~\ref{fig:gaussian_codebooksize_sigma_error} to~\ref{fig:gaussian_featuresize_sigma_error}, Figures~\ref{fig:gaussian_codebooksize_sigma_utilization} to~\ref{fig:gaussian_featuresize_sigma_utilization}, and Figures~\ref{fig:gaussian_codebooksize_sigma_perplexity} to~\ref{fig:gaussian_featuresize_sigma_perplexity}. When $\sigma = 1$, indicating identical distributions between $\mathcal{P}_A$ and $\mathcal{P}_B$, all three evaluation criteria reach their optimal values: the lowest $\cE$, highest $\cU$, and largest $\mathcal{C}$ across all tested values of $K, d, N$. This result corroborates our earlier findings.

\subsection{Codebook Distribution and Feature Distribution are Unifrom Distributions}
\label{appendix:analyses under uniform distribution}

The above conclusion holds when $\mathcal{P}_A$ and $\mathcal{P}_B$ are other types of distributions, such as the uniform distribution. As shown in Figure~\ref{fig:quantitative analysis uniform distribution}, we sample a set of feature vectors $\{\bz_i\}_{i=1}^N$ from the distribution $\unif_d(-1, 1)$ and a set of code vectors $\{\be_k\}_{k=1}^K$ from $\unif_d(\nu-1, \nu+1)$, where $\nu$ is selected from the set  $\{ {0.0}, {0.5}, {1.0}, {1.5}, {2.0}, {2.5}\}$ or from $\unif_d(-\zeta, \zeta)$, with $\zeta$ drawn from the set $\{ 1, 2, 3, 4, 5, 6\}$. We observe that when $\mu = 0$ or $\zeta = 1$—indicating that $\mathcal{P}_A$  and $\mathcal{P}_B$ have identical distributions—the performance in terms of the criterion triple is optimal, achieving the lowerest $\cE$, the highest $\cU$, and the largest $\mathcal{C}$ across all tested values of $K, d, N$. Therefore, we conclude that our quantitative analyses are distribution-agnostic and can be generalized to other distributions.

\section{Statistical Distances over Gaussian Distributions}
\label{appendix: distribution distance}
We first introduce the definition of Wasserstein distance. 
\begin{definition}
\label{def:wasserstien distance}
The Wasserstein distance or earth-mover distance with $p$ norm is defined as below:
\begin{equation}
W_{p}(\mathbb{P}_r, \mathbb{P}_g) = (\inf_{\gamma \in \Pi(\mathbb{P}_r ,\mathbb{P}_g)} \mathbb{E}_{(x, y) \sim \gamma}\big[\|x - y\|^{p}\big])^{1/p}~.
\label{eq:def wasserstein distance}
\end{equation}
\end{definition} where $\Pi(\mathcal{P}_r,\mathcal{P}_g)$ denotes the set of all joint distributions $\gamma(x,y)$ whose marginals are $\mathcal{P}_r$ and $\mathcal{P}_g$ respectively. Intuitively, when viewing each distribution as a unit amount of earth/soil, the Wasserstein distance (also known as earth-mover distance) represents the minimum cost of transporting ``mass'' from $x$ to $y$ to transform distribution $\mathcal{P}_r$ into distribution $\mathcal{P}_g$. When $p=2$, this is called the quadratic Wasserstein distance.

In this paper, we achieve distributional matching using the quadratic Wasserstein distance under Gaussian distribution assumptions. We also examine other statistical distribution distances as potential loss functions for distributional matching and compare them with the Wasserstein distance. Specifically, we provide the Kullback-Leibler divergence and the Bhattacharyya distance over Gaussian distributions in Lemma~\ref{theorem:kl distance} and Lemma~\ref{theorem:bd distance}. It can be observed that the KL divergence for two Gaussian distributions involves calculating the determinant of covariance matrices, which is computationally expensive in moderate and high dimensions. Moreover, the calculation of the determinant is sensitive to perturbations and it requires full rank (In the case of not full rank, the determinant is zero, rendering the logarithm of zero undefined), which can be impractical in many cases. Other statistical distances like Bhattacharyya Distance suffer from the same issue. In contrast, quadratic Wasserstein distance does not require the calculation of the determinant and full-rank covariance matrices.

\begin{lemma}[Kullback-Leibler divergence~\cite{Lindley1959InformationTA}] \label{theorem:kl distance} 
Suppose two random variables $\m Z_1 \sim \mathcal{N}(\bm \mu_1, \m \Sigma_1)$ and $\m Z_2 \sim \mathcal{N}(\bm \mu_2, \m \Sigma_2)$ obey multivariate normal distributions, then Kullback-Leibler divergence between $\m Z1$ and $\m Z_2$ is:
\begin{align}
\KL(\m Z_1, \m Z_2) = \frac{1}{2}((\bm \mu_1-\bm \mu_2)^T \m \Sigma_2^{-1}(\bm \mu_1-\bm \mu_2) + \tr(\m \Sigma_2^{-1}\m \Sigma_1-\m I) + \ln{\frac{\det{\m \Sigma_2}}{\det \m \Sigma_1}}). \nonumber
\end{align}
\end{lemma}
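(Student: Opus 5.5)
The formula in Lemma~\ref{theorem:kl distance} follows from writing the KL divergence as an expectation of the log-density ratio under $\m Z_1$ and evaluating Gaussian moments. The plan has two parts: first reduce everything to first and second moments of $\m Z_1$, then evaluate a quadratic-form expectation with the trace trick.

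\textbf{Setup.} By definition,
\begin{align*}
\KL(\m Z_1,\m Z_2)=\EE_{\m z\sim \m Z_1}\big[\log f_1(\m z)-\log f_2(\m z)\big].
\end{align*}
Here $f_j(\m z)=(2\pi)^{-d/2}(\det\m\Sigma_j)^{-1/2}\exp\!\big(-\tfrac12(\m z-\bm\mu_j)^T\m\Sigma_j^{-1}(\m z-\bm\mu_j)\big)$, and both covariances are assumed positive definite. In the difference of log-densities the $(2\pi)^{-d/2}$ factors cancel. The normalizers contribute the constant $\tfrac12\ln(\det\m\Sigma_2/\det\m\Sigma_1)$. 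What remains is $\tfrac12\EE[(\m z-\bm\mu_2)^T\m\Sigma_2^{-1}(\m z-\bm\mu_2)]-\tfrac12\EE[(\m z-\bm\mu_1)^T\m\Sigma_1^{-1}(\m z-\bm\mu_1)]$.

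\textbf{Key identity.} For any symmetric $\m A$, any vector $\m m$, and $\m z$ with mean $\bm\mu_1$ and covariance $\m\Sigma_1$,
\begin{align*}
\EE[(\m z-\m m)^T\m A(\m z-\m m)]=\tr(\m A\m\Sigma_1)+(\bm\mu_1-\m m)^T\m A(\bm\mu_1-\m m).
\end{align*}
To prove it, write $\m z-\m m=(\m z-\bm\mu_1)+(\bm\mu_1-\m m)$ and note that the cross term has zero mean. The remaining quadratic term equals $\tr(\m A(\m z-\bm\mu_1)(\m z-\bm\mu_1)^T)$, and taking expectations inside the trace gives $\tr(\m A\m\Sigma_1)$.

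\textbf{Applying the identity.} With $\m A=\m\Sigma_1^{-1}$ and $\m m=\bm\mu_1$, the second expectation equals $\tr(\m I)=d$. With $\m A=\m\Sigma_2^{-1}$ and $\m m=\bm\mu_2$, the first expectation equals $\tr(\m\Sigma_2^{-1}\m\Sigma_1)+(\bm\mu_1-\bm\mu_2)^T\m\Sigma_2^{-1}(\bm\mu_1-\bm\mu_2)$. Combining these with the log-determinant term, and writing $d=\tr(\m I)$, gives exactly the stated expression.

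\textbf{Main obstacle.} There is no real difficulty. The only care needed is in the sign and orientation conventions: the expectation is taken under $\m Z_1$, so the log-ratio of determinants appears as $\ln(\det\m\Sigma_2/\det\m\Sigma_1)$. Positive-definiteness must also be assumed, so that the densities, inverses and determinants exist; this matches the paper's remark that the formula requires full rank.
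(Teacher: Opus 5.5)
Your derivation is correct: expressing the divergence as $\mathbb{E}_{\mathbf{z}\sim\mathbf{Z}_1}[\ln f_1(\mathbf{z})-\ln f_2(\mathbf{z})]$, cancelling the $(2\pi)^{-d/2}$ factors, and evaluating both quadratic forms with the identity $\mathbb{E}[(\mathbf{z}-\mathbf{m})^T\mathbf{A}(\mathbf{z}-\mathbf{m})]=\tr(\mathbf{A}\mathbf{\Sigma}_1)+(\bm\mu_1-\mathbf{m})^T\mathbf{A}(\bm\mu_1-\mathbf{m})$ gives exactly the stated formula, with the $-d$ written as $-\tr(\mathbf{I})$. The paper gives no proof of its own and only cites Lindley (1959), so your argument is the standard derivation behind that citation; your positive-definiteness caveat is also the right implicit hypothesis.
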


\begin{lemma}
[Bhattacharyya Distance~\cite{Bhattacharyya1943OnAM}]
\label{theorem:bd distance}
Suppose two random variables $\m Z_1 \sim \mathcal{N}(\bm \mu_1, \m \Sigma_1)$ and $\m Z_2 \sim \mathcal{N}(\bm \mu_2, \m \Sigma_2)$ obey multivariate normal distributions, $\m \Sigma = \frac{1}{2}(\m \Sigma_1 + \m \Sigma_2)$, then bhattacharyya distance between $\m Z1$ and $\m Z_2$ is:
\begin{align}
\mathcal{D}_{\textrm{B}}(\m Z_1, \m Z_2) = \frac{1}{8}(\bm \mu_1-\bm \mu_2)^T \m \Sigma^{-1}(\bm \mu_1-\bm \mu_2) + \frac{1}{2}\ln \frac{\det \m \Sigma}{\sqrt{\det \m \Sigma_1 \det \m \Sigma_2}}. \nonumber
\end{align}
\end{lemma}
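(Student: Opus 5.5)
The plan is to compute the Bhattacharyya coefficient $\mathrm{BC}=\int_{\mathbb{R}^d}\sqrt{p_1(\bm x)p_2(\bm x)}\,d\bm x$ directly and use $\mathcal{D}_{\textrm{B}}=-\ln \mathrm{BC}$, where $p_j$ is the density of $\mathcal{N}(\bm\mu_j,\m\Sigma_j)$. Both covariances are assumed positive definite; this is implicit in the statement, which uses $\det$ and $\m\Sigma^{-1}$.

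\textbf{Step 1 (the integrand).} Taking square roots of the two densities and multiplying gives
\begin{align*}
\sqrt{p_1p_2}=(2\pi)^{-d/2}(\det\m\Sigma_1\det\m\Sigma_2)^{-1/4}\exp\!\Big(-\tfrac14\big[Q_1(\bm x)+Q_2(\bm x)\big]\Big),
\end{align*}
where $Q_j(\bm x)=(\bm x-\bm\mu_j)^T\m\Sigma_j^{-1}(\bm x-\bm\mu_j)$.

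\textbf{Step 2 (completing the square).} Set $\m A=\tfrac12(\m\Sigma_1^{-1}+\m\Sigma_2^{-1})$ and $\bm\Delta=\bm\mu_1-\bm\mu_2$. The standard identity for the sum of two Gaussian quadratic forms is
\begin{align*}
Q_1(\bm x)+Q_2(\bm x)=2(\bm x-\bm m)^T\m A(\bm x-\bm m)+\bm\Delta^T(\m\Sigma_1+\m\Sigma_2)^{-1}\bm\Delta,
\end{align*}
for a suitable center $\bm m$ whose exact value is irrelevant. The residual comes from the fact that the minimum over $\bm x$ of $Q_1+Q_2$ equals $\bm\Delta^T(\m\Sigma_1+\m\Sigma_2)^{-1}\bm\Delta$. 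I would verify this identity with the Woodbury-type relation $\m\Sigma_1^{-1}-\m\Sigma_1^{-1}(\m\Sigma_1^{-1}+\m\Sigma_2^{-1})^{-1}\m\Sigma_1^{-1}=(\m\Sigma_1+\m\Sigma_2)^{-1}$. This verification is the one step requiring care.

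Since $\m\Sigma_1+\m\Sigma_2=2\m\Sigma$, the constant part of the exponent becomes
\begin{align*}
-\tfrac14\,\bm\Delta^T(2\m\Sigma)^{-1}\bm\Delta=-\tfrac18\,\bm\Delta^T\m\Sigma^{-1}\bm\Delta .
\end{align*}

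\textbf{Step 3 (the Gaussian integral).} What remains is $\int\exp\!\big(-\tfrac12(\bm x-\bm m)^T\m A(\bm x-\bm m)\big)\,d\bm x=(2\pi)^{d/2}(\det\m A)^{-1/2}$. Factor $\m A=\tfrac12\m\Sigma_1^{-1}(\m\Sigma_2+\m\Sigma_1)\m\Sigma_2^{-1}=\m\Sigma_1^{-1}\m\Sigma\,\m\Sigma_2^{-1}$, which gives $\det\m A=\det\m\Sigma/(\det\m\Sigma_1\det\m\Sigma_2)$.

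\textbf{Step 4 (assembling).} Collecting the factors,
\begin{align*}
\mathrm{BC}=(\det\m\Sigma_1\det\m\Sigma_2)^{-1/4}\Big(\tfrac{\det\m\Sigma_1\det\m\Sigma_2}{\det\m\Sigma}\Big)^{1/2}e^{-\frac18\bm\Delta^T\m\Sigma^{-1}\bm\Delta}=\Big(\tfrac{\sqrt{\det\m\Sigma_1\det\m\Sigma_2}}{\det\m\Sigma}\Big)^{1/2}e^{-\frac18\bm\Delta^T\m\Sigma^{-1}\bm\Delta}.
\end{align*}
Taking $-\ln$ yields exactly the stated formula for $\mathcal{D}_{\textrm{B}}$.
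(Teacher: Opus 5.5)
Your derivation is correct and complete. The factorization $\m A=\m\Sigma_1^{-1}\m\Sigma\,\m\Sigma_2^{-1}$ checks out, as do the residual $\bm\Delta^T(\m\Sigma_1+\m\Sigma_2)^{-1}\bm\Delta$ and the bookkeeping of the $(2\pi)^{d/2}$ and determinant factors.

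The paper gives no proof of this lemma; it cites it as the classical result of Bhattacharyya. Your computation of $-\ln\int\sqrt{p_1p_2}$ by completing the square is the standard derivation that citation stands in for.

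One small remark: the single Woodbury relation you quote is enough to verify the residual, without separately checking the cross term. $Q_1+Q_2$ is unchanged when $\bm x$, $\bm\mu_1$, $\bm\mu_2$ are all shifted by the same vector, so its minimum depends only on $\bm\Delta$. You may therefore take $\bm\mu_2=\bm 0$, and then only the $\m\Sigma_1^{-1}$ block appears.
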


\begin{figure}[t]
    \centering
    \vspace{-2ex}
    \includegraphics[width=\textwidth]{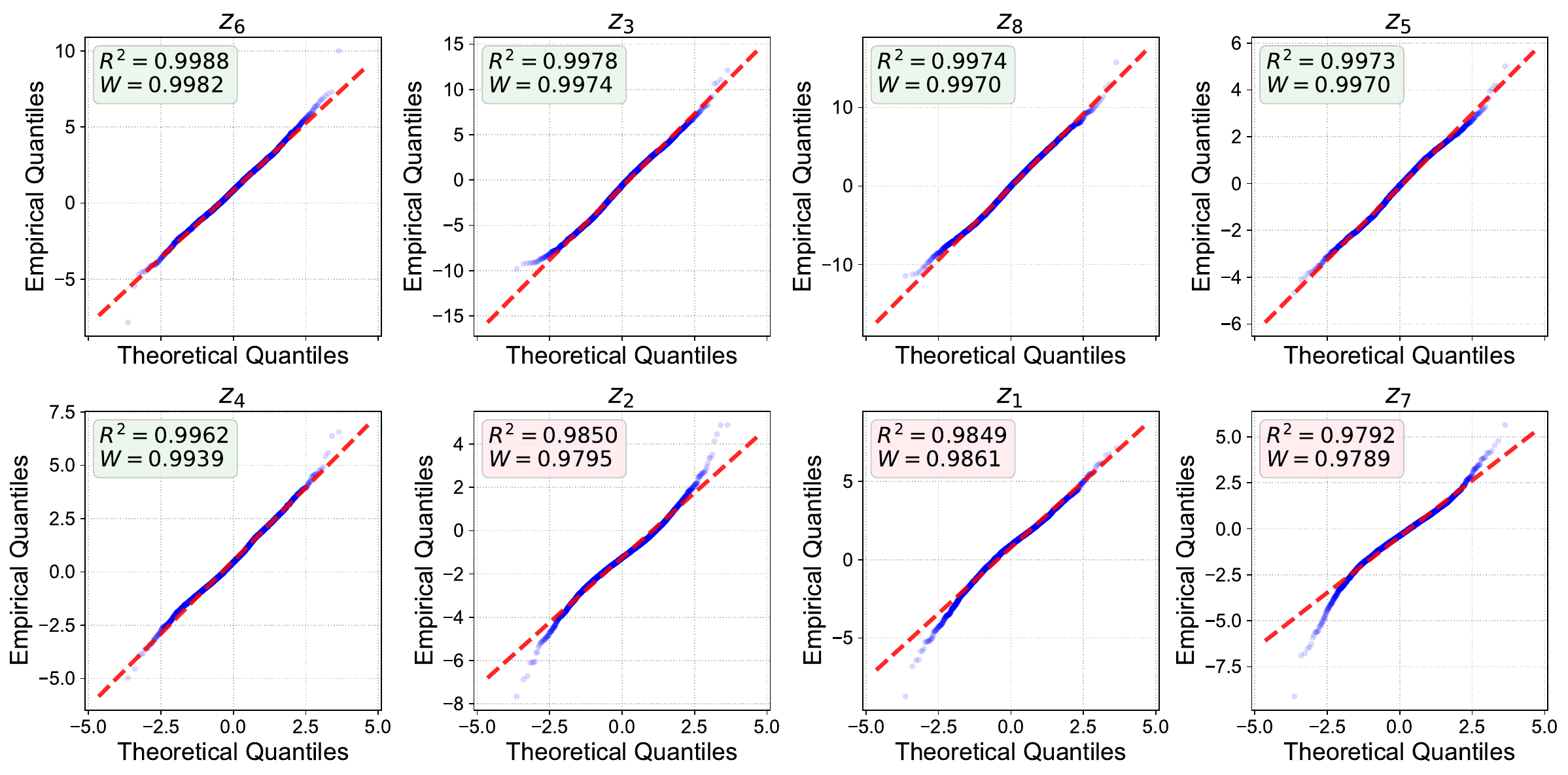}
    \vspace{-3ex}
    \caption{Comprehensive Q--Q plots for all latent dimensions. Subplots are ordered from the best-fitting dimension $z_6$ (top-left) to the least-fitting dimension $z_7$ (bottom-right) according to the $R^2$ score. \textbf{Visualization:} For visual clarity, blue points show a random subset of 5,000 samples. \textbf{Statistics:} The annotated $R^2$ and Shapiro--Wilk $W$ statistics (green/red boxes) are computed on the \textbf{full validation set ($N=409{,}600$)}. Across all dimensions, the Q--Q plots indicate a strong Gaussian fit for the central mass of the distributions, with only mild deviations in the extreme tails.}
    \label{fig:qq_full}
    \vspace{-2ex}
\end{figure}

\begin{figure}[!t]
    \centering
    \includegraphics[width=0.95\textwidth]{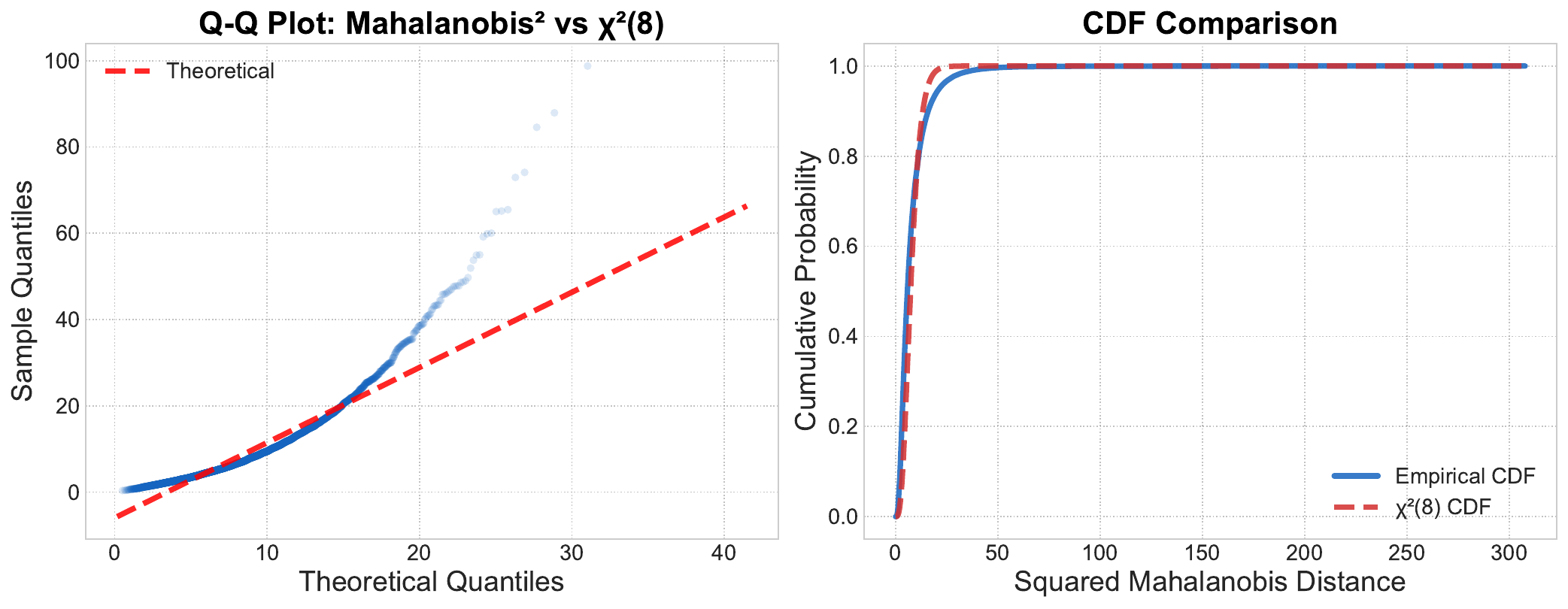}
    \vspace{-1ex}
    \caption{Multivariate normality assessment using Mahalanobis distance. Left (Q--Q plot): The squared Mahalanobis distances of the latent features are compared against the theoretical $\chi^2_8$ quantiles. Deviations are primarily observed in the extreme tails, while the bulk of the samples closely follows the reference line. Right (CDF): The empirical cumulative distribution function (blue solid) closely matches the theoretical $\chi^2_8$ CDF (red dashed) over the primary probability mass, indicating strong agreement in the high-density region.}
    \label{fig:mahalanobis}
    \vspace{-3ex}
\end{figure}

\section{Statistical Assessment of Gaussianity}
\label{sec:normality_tests}
To further validate the Gaussian assumption discussed in Section~\ref{sec:vqvae experiments}, we conduct a comprehensive set of univariate and multivariate normality tests on the validation set. Specifically, we analyze $N=409{,}600$ latent feature vectors ($d=8$), extracted from 1,600 images in the FFHQ dataset.

\textbf{Univariate Analysis.} 
Figure~\ref{fig:qq_full} shows the Q--Q plots for all eight dimensions of the latent feature vector, sorted by their $R^2$ goodness-of-fit scores. Consistent with the observations in the main paper, the majority of dimensions ($z_6, z_3, z_8, z_5, z_4$) demonstrate an excellent fit to the Gaussian distribution, with $R^2 > 0.99$. Although the remaining dimensions exhibit mild deviations in the extreme tails, their Shapiro--Wilk statistics remain high ($W > 0.97$), indicating that the central mass of the distributions is well approximated by a Gaussian.

\textbf{Multivariate Analysis.} 
To assess joint normality, we compute the squared Mahalanobis distance for each validation sample,
\[
d_i^2 = (z_i - \mu)^\top \Sigma^{-1} (z_i - \mu).
\]
Under the multivariate Gaussian hypothesis, $d_i^2$ should follow a Chi-squared distribution with $d=8$ degrees of freedom ($\chi^2_8$). The results are shown in Figure~\ref{fig:mahalanobis}. The Q--Q plot (Left) indicates strong agreement between the empirical and theoretical quantiles for the majority of samples, particularly in the high-density region near the origin, while heavier tails emerge only in the extreme quantiles. To further quantify this alignment, Figure~\ref{fig:mahalanobis} (Right) compares the cumulative distribution functions (CDFs). The empirical CDF (blue solid line) closely matches the theoretical $\chi^2_8$ CDF (red dashed line) across the primary probability mass.

Overall, while deep feature representations naturally exhibit heavy-tailed behavior due to outliers, both univariate and multivariate tests consistently indicate that the bulk of the distribution is well modeled by a Gaussian. This provides strong empirical justification for using the Wasserstein distance, which is derived under Gaussian assumptions, as an efficient and effective optimization surrogate. Moreover, the comparable performance of MMD VQ, which does not impose any parametric distributional assumptions, further suggests that the Gaussian approximation captures the essential statistical structure required for high-fidelity tokenization.

\begin{table*}[!h]
\centering 
\small
\vspace{-1ex}
\caption{Hyperparameters for the experiments in Table~\ref{tab:vqvae ffhq},~\ref{tab:vqvae imagenet},~\ref{tab:reconstruction_IN-main table}, ~\ref{tab:vqvae cifar10},~\ref{tab:vqvae svhn},~\ref{tab:vqgan tranplant ffhq}.}
\label{table:vqvae_hparams}
\vspace{-1ex}
\begin{tabular}{@{}lcccc@{}}
\toprule
Frameworks & VQ-VAE & VQ-VAE & VQGAN  & VQGAN\\
\midrule
\multicolumn{1}{l|}{Dataset} & CIFAR-10/SVHN & FFHQ/ImageNet & FFHQ & ImageNet \\
\multicolumn{1}{l|}{Input size} & $32\times 32\times 3$ & $256\times 256\times 3$ & $256\times 256\times 3$ & $256\times 256\times 3$\\
\multicolumn{1}{l|}{Latent size} & $8\times 8 \times 8$ & $16\times 16 \times 8$ & $16\times 16 \times 32$ & $16\times 16 \times 32$ \\
\multicolumn{1}{l|}{encoder/decoder channels} & $64$ & $64$ & $160$ & $160$\\
\multicolumn{1}{l|}{encoder/decoder channel mult.} & $[1, 1, 2]$ & $[1, 1, 2, 2, 4]$ & $[1, 1, 2, 2, 4]$ & $[1, 1, 2, 2, 4]$ \\
\multicolumn{1}{l|}{Batch size} & 128 & 32 & 32 & 32\\
\multicolumn{1}{l|}{Initial Learning rate $lr$} & $5\times 10^{-5}$ & $5\times 10^{-5}$ & $1\times 10^{-5}$ & $1\times 10^{-5}$\\
\multicolumn{1}{l|}{Perceptual loss Coefficient} & $0$ & $0$ & $1.0$ & $1.0$\\
\multicolumn{1}{l|}{Adversarial loss Coefficient} & $0$ & $0$ & $0.4$ & $0.4$\\
\multicolumn{1}{l|}{Codebook dimensions} & $8$ & $8$ & $32$  & $32$\\
\multicolumn{1}{l|}{Training Epochs} & $50$ & $30/4$ & $30$ & $5/10/15$ \\
\multicolumn{1}{l|}{GPU Resources} & 1 V100 16GB &  1 A100 40GB &  2 H100 80GB & 2 H100 80GB \\
\bottomrule
\end{tabular}
\vspace{-3ex}
\end{table*}

\section{The Experimental Details} 
\label{appendix:synthetic experiment details}

\subsection{Synthetic Experimental Details in Section~\ref{sec:effects of distribution matching} } \label{appendix:details part1}

As depicted in Figure~\ref{fig:criterion triple analysis} in Section~\ref{sec:effects of distribution matching}, we conduct a qualitative analyses of the criterion triple. Specifically, we sample a set of feature vectors $\{\bz_i\}_{i=1}^N$ from within the red circle, and a collection of code vectors $\{\be_k\}_{k=1}^K$ from within the green circle, with parameters set to $K=400$, $N=10000$ and $d=2$ for the calculation of the criterion triple $(\cE, \cU, \mathcal{C})$. For the visualization, we select 10\% of the feature vectors and 90\% of the code vectors for plotting.

\subsection{Synthetic Experimental Details in Appendix~\ref{appendix:supplementary comprehensive quantitative analyses}} \label{appendix:details part3}

As illustrate in Figure~\ref{fig:quantitative analysis gaussian distribution} in Appendix~\ref{appendix:analyses under gaussian distribution}, we undertake comprehensive quantitative analyses centered around the criterion triple $(\cE, \cU, \mathcal{C})$. In these analyses, we assume that $\mathcal{P}_A$ and $\mathcal{P}_B$ are Gaussian distributions, from which we sample a set of feature vectors $\{\bz_i\}_{i=1}^N$ and a collection of code vectors $\{\be_k\}_{k=1}^K$. The default parameters are set to $N=200,000$, $K=1024$, and $d=32$ for all figures unless otherwise specified. For instance, in Figure~\ref{fig:gaussian_codebooksize_mean_error}, $N$ and $d$ are taken at their default values, while the $K$ is varied within the set $\{128, 256, 512, 1024, 2048, 4096, 8192, 16284\}$. Additionally, each synthetic experiment is repeated five times, and the average results are reported, along with the calculation of 95\% confidence intervals. In all figures, mean results are represented by points, while the confidence intervals are shown as shaded areas. Identical parameter settings are employed when $\mathcal{P}_A$ and $\mathcal{P}_B$ are uniform distributions, as illustrated in Figure~\ref{fig:quantitative analysis uniform distribution} in Appendix~\ref{appendix:analyses under uniform distribution}.

\subsection{Synthetic Experimental Details in Appendix~\ref{appendix:Impact of Distribution Variance}} \label{appendix:details part4}
We set $K=8192, d=8, N=100000$ when calculating the criterion triple $(\cE, \cU, \mathcal{C})$ in Appendix~\ref{appendix:Impact of Distribution Variance}. Each synthetic experiment is repeated five times, and the average results are reported in Table~\ref{table:distribution variance}. 

\subsection{Synthetic Experimental Details in Section~\ref{sec:atomic setting}} \label{appendix:details part2}
We provide experimental details of Figure~\ref{fig:analysis on criterion triple (VQ)} in Section~\ref{sec:atomic setting}. In our experimental setup, we evaluate five distinct VQ algorithms using the criterion triple $(\cE, \cU, \mathcal{C})$. All experiments run on a single NVIDIA A100 GPU, with a codebook size $K$ of 16,384 and dimensionality $d$ of 8 across all algorithms. Each algorithm trains for 2,000 steps, with 50,000 feature vectors sampled from the specified Gaussian distribution at each step. For \emph{Wasserstein VQ}, Vanilla VQ, and VQ + MLP, we use the SGD optimizer for training. For VQ EMA and Online Clustering, we use classical clustering algorithms—$k$-means~\cite{Bradley1998RefiningIP} and $k$-means++\cite{Arthur2007kmeansTA}—to update code vectors.

\subsection{Experimental Details in Section~\ref{sec:experiments}}
\label{appendix:experimental details}

\paragraph{Data Augmentation} For FFHQ and ImageNet-1k datasets, we follow LLama Gen~\cite{Sun2024AutoregressiveMB} and apply iterative box downsampling to resize images to 256×256 resolution. For CIFAR-10 and SVHN, the images are kept at their original resolution.

\paragraph{Encoder-Decoder Architecture In VQ-VAE} For the ImageNet and FFHQ datasets, our proposed Wasserstein VQ and all baseline methods adopt identical encoder-decoder architectures and parameter configurations. Across all baselines in these frameworks, the encoder—a U-Net~\cite{Ronneberger2015UNetCN}—downscales the input image by a factor of 16. For CIFAR-10 and SVHN datasets, the encoder reduces the input resolution by a factor of 4. Detailed hyperparameter configurations are summarized in Table~\ref{table:vqvae_hparams}.

\paragraph{Encoder-Decoder Architecture In VQGAN} To accelerate adversarial training in VQGAN, our proposed Wasserstein VQ-a/b/c and Wasserstein VAR-a/b/c models on the ImageNet dataset, as well as the Wasserstein VQ model on the FFHQ dataset, are built upon the VQ-Transplant framework. Within this framework, we deploy the pretrained VAR tokenizer~\cite{Tian2024VisualAM} for initialization. Consequently, the encoder–decoder architecture is identical to that used in the VAR tokenizer.

\paragraph{Training Details in VQ-VAE} All experiments employ identical training settings: we use the AdamW optimizer~\cite{Loshchilov2017DecoupledWD} with $\beta_1=0.9$ and $\beta_1=0.95$, an initial learning rate $lr$, and apply a half-cycle cosine decay schedule following a linear warm-up phase. For specific details on training epochs and batch sizes, refer to Table~\ref{table:vqvae_hparams}.

\paragraph{Training Details in VQ-GAN} Our proposed Wasserstein VQ and Wasserstein VAR were trained on two NVIDIA H100 GPUs using the AdamW optimizer~\cite{Loshchilov2017DecoupledWD} with $\beta_1=0.9$ and $\beta_1=0.95$.
During VQ module substitution, we used an initial learning rate of $10^{-4}$ with linear decay to 
$10^{-5}$.
For decoder adaptation, the learning rate remained constant at $lr = 10^{-5}$. For specific details on training epochs and batch sizes, refer to Table~\ref{table:vqvae_hparams}. For adversarial training, we follow the VQ-Transplant framework~\cite{Fang2025VQTransplant}, which employs a frozen DINO-S~\cite{Caron2021EmergingPI,Oquab2023DINOv2LR} discriminator with an architecture reminiscent of StyleGAN~\cite{Karras2019AnalyzingAI,karras2019style}. Consistent with VQ-Transplant~\cite{Fang2025VQTransplant}, we further incorporate DiffAug~\cite{Zhao2020DifferentiableAF}, consistency regularization~\cite{Zhang2019ConsistencyRF}, and LeCAM regularization~\cite{Tseng2021RegularizingGA} to stabilize discriminator training.

\paragraph{Loss Weight in VQ-VAE} For all three baselines, $\beta$ is typically set to a value within the range $[0.25, 2]$. In our experiments, $\beta$ is set to a fixed value of $1.0$. For our proposed \emph{Wasserstein VQ} model, we set $\beta$ to a much smaller value,  e.g., $\beta=0.1$. The smaller $\beta$ values enable the Wasserstein distance to dominate the loss function, thereby more effectively narrowing the gap between the distributions.

\paragraph{Loss Weight in VQGAN} For our proposed Wasserstein VQ and Wasserstein VAR models, the perceptual loss weight $\lambda_P$ is fixed at 1. In multi-scale quantization experiments, we set $\lambda_G = 0.5$, whereas in fixed-scale quantization experiments, $\lambda_G = 0.4$. The coefficient $\gamma$ is set to 0.2 for all configurations incorporating the Wasserstein distance (i.e., Wasserstein VQ and Wasserstein VAR).

\section{Supplementary Results and Analyses in VQ-VAE Framework}
\subsection{VQ-VAE Performance on  CIFAR-10 and SVHN datasets}
\label{appendix:vqvae cifar-10 and svhn}
Due to space limitations in the main text, we have relocated the VQ-VAE evaluation on CIFAR-10 and SVHN datasets to the appendix. As demonstrated in Table~\ref{tab:vqvae cifar10}, \ref{tab:vqvae svhn}, our Wasserstein VQ consistently outperforms all baselines across both datasets, achieving superior results on nearly all evaluation metrics regardless of codebook size. Notably, we observe that Wasserstein VQ fails to reach 100\% codebook utilization on SVHN, which may be attributed to the dataset’s limited diversity.

\begin{table*}[!h]
  \centering
  \vspace{-1ex}
  \caption{Comparison of VQ-VAEs trained on CIFAR-10 dataset following~\cite{Oord2017NeuralDR}.}
  \label{tab:vqvae cifar10}
  \vspace{-1ex}
  \resizebox{\textwidth}{!}{
  \begin{tabular}{@{}lccccccc@{}} \hline
    Approaches  & Tokens & Codebook Size & $\mathcal{U}$ ({\color{green!60!black}\bfseries$\pmb{\uparrow}$}) & $\mathcal{C}$ ({\color{green!60!black}\bfseries$\pmb{\uparrow}$}) & PSNR({\color{green!60!black}\bfseries$\pmb{\uparrow}$}) & SSIM({\color{green!60!black}\bfseries$\pmb{\uparrow}$}) & Rec. Loss ({\color{green!60!black}\bfseries$\pmb{\downarrow}$}) \\\hline
    Vanilla VQ & 64 & $8192$ & 2.7\% & 186.9 & 27.15 & 0.83 & 0.0147   \\
    EMA VQ & 64 & $8192$ & 99.7\% & 6416.1 & 29.43 & 0.88 & 0.0095  \\
    Online VQ & 64 & $8192$ & 22.1\% & 995.4 & 28.20 & 0.85 & 0.0123  \\
    \textbf{Wasserstein VQ}  & 64 & $8192$ & \textbf{100.0\%} & \textbf{7781.8}  & \textbf{29.88} & \textbf{0.90} & \textbf{0.0085} \\\hline
    Vanilla VQ & 64 & $16384$ & 1.6\% & 220.3 & 27.36 & 0.84 & 0.0141  \\
    EMA VQ & 64 & $16384$ & 80.8\% & 10557.3 & 29.43 & 0.88 & 0.0093\\
    Online VQ & 64 & $16384$ & 13.4\% & 798.5 & 27.54 & 0.82 & 0.0141 \\
    \textbf{Wasserstein VQ}  & 64 & $16384$ & \textbf{100.0\%} & \textbf{15583.7}  & \textbf{30.19} & \textbf{0.90} & \textbf{0.0080}  \\\hline
    Vanilla VQ & 64 & $32768$ & 0.5\% & 154.8 & 27.10 & 0.83 & 0.0150  \\
    EMA VQ & 64 & $32768$ & 54.4\% & 14427.0 & 29.57 & 0.88 & 0.0091 \\
    Online VQ & 64 & $32768$   & 7.2\% & 1556.0 & 28.84 & 0.87 & 0.0106 \\
    \textbf{Wasserstein VQ} & 64 & $32768$ & \textbf{99.0\%} & \textbf{29845.1}  & \textbf{30.63} & \textbf{0.91} & \textbf{0.0071} \\\hline
  \end{tabular}
  \vspace{-8ex}
}
\end{table*}

\begin{table*}[!t]
  \centering
  \vspace{-1ex}
  \caption{Comparison of VQ-VAEs trained on SVHN dataset following~\cite{Oord2017NeuralDR}.}
  \label{tab:vqvae svhn}
  \vspace{-1ex}
  \resizebox{\textwidth}{!}{%
  \begin{tabular}{@{}lccccccc@{}} \hline
    Approaches & Tokens & Codebook Size & $\mathcal{U}$ ({\color{green!60!black}\bfseries$\pmb{\uparrow}$}) & $\mathcal{C}$ ({\color{green!60!black}\bfseries$\pmb{\uparrow}$}) & PSNR({\color{green!60!black}\bfseries$\pmb{\uparrow}$}) & SSIM({\color{green!60!black}\bfseries$\pmb{\uparrow}$}) & Rec. Loss ({\color{green!60!black}\bfseries$\pmb{\downarrow}$}) \\\hline
    Vanilla VQ  & 64 & $8192$ & 8.1\% & 533.1 & 37.81 & 0.97 & 0.0018 \\
    EMA VQ & 64 & $8192$ & 56.8\% & 3363.0 & 40.38 & \textbf{0.98} & 0.0010 \\
    Online VQ  & 64 & $8192$ & 27.8\% & 1325.1 & 39.04 & 0.97 & 0.0016 \\
    \textbf{Wasserstein VQ}  & 64 & $8192$ & \textbf{88.2\%} & \textbf{6154.5}  & \textbf{41.04} & \textbf{0.98} & \textbf{0.0009} \\\hline
    Vanilla VQ & 64 & $16384$ & 3.4\% & 446.0 & 37.87 & 0.97 & 0.0017 \\
    EMA VQ & 64 & $16384$ & 22.2\% & 2593.8 & 40.19 & \textbf{0.98} & 0.0011 \\
    Online VQ & 64 & $16384$ & 13.5\% & 1090.5 & 39.12 & 0.97 & 0.0014  \\
    \textbf{Wasserstein VQ} & 64 & $16384$ & \textbf{87.5\%} & \textbf{11967.2}  & \textbf{41.49} & \textbf{0.98} & \textbf{0.0008}  \\\hline
    Vanilla VQ & 64 & $32768$ & 1.8\% & 467.5 & 37.87 & 0.97 & 0.0017  \\
    EMA VQ & 64 & $32768$ & 35.8\% & 7662.9 & 40.25 & \textbf{0.98} & 0.0010  \\
    Online VQ & 64 & $32768$ & 7.0\% & 1334.8 & 39.26 & 0.97 & 0.0014 \\
    \textbf{Wasserstein VQ} & 64 & $32768$ & \textbf{88.7\%} & \textbf{24376.3}  & \textbf{41.84} & \textbf{0.98} & \textbf{0.0008}  \\\hline
  \end{tabular}
  \vspace{-4ex}
}
\end{table*}

\begin{table*}[!h]
  \centering
  \vspace{-1ex}
  \caption{Supplementary comparison of VQ-VAEs trained on FFHQ dataset following~\cite{Oord2017NeuralDR} w.r.t codebook size $K$.}
  \label{tab:vqvae ffhq codebook size}
  \vspace{-1ex}
  \resizebox{\textwidth}{!}{%
  \begin{tabular}{@{}lccccccc@{}} \hline
    Approaches & Tokens & Codebook Size & $\mathcal{U}$ ({\color{green!60!black}\bfseries$\pmb{\uparrow}$}) & $\mathcal{C}$ ({\color{green!60!black}\bfseries$\pmb{\uparrow}$}) & PSNR({\color{green!60!black}\bfseries$\pmb{\uparrow}$}) & SSIM({\color{green!60!black}\bfseries$\pmb{\uparrow}$}) & Rec. Loss ({\color{green!60!black}\bfseries$\pmb{\downarrow}$})\\\hline
    Vanilla VQ & 256 & 1024 & 51.7\% & 446.2 & 27.64 & 73.0 & 0.0125 \\
    EMA VQ & 256 & 1024 & 74.1\% & 618.9 & 27.66 & 72.7 & 0.0125  \\
    Online VQ & 256 & 1024 & \textbf{100.0\%} & 759.3 & 28.08 & 74.0 & 0.0114  \\
    \textbf{Wasserstein VQ} & 256 & 1024 & \textbf{100.0\%} & \textbf{977.4} & \textbf{28.11} & \textbf{74.4} & \textbf{0.0112} \\\hline
    Vanilla VQ & 256 & 2048 & 27.6\% & 453.0 & 27.78 & 73.8 & 0.0121 \\
    EMA VQ & 256 & 2048 & \textbf{100\%} & 1608.0 & \textbf{28.39} & 74.9 & \textbf{0.0107}  \\
    Online VQ & 256 & 2048 & \textbf{100\%} & 1462.6 & 28.34 & 74.6 & 0.0108  \\
    \textbf{Wasserstein VQ} & 256 & 2048 & \textbf{100\%} & \textbf{1840.5} & 28.32 & \textbf{75.3} & \textbf{0.0107} \\\hline
    Vanilla VQ & 256 & 4096 & 12.5\% & 435.0 & 27.84 & 73.7 & 0.0119 \\
    EMA VQ & 256 & 4096 & 76.7\% & 2443.1 & 28.49 & 75.0 & 0.0104 \\
    Online VQ & 256 & 4096 & 70.7\% & 1600.0 & 28.25 & 74.1 & 0.0110 \\
    \textbf{Wasserstein VQ} & 256 & 4096 & \textbf{100\%} & \textbf{3895.4} & \textbf{28.54} & \textbf{75.1} & \textbf{0.0102} \\\hline
    Vanilla VQ  & 256 & 8192 & 5.6\% & 398.1 & 27.69 & 73.5 & 0.0122  \\
    EMA VQ & 256 & 8192 & 28.9\% & 1839.2 & 28.39 & 74.8 & 0.0106  \\
    Online VQ & 256 & 8192 & 34.9\% & 1474.4 & 28.15 & 73.9 & 0.0113 \\
    \textbf{Wasserstein VQ} & 256 & 8192 & \textbf{100\%} & \textbf{7731.5} & \textbf{28.81} & \textbf{76.2} & \textbf{0.0099} \\\hline
  \end{tabular}
  \vspace{-8ex}
}
\end{table*}

\begin{table*}[!t]
  \centering
  \vspace{-1ex}
  \caption{Analysis On codebook dimension by the comparison of VQ-VAEs trained on CIFAR-10 dataset following~\cite{Oord2017NeuralDR}. (The codebook size $K$ is fixed to 16384)}
  \label{tab:vqvae cifar-10 codebook dimension}
  \vspace{-1ex}
  \resizebox{\textwidth}{!}{%
  \begin{tabular}{@{}lccccccc@{}} \hline
    Approaches  & Tokens & Codebook Dim & $\mathcal{U}$ ({\color{green!60!black}\bfseries$\pmb{\uparrow}$}) & $\mathcal{C}$ ({\color{green!60!black}\bfseries$\pmb{\uparrow}$}) & PSNR({\color{green!60!black}\bfseries$\pmb{\uparrow}$}) & SSIM({\color{green!60!black}\bfseries$\pmb{\uparrow}$}) & Rec. Loss ({\color{green!60!black}\bfseries$\pmb{\downarrow}$})\\\hline
    Vanilla VQ & 256 & 2 & 3.8\% & 532.2 & 27.00 & 0.80 & 0.0162 \\
    EMA VQ & 256 & 2 & 97.6\% & \textbf{14460.3} & 27.25 & 0.80 & \textbf{0.0155} \\
    Online VQ & 256 & 2 & 9.0\% & 611.8 & 26.62 & 0.79 & 0.0178 \\
    \textbf{Wasserstein VQ} & 256 & 2 & \textbf{99.3\%} & 12278.9  & \textbf{27.30} & \textbf{0.81} & \textbf{0.0155} \\\hline
    Vanilla VQ  & 256 & 4 & 1.3\% & 176.7 & 27.15 & 0.83 & 0.0149 \\
    EMA VQ  & 256 & 4 & 99.8\% & 13153.9 & 29.57 & 0.89 & 0.0092 \\
    Online VQ  & 256 & 4 & 11.1\% & 877.7 & 26.69 & 0.79 & 0.0173 \\
    \textbf{Wasserstein VQ}  & 256 & 4 & \textbf{100.0\%} & \textbf{15724.7}  & \textbf{29.93} & \textbf{0.89} & \textbf{0.0087} \\\hline
    Vanilla VQ   & 256 & 8 & 1.6\% & 220.3 & 27.36 & 0.84 & 0.0141 \\
    EMA VQ  & 256 & 8 & 80.8\% & 10557.3 & 29.43 & 0.88 & \textbf{0.0009} \\
    Online VQ & 256 & 8  & 13.4\% & 798.5 & 27.54 & 0.82 & 0.0141 \\
    \textbf{Wasserstein VQ} & 256 & 8  & \textbf{100.0\%} & \textbf{15583.7}  & \textbf{30.19} & \textbf{0.90} & 0.0080 \\\hline
    Vanilla VQ & 256 & 16  & 1.1\% & 150.8 & 27.05 & 0.83 & 0.0152 \\
    EMA VQ & 256 & 16 & 32.5\% & 4169.2 & 29.31 & 0.88 & 0.0099 \\
    Online VQ & 256 & 16 & 18.2\% & 2051.0 & 28.29 & 0.85 & 0.0122 \\
    \textbf{Wasserstein VQ} & 256 & 16 & \textbf{99.2\%} & \textbf{14832.2}&\textbf{30.27}  & \textbf{0.91} & \textbf{0.0078} \\\hline
    Vanilla VQ & 256 & 32 & 0.7\% & 94.37 & 26.67 & 0.81 & 0.0165 \\
    EMA VQ & 256 & 32 & 7.0\% & 942.7 & 28.24 & 0.85 & 0.0122 \\
    Online VQ & 256 & 32 & 18.8\% & 2278.0 & 28.92 & 0.87 & 0.0104 \\
    \textbf{Wasserstein VQ} & 256 & 32 & \textbf{96.4\%} & \textbf{14056.9}  & \textbf{30.39} & \textbf{0.91} & \textbf{0.0076} \\\hline
    \vspace{-6ex}
  \end{tabular}
}
\end{table*}

\begin{table*}[!t]
  \centering
  \vspace{-1ex}
  \caption{Sensitivity analysis of $\gamma$ trained on FFHQ dataset following~\cite{Oord2017NeuralDR}.}
  \label{tab:sensitivity analysis}
  \vspace{-1ex}
  \resizebox{\textwidth}{!}{%
  \begin{tabular}{@{}lcccccccc@{}} \hline
    Approaches &$\gamma$& Tokens & Codebook Size & $\mathcal{U}$ ({\color{green!60!black}\bfseries$\pmb{\uparrow}$}) & $\mathcal{C}$ ({\color{green!60!black}\bfseries$\pmb{\uparrow}$}) & PSNR({\color{green!60!black}\bfseries$\pmb{\uparrow}$}) & SSIM({\color{green!60!black}\bfseries$\pmb{\uparrow}$}) & Rec. Loss ({\color{green!60!black}\bfseries$\pmb{\downarrow}$}) \\\hline
    \textbf{Wasserstein VQ} & 0.0 & 256 & $16384$ & 3.8\% & 527.2 & 27.83 & 73.8 & 0.0119 \\
    \textbf{Wasserstein VQ} & 0.00001 & 256 & $16384$ & 24.8\% & 903.0 & 27.97 & 74.2 &  0.0114\\
    \textbf{Wasserstein VQ} & 0.0001 & 256 & $16384$ & 52.3\% & 6764.3 & 28.70 & 75.5 &  0.0100\\ 
    \textbf{Wasserstein VQ} & 0.001 & 256 & $16384$ & 90.6\% & 9988.0 & 28.93 & 76.7 &  0.0094\\ 
    \textbf{Wasserstein VQ} & 0.01 & 256 & $16384$ & \textbf{100\%} & 14952.5 & 29.06 & 76.7 &  \textbf{0.0092}\\ 
    \textbf{Wasserstein VQ} & 0.1 & 256 & $16384$ & \textbf{100\%} & \textbf{15943.5} & \textbf{29.07} & 76.7 & \textbf{0.0092} \\ 
    \textbf{Wasserstein VQ} & 0.5 & 256 & $16384$ &  \textbf{100\%} & 15713.3 & 29.03 & 76.6 & 0.0093 \\
    \textbf{Wasserstein VQ} & 1.0 & 256 & $16384$ &  \textbf{100\%}& 14712.4 & 29.02 & \textbf{76.9} & 0.0093 \\\hline
  \end{tabular}
  \vspace{-10ex}
}
\end{table*}

\subsection{Analyses on Codebook Size and Dimensionality}
\label{appendix:ablation study}

\paragraph{Analyses of Codebook Size} We investigate the impact of the codebook size $K$  on the performance of VQ by varying across a wide range: $K \in [1024, 2048, 4096, 8192, 16384, 50000, 100000]$. As shown in Table~\ref{tab:vqvae ffhq},~\ref{tab:vqvae ffhq codebook size}, the vanilla VQ model suffers from severe codebook collapse even with a relatively small $K$, such as $K=1024$. In contrast, improved algorithms like EMA VQ and Online VQ can handle smaller codebook sizes effectively, but they still experience codebook collapse when $K$ is very large, e.g., $K\geq 50000$.
Notably, the \emph{Wasserstein VQ} model consistently maintains 100\% codebook utilization, irrespective of the codebook size. This underscores the effectiveness of distributional matching via the quadratic Wasserstein distance in mitigating the issue of codebook collapse.

\paragraph{Analyses of Codebook Dimensionality} We further investigate the impact of codebook dimensionality $d$ on VQ performance. Conducting experiments on CIFAR-10 with dimensionality $d$ ranging from 2 to 32, our proposed Wasserstein VQ consistently outperforms all baselines regardless of dimensionality, as shown in Table~\ref{tab:vqvae cifar-10 codebook dimension}. Notably, we observe the curse of dimensionality phenomenon—performance degrades as dimensionality increases. Vanilla VQ exhibits the most severe degradation, followed by EMA VQ and Online VQ, while our Wasserstein VQ shows only minimal codebook utilization reduction.

\subsection{Sensitivity Analysis on $\gamma$.}
\label{appendix:sensitivity analysis}
We conduct a sensitivity analysis with respect to $\gamma$ on the FFHQ dataset by varying $\gamma \in \{0, 10^{-5}, 10^{-4}, 10^{-3}, 10^{-2}, 10^{-1}, 1\}$. As reported in Table~\ref{tab:sensitivity analysis}, when $\gamma = 0$, Wasserstein VQ yields the worst performance, as it degenerates into the vanilla VQ formulation without distributional matching. As $\gamma$ increases from $10^{-5}$ to $10^{-3}$, the performance of Wasserstein VQ consistently improves. When $\gamma$ reaches $10^{-2}$, Wasserstein VQ achieves full ($100\%$) codebook utilization along with competitive quantitative results. Moreover, within the range $\gamma \in [10^{-2}, 1]$, the performance of Wasserstein VQ remains stable, indicating that the method is not sensitive to the precise choice of $\gamma$ once it exceeds a moderate threshold.

\subsection{Computational Overhead Comparison among Various VQ Approaches}
\label{appendix:computational overhead}
To evaluate the runtime efficiency of different VQ approaches, we measure the forward and backward pass times of the VQ module over 100 iterations across three different codebook sizes, with the feature dimension set to $d=8$ and the number of data samples $N=8192$. As shown in Table~\ref{tab:computational overhead}, even at large codebook sizes (specifically, $K \geq 50,000$), the runtime of Wasserstein VQ is only slightly longer than that of Vanilla VQ. This demonstrates that Wasserstein VQ maintains high computational efficiency, and incorporating the quadratic Wasserstein distance does not introduce significant time overhead. Notably, while Online VQ exhibits substantial runtime increases at large codebook sizes, Wasserstein VQ remains considerably more efficient, further underscoring its scalability.

\subsection{Discussion with VQ-WAE~\cite{Vuong2023VectorQW}}
\label{appendix:VQ-WAE}
VQ-WAE~\cite{Vuong2023VectorQW} introduces an alternative approach to distributional matching by employing Optimal Transport to optimize codebook vectors. Compared with our proposed distributional matching method, there are three key differences.

\textbf{First, regarding theoretical contributions}: VQ-WAE~\cite{Vuong2023VectorQW} claims that achieving optimal transport (OT) between code vectors and feature vectors yields the best reconstruction performance. Their notion of optimality encompasses both the VQ process and the encoder-decoder reconstruction pipeline. While we contend that incorporating complex encoder-decoder functions renders rigorous theoretical analysis intractable, VQ-WAE nevertheless asserts this conclusion. In contrast, our work deliberately excludes encoder-decoder components, focusing solely on the VQ process, which admits rigorous mathematical modeling. Through our proposed criterion triple, we theoretically prove that distributional matching guarantees optimal performance.

\textbf{Second, regarding distribution modeling}: VQ-WAE~\cite{Vuong2023VectorQW} assumes both code vectors and feature vectors follow uniform discrete distributions, whereas our method models them as continuous distributions. Specifically, VQ-WAE~\cite{Vuong2023VectorQW} represents the distributions of feature vectors $\{\bz_i\}_{i=1}^N$ and code vectors $\{\be_k\}_{k=1}^K$ as empirical measures:
\begin{align}
\label{eq:vqwae distribution}
\mathcal{P}_A = \frac{1}{N}\sum_{i=1}^{N} \delta_{\bz_i}, \quad \mathcal{P}_B = \frac{1}{N}\sum_{k=1}^{K} \delta_{\be_k}
\end{align} 
where $\delta_{\bz_i}$ and $\delta_{\be_k}$ denote Dirac delta functions centered at $\bz_i$ and $\be_k$, respectively. To align  $\mathcal{P}_A$ and $\mathcal{P}_B$, VQ-WAE formulates the OT problem as:
\begin{equation}
\label{eq:ot_problem}
\begin{aligned}
&\min_{\mathbf{P} \in \Pi(\mathcal{P}_A, \mathcal{P}_B)} \sum_{i=1}^N \sum_{k=1}^K P_{ik} \|\bz_i - \be_k\|^2, \\
&\text{s.t.} \quad \mathbf{P} \mathbf{1}_K = \frac{1}{N}\mathbf{1}_N, \quad \mathbf{P}^\top \mathbf{1}_N = \frac{1}{K}\mathbf{1}_K, \quad P_{ik} \geq 0 \quad \forall i,k,
\end{aligned}
\end{equation}
where $\mathbf{P}$ is the transport plan, and the feasible set is:
\begin{equation}
\label{eq:coupling_set}
\Pi(\mathcal{P}_A, \mathcal{P}_B) = 
\left\{ 
\mathbf{P} \in \mathbb{R}_+^{N \times K} 
\; \middle| \;
\mathbf{P} \mathbf{1}_K = \frac{1}{N} \mathbf{1}_N, \ 
\mathbf{P}^\top \mathbf{1}_N = \frac{1}{K} \mathbf{1}_K 
\right\}
\end{equation}
In contrast, we simplify the distributional assumption by modeling $\mathcal{P}_A$ and  $\mathcal{P}_B$ as Gaussian distributions.

\begin{table*}[!t]
\centering
\caption{Reconstruction performance ($\downarrow:$ the lower the better and $\uparrow:$ the higher the better). $^{\dagger}$:Results cited from VQ-WAE~\cite{Vuong2023VectorQW}. Codebook size $K$ is fixed to $512$.}
\vspace{-2ex}
\resizebox{0.98\textwidth}{!}{
\begin{tabular}{@{}lrcrrrrrr@{}}
\toprule 
Dataset & Model & Tokens & SSIM ({\color{green!60!black}\bfseries$\pmb{\uparrow}$}) & PSNR ({\color{green!60!black}\bfseries$\pmb{\uparrow}$}) & LPIPS ({\color{green!60!black}\bfseries$\pmb{\downarrow}$}) & Rec. Loss ({\color{green!60!black}\bfseries$\pmb{\downarrow}$})  & Perplexity ({\color{green!60!black}\bfseries$\pmb{\uparrow}$})
\tabularnewline 
\midrule 
CIFAR10 & VQ-VAE$^{\dagger}$  & 64 & 70 & 23.14 & 0.35 & & 69.8  \tabularnewline
 & SQ-VAE$^{\dagger}$  & 64 & 80 & 26.11 & 0.23 & & 434.8 \tabularnewline
 & VQ-WAE$^{\dagger}$  & 64 & 80 & 25.93 & 0.23 & & 497.3 \tabularnewline
 & VQ-WAE (Our run) & 64 & 13 & 14.60 & 0.41 & 0.247 & 1.0 \tabularnewline
 & Vanilla VQ & 64 & 83 & 27.19 & 0.03 & 0.015 &192.5\tabularnewline
 & EMA VQ  & 64 & 84 & 27.97 & 0.04 & 0.013 & 436.1 \tabularnewline
 & Online VQ & 64 & 84 & 27.87 & 0.04  & 0.013 &451.4\tabularnewline
 & \textbf{Wasserstein VQ} & 64 & 86 & 28.26 & 0.03  & 0.012 & 481.7 \tabularnewline
\midrule         
SVHN & VQ-VAE$^{\dagger}$ & 64 & 88 & 26.94 & 0.17  & & 114.6 \tabularnewline
 & SQ-VAE$^{\dagger}$  & 64 & 96 & 35.37 & 0.06 & & 389.8 \tabularnewline
 & VQ-WAE$^{\dagger}$  & 64 & 96 & 34.62 & 0.07 &  & 485.1 \tabularnewline
 & VQ-WAE (Our run) & 64 & 25 & 15.87 & 0.26 & 0.2026 &1.0\tabularnewline
 & Vanilla VQ & 64 & 97 & 38.18 & 0.01 & 0.0016 & 407.1\tabularnewline
 & EMA VQ & 64 & 97 & 38.35 & 0.01 & 0.0017 & 408.9 \tabularnewline
 & Online VQ & 64 & 97 & 38.54 & 0.01 & 0.0017 & 421.5\tabularnewline
 & \textbf{Wasserstein VQ} & 64 & 97& 38.25 & 0.01 & 0.0016 & 423.5 \tabularnewline
\midrule 
\end{tabular}}
\label{tab:vq-wae}
\vspace{-4ex}
\end{table*}

\textbf{Third, regarding computational efficiency}, The OT problem in VQ-WAE is prohibitively complex, whereas our quadratic Wasserstein distance incurs minimal overhead. To mitigate complexity, VQ-WAE employs a Kantorovich potential network. However, upon reproducing their code (no official implementation was released; we derived it from their ICLR 2023 supplementary material, \footnote{See \url{https://openreview.net/forum?id=Z8qk2iM5uLI}. We includes the reproduced code and training logs of VQ-WAE in our supplementary materials.}), 
we observed severe non-convergence—the method degenerated to using a single code vector, failing to achieve distributional matching. Notably, VQ-WAE underperformed all other VQ baselines (Table~\ref{tab:vq-wae}).

In comparison, our quadratic Wasserstein distance (Equation~\ref{eq:wasserstein distance}) requires only low-dimensional matrix operations (e.g., $d=8$), achieving superior performance and effective matching (Figure~\ref{fig:feature codebook visualization}).